\definecolor{DarkBlue}{rgb}{0.0,0.0,0.55}
\definecolor{lightgray}{gray}{0.95}
\definecolor{lightgreen}{RGB}{220,255,220}
\definecolor{lightred}{RGB}{255,220,220}
\definecolor{lightblue}{RGB}{220,240,255}
\definecolor{lightorange}{RGB}{255,235,200}
\newtcolorbox{problembox}[1][]{%
  enhanced,
  colback=gray!5!white,
  colframe=cyan!40!black,
  fonttitle=\bfseries,
  fontupper=\footnotesize,
  boxrule=0.5pt,
  width=0.9\textwidth,
  title=Problem,
  after upper={%
    \ifx\relax#1\relax
    \else
      \par\medskip\hfill\textit{#1}%
    \fi
  },
}
\newcommand{\crossmark}{%
  \tikz[scale=0.4, baseline=0.6ex] 
    \draw[red, line width=1.8pt] (0,0) -- (1,1) (0,1) -- (1,0);
}
\newcommand{\checkmarkicon}{%
  \tikz[scale=0.4, baseline=0.6ex]
    \draw[green!60!black, line width=1.8pt]
      (0,0.5) -- (0.35,0) -- (1,1);
}
\newcommand{\fail}{\hfill \textcolor{red}{\crossmark GPT-5 thinking failed~}}
\newcommand{\pass}{\hfill	\textcolor{green!50!black}{\checkmarkicon GPT-5 thinking passed}}
\newcommand{\mhfail}{\hfill \textcolor{red}{\crossmark o4-mini-high  failed}}
\definecolor{DarkBlue}{rgb}{0.0,0.0,0.55}
\definecolor{low}{RGB}{255,200,200}   
\definecolor{mid}{RGB}{255,255,200}   
\definecolor{high}{RGB}{200,255,200}  
\newcommand{\scorecell}[1]{%
  \ifnum#1<50\relax
    \cellcolor{mid!\the\numexpr 2*#1\relax!low} #1%
  \else
    \cellcolor{high!\the\numexpr 2*(#1-50)\relax!mid} #1%
  \fi
}
\newcounter{problem}[figure]  
\renewcommand{\theproblem}{\arabic{problem}}
\newcommand{\red}[1]{\textcolor{red}{#1}}
\newcommand{\green}[1]{\textcolor{ForestGreen}{#1}}
\newtheorem{definition}{Definition}
\newenvironment{GPT-5-thinking-response}[1][\textbf{Response of GPT-5-thinking}]
  {\begin{proof}[#1]}
  {\end{proof}}
\newenvironment{solution}[1][\textbf{Standard Solution}]
  {\begin{proof}[#1]}
  {\end{proof}}
\newtcolorbox{rubricbox}{
  colback=gray!5!white,
  colframe=gray!60!black,
  fonttitle=\bfseries, title=Rubric,
  fontupper=\footnotesize,
  boxrule=0.5pt,
  width=0.9\textwidth
}
\newtcolorbox{scorebox}{
  colback=SeaGreen!5!white,
  colframe=SeaGreen!60!black,
  fonttitle=\bfseries, title=Score,
  fontupper=\footnotesize,
  boxrule=0.5pt,
  width=0.9\textwidth
}
\newtcolorbox{evalbox}{
  colback=OrangeRed!5!white,
  colframe=OrangeRed!70!black,
  fonttitle=\bfseries, title=Evaluation,
  fontupper=\footnotesize,
  boxrule=0.5pt,
  width=0.9\textwidth
}
\newcommand{\ProblemSection}[1]{%
  \paragraph{#1}\mbox{}\\[-1em]
}
\newcommand{\R}{\mathbb{R}}
\newcommand{\D}{\mathbb{D}}
\newcommand{\T}{\mathbb{T}}
\newcommand{\e}{\mathrm{e}}
\newcommand{\dd}{\mathrm{d}}
\newcommand{\Law}{\mathbf{P}}
\newcommand{\Vrad}{V^{\mathrm{rad}}}
\newcommand{\hwt}{h}
\newcommand{\kap}{\kappa}
\newtheorem{claim}{Claim}
\title{GAUSS: Benchmarking Structured Mathematical Skills for Large Language Models}
\author{%
  Yue Zhang$^{1}$\thanks{Equal Contribution.}\\
  \And
  Jiaxin Zhang$^{1,2}$\footnotemark[1] \\
  \And
  Qiuyu Ren$^3$ \\
  \And
  Tahsin Saffat$^3$ \\
\And
  Xiaoxuan Liu$^3$ \\
  \And
 Zitong Yang$^4$   \\
  \And
  Banghua Zhu$^{5,6}$ \\
  \And
  Yi Ma$^{3,7}$ \\
  \AND
  $^1$\textnormal{Hyperbolic~Labs} \\
  $^2$California Institute of Technology \\
  $^3$University of California, Berkeley \\
  $^4$Stanford University \\
  $^5$Nvidia \\
  $^6$University of Washington \\
  $^7$University of Hong Kong
}
\def\@trackname{}  
\begin{document}

\maketitle

\begin{abstract}
We introduce \textbf{GAUSS} (\textbf{G}eneral \textbf{A}ssessment of \textbf{U}nderlying \textbf{S}tructured \textbf{S}kills in Mathematics),  a benchmark that evaluates LLMs' mathematical abilities across twelve core skill dimensions, grouped into three domains: knowledge and understanding, problem solving and communication, and meta-skills and creativity. By categorizing problems according to cognitive skills and designing tasks that isolate specific abilities, GAUSS constructs comprehensive, fine-grained, and interpretable profiles of models' mathematical abilities. These profiles faithfully represent their underlying mathematical intelligence. To exemplify how to use the  \textsc{GAUSS} benchmark, we have derived the skill profile of \textsc{GPT-5-thinking}, revealing its strengths and weaknesses as well as its differences relative to \textsc{o4-mini-high}, thereby underscoring the value of multidimensional, skill-based evaluation.

\end{abstract}

\newpage
\tableofcontents

\newpage
\section{Introduction}

Large language models (LLMs) have achieved impressive performance across a range of mathematical tasks, from elementary arithmetic to high school olympiad-level problem solving~\citep{trinh2024solving}, to undergraduate-level challenges~\citep{putnam2024}, and even to open research problems~\citep{frontiermath2024}. However, despite this rapid progress, current benchmarks face several critical limitations—including low skill resolution, saturation at lower difficulty levels, and potential data contamination:

\begin{itemize}
\item \textbf{Low skill resolution.} Most benchmarks report only top-level accuracy, without disentangling the underlying skills required. For instance, solving an algebra problem may rely on applying known theorems, symbolic computation, proof writing, and intuitive judgment---but current benchmarks do not identify which of these a model succeeded or failed on.

\item \textbf{Saturation of existing benchmarks.} Datasets like GSM8K~\citep{cobbe2021training} and MATH~\citep{hendrycks2021measuring} focus primarily on elementary and high school mathematics. While these have served as important early-stage benchmarks, state-of-the-art models now routinely achieve near-perfect accuracy on them, limiting their utility in assessing more advanced mathematical thinking or research-level reasoning.

\item \textbf{Benchmark contamination.} Many benchmark problems are scraped from online sources and may have leaked into training corpora, making performance on them less reliable as a measure of genuine reasoning ability~\citep{deng2023investigating,xu2024benchmark}. 
\end{itemize}
These limitations hinder meaningful diagnosis of a model's true capabilities, making it very difficult, if not impossible, to attribute its successes or failures to certain capabilities or lack thereof. Hence the evaluations cannot be scientifically rigorous or conclusive regarding whether the model truly understands certain abstract concepts or has grasped certain mathematical skills. This impedes systematic development of new methods to improve or revise current practices.

Several recent efforts have begun to push beyond the limitations above. In particular, \textit{FrontierMath}~\citep{frontiermath2024} introduces a benchmark with all-level problems and proof-based reasoning tasks, helping to address the saturation and contamination of existing benchmarks. However, it primarily evaluates task-level correctness across domains and does not provide structured annotations of the underlying cognitive skills or reasoning behaviors. As such, it does not directly resolve the challenges of low skill resolution.

At a more fundamental level, general intelligence can be viewed as the capacity to extract low-dimensional structures from complex data and to exploit such structures for prediction, reasoning, decision making, and creative problem solving \citep{ldrdd2025}. There are reasons to believe that mathematics requires a more advanced form of intelligence though, formally known as (part of) {\em the}  {Artificial Intelligence}, that is unique to educated and enlightened humans. It goes way beyond identifying and exploiting low-dimensional structures (or correlations) in data distributions. It is not only about solving certain problems correctly based on exploiting correlation with past practices or experiences, but also demonstrating grasp of a broad spectrum of capabilities—from understanding abstract concepts and applying the associated knowledge, to constructing and deducing proofs, leveraging intuition, transferring across domains, and even posing novel problems and conjectures. Traditional benchmarks, by focusing narrowly on correctness, provide only a coarse snapshot of performance and thus fail to capture this multifaceted nature.

To address these challenges, we introduce GAUSS (\textbf{G}eneral \textbf{A}ssessment of \textbf{U}nderlying \textbf{S}tructured \textbf{S}kills in Mathematics), a benchmark that evaluates mathematical proficiency through a fine-grained lens of mathematical skills and behaviors—rather than coarse topical labels or problem types.

First, to address the problem of \emph{low skill resolution}, GAUSS organizes tasks by the specific mathematical abilities they require, not merely by topic. Each problem is annotated with cognitive skill tags—such as theorem recall, symbolic manipulation, strategic problem-solving, or intuitive judgment—enabling detailed diagnostics of a model’s strengths and weaknesses. This structure reveals whether performance arises from superficial pattern matching or genuine understanding.

Second, to overcome the \emph{saturation of existing benchmarks}, GAUSS targets high-level tasks beyond elementary and high school curricula. Most of its problems are drawn from advanced undergraduate and graduate topics, as well as elite mathematical competitions (e.g., the IMO, national Olympiads, and collegiate contests). These challenging problems push the evaluation frontier toward higher-order reasoning and research-level mathematical thinking.

Third, to mitigate \emph{benchmark contamination}, GAUSS draws on a broad range of sources unlikely to have appeared in model training data. These include lesser-known competition problems, curated textbook materials, online forum problems, and selected research-level tasks, among others. By integrating such diverse sources while minimizing overlap with widely available public corpora, GAUSS enables a more faithful assessment of genuine reasoning ability rather than surface-level memorization.

Overall, GAUSS shifts the focus from topic-based correctness to skill-based competence, providing a more interpretable, robust, and forward-looking benchmark for evaluating mathematical abilities in large language models.

We organize mathematical proficiency into three overarching domains encompassing twelve core cognitive skills:

\begin{enumerate}
  \item \textbf{Mathematical Knowledge and Understanding.}  
  This domain includes knowledge recall, conceptual understanding, and application of known results.
  \begin{itemize}
    \item \textbf{Memorizing mathematical knowledge:} Recall of definitions, theorems, lemmas, and subject taxonomy across algebra, analysis, combinatorics, etc.
    \item \textbf{Understanding of knowledge and theories:} Know what classical theorems state, why they are true, and how to apply them effectively.
    \item \textbf{Computational and analytical skills:} Perform accurate symbolic computations and algebraic transformations.
  \end{itemize}

  \item \textbf{Problem Solving and Communication.}  
  This domain focuses on tackling problems through reasoning and effectively conveying mathematical ideas.
  \begin{itemize}
    \item \textbf{Problem-solving framework:} Understand problems, explore examples, and apply key strategies like contradiction, decomposition, and analogy.
    \item \textbf{Logical thinking and reasoning:} Construct valid proofs and critically evaluate existing arguments.
    \item \textbf{Writing and presentation:} Express solutions clearly and tailor explanations for different audiences.
  \end{itemize}

\item \textbf{Learning, Meta Skills, and Creativity.}  
This domain reflects the learner’s ability to acquire new knowledge independently, adapt across contexts, and engage in abstract, cross-disciplinary, and creative reasoning.

  \begin{itemize}
    \item \textbf{Learning new knowledge:} Read and absorb advanced content, and independently acquire unfamiliar techniques.
    \item \textbf{Intuition:} Use geometric and numerical insights, plausibility checks, and aesthetic judgment to guide and validate solutions.
    \item \textbf{Meta skills:} Think abstractly and transfer tools across domains (e.g., probabilistic method in combinatorics).
    \item \textbf{Mathematical modeling:} Translate real-world or theoretical scenarios into formal mathematical models.
    \item \textbf{Generalization:} Abstract patterns and unify results across different contexts.
    \item \textbf{Creativity:} Pose new problems, invent novel methods, and apply lateral thinking to discover unconventional solutions.
  \end{itemize}
\end{enumerate}

GAUSS is designed to be \textbf{diagnostic}, \textbf{extensible}, and \textbf{robust to contamination}. By evaluating models through a skill-oriented lens, GAUSS complements existing benchmarks and provides a new foundation for tracking genuine mathematical progress in LLMs---especially as they move beyond rote learning and toward autonomous discovery and research.

To demonstrate the value of the GAUSS benchmark, we apply it to GPT-5-thinking and yield a fine-grained profile of the model's mathematical skills. The model shows \emph{notable strengths} in \textbf{recalling the taxonomy of mathematical subjects}, in \textbf{evaluating arguments}, in \textbf{posing novel problems}, in \textbf{writing proofs}, in \textbf{reading advanced materials} and in \textbf{demonstrating strong intuition} through abilities such as plausibility checks. These abilities reflect reliable performance in knowledge organization, error detection, intuitive reasoning, problem formulation, and comprehension of technical content.

In contrast, the model's performance is \emph{limited} in core areas such as \textbf{theorem understanding}, \textbf{computational and analytical skills} and \textbf{problem-solving strategies}, with further weaknesses in \textbf{generalization}. These results suggest persistent difficulties in applying known results, carrying out symbolic computations, applying systematic strategies and developing unifying frameworks.

Relative to o4-mini-high, GPT-5-thinking achieves measurable improvements in \textbf{subject taxonomy}, \textbf{argument evaluation}, \textbf{plausibility check}, and \textbf{reading advanced materials}, while most other categories show only marginal differences.

At present, GAUSS remains limited in scope: the problem pool is still relatively small, subject coverage is incomplete, and open-ended tasks such as proofs and creative reasoning are difficult to grade. To address these limitations, we plan to broaden the benchmark through community-sourced problems across mathematical domains, design automatic grading pipelines, and develop an open, extensible platform for integrating new problems and evaluation modules. In this way, we aim to construct more precise skill profiles, revealing where models truly excel and where they continue to struggle.

\newpage
\section{Math Skills Breakdown}

\subsection{Memory of Math Knowledge}

This category focuses on the ability to recall essential mathematical knowledge, including definitions, theorems, and lemmas. It forms the foundation of all higher-level reasoning and problem-solving. Two key components define this category: a subject-based taxonomy of mathematical domains and key statements within each domain.

\subsubsection{Taxonomy of subjects}

  Mathematical knowledge is traditionally organized into subjects such as:
  \begin{enumerate}
  \item \textit{Algebra};
  \item \textit{Analysis};
  \item \textit{Combinatorics};
  \item \textit{Geometry and Topology};
  \item \textit{Probability and Statistics}.
  \end{enumerate}
  This subject taxonomy provides a conceptual “map” for indexing, retrieving, and organizing mathematical knowledge.
  
\textbf{Design Principles:} Choose concepts that are familiar to domain experts but are not typically found in standard textbooks.
  
\textbf{Sample Problems:} \setcounter{problem}{0}  

\begin{figure}[H]
    \centering
    
    \begin{minipage}[t]{0.8\textwidth} \stepcounter{problem}
        \begin{tcolorbox}[
        title=\textbf{\footnotesize Problem \theproblem: Floer homology},
            colframe=cyan!40!black,
            boxrule=0.5pt,
            colback=white,
            top=0pt
        ]
        \vspace{0.1cm}
        \footnotesize  Floer homology is a powerful tool for studying symplectic geometry and low-dimensional topology. 
        \begin{enumerate}
            \item Please name three distinct Floer homology theories that play important roles in low-dimensional topology.
            \item For each theory, provide an example of a significant topological application. Your applications should not mention explicitly the Floer homology theory in the statement.
        \end{enumerate}
         
\mhfail    \pass
\par
        \end{tcolorbox}
    \end{minipage}
        \vspace{0.2cm}

    \begin{minipage}[t]{0.8\textwidth} \stepcounter{problem}
        \begin{tcolorbox}[  title=\textbf{\footnotesize Problem \theproblem: Liouville Quantum Gravity},
            colframe=cyan!40!black,
            boxrule=0.5pt,
            colback=white,
            top=0pt
        ]

\vspace{0.1cm}
        \footnotesize   Liouville quantum gravity (LQG) is a central topic in modern probability theory, please list at least 3 major subfields, and for each:
  \begin{enumerate}
    \item Name a central concept or structure,
    \item State one foundational result from that area,
  \end{enumerate}
  \pass
  \par 

        \end{tcolorbox}
    \end{minipage} 
\end{figure}

 \subsubsection{Key mathematical statements} 
  Within each domain, learners should be able to recall and accurately state:
  \begin{enumerate}
    \item \textit{Definitions} — E.g., Define a compact set in a metric space; define a $\sigma$-algebra.
    \item \textit{Theorems} — E.g., State the Intermediate Value Theorem; state Fermat's Little Theorem.
    \item \textit{Frequently Used Lemmas and Tricks} — E.g., What is the pigeonhole principle? State Chebyshev’s inequality.
  \end{enumerate}
  
  These key mathematical statements form the building blocks of further reasoning and problem-solving.
  
  \textbf{Design Principles}: 
  Choose mathematical statements that are familiar to domain experts but are not typically found in standard textbooks.

\textbf{Sample Problems:} \setcounter{problem}{0}


\begin{figure}[H]
    \centering

    \begin{minipage}[t]{0.8\textwidth} \stepcounter{problem}
        \begin{tcolorbox}[
        title=\textbf{\footnotesize Problem \theproblem: Commutation relations for multiple SLE},
            colframe=cyan!40!black,
            boxrule=0.5pt,
            colback=white,
            top=0pt
        ]
    
\vspace{0.1cm}
        \footnotesize   
    Please explain the commutation relations for multiple chordal SLE($\kappa$). 
    
    \fail
    \par
        \end{tcolorbox}
    \end{minipage}

\vspace{0.2cm}
    
    \begin{minipage}[t]{0.8\textwidth} \stepcounter{problem}
    \begin{tcolorbox}[
        title=\textbf{\footnotesize Problem \theproblem: SLE concepts and transformation laws},
            colframe=cyan!40!black,
            boxrule=0.5pt,
            colback=white,
            top=0pt
        ]
        
\vspace{0.1cm}
        \footnotesize  Schramm–Loewner Evolution (SLE) is a vibrant field in modern probability theory. Please address the following two parts:
\begin{enumerate}
  \item 
    Provide the precise definition of SLE, emphasizing its construction via the Loewner equation driven by a random function \(W_t = \sqrt{\kappa}B_t\) where $B_t$ is the standard Brownian motion.
      \item  Please state the transformation law of the driving functions of Loewner chain under conformal Maps. 
    
      \fail
\end{enumerate} \par

        \end{tcolorbox}
    \end{minipage}
\end{figure}

\subsection{Understanding Knowledge and Theory Mastery}

This category assesses the ability to comprehend and correctly apply established mathematical results and theorems.

\subsubsection{Understanding theorems}  
  Demonstrate accurate recall of classical theorems, including their assumptions, conclusions, and logical structure. For example, recognizing that the Extreme Value Theorem requires continuity over a compact interval, and understanding its dependence on the completeness of real numbers.

\textbf{Design Principles:}
Include content that is well-known among experts but rarely emphasized in standard textbooks.

  \textbf{Sample Problems:} \setcounter{problem}{0}
\begin{figure}[H]
    \centering

    \begin{minipage}[t]{0.8\textwidth} \stepcounter{problem}
        \begin{tcolorbox}[
        title=\textbf{\footnotesize Problem \theproblem: Derive null vector equations for multiple SLE systems},
            colframe=cyan!40!black,
            boxrule=0.5pt,
            colback=white,
            top=0pt
        ]
        
\vspace{0.1cm}
        \footnotesize 
  Derive the null vector equations from commutation relations of multiple chordal SLE($\kappa$).

  \fail
 \par
        \end{tcolorbox}
    \end{minipage}
\end{figure}

\subsubsection{Applying classical results} 
  Fluently use known theorems to solve standard, composite, or multi-step problems.

\textbf{Design Principles:}
    Choose problems that require straightforward applications of a very complicated theorem.

\textbf{Sample Problems:} \setcounter{problem}{0}

\begin{figure}[H]
    \centering

    \begin{minipage}[t]{0.8\textwidth} \stepcounter{problem}
        \begin{tcolorbox}[
        title=\textbf{\footnotesize Problem \theproblem: Apply s-cobordism to show diffeomorphism},
            colframe=cyan!40!black,
            boxrule=0.5pt,
            colback=white,
            top=0pt
        ]
    
\vspace{0.1cm}
        \footnotesize   
Apply the s-cobordism theorem to show that if $M$ is a smooth $7$-manifold homotopy equivalent to $S^1\times S^6$, which bounds an $8$-manifold $W$ homotopy equivalent to $S^1$, then $M$ is diffeomorphic to $S^1\times S^6$. 

\fail
\par
        \end{tcolorbox}
    \end{minipage} 
\end{figure}

\subsection{Computational and Analytical Skills}

This category evaluates a model's ability to perform accurate computations and symbolic manipulation.

\subsubsection{Computational fluency} Accurately compute integrals, derivatives, limits, matrix operations, and other standard calculations.

\textbf{Design Principles:}
Choose problems that require difficult, complicated computations.

\textbf{Sample Problems:}
  \setcounter{problem}{0}

\begin{figure}[H]
    \centering

    \begin{minipage}[t]{0.8\textwidth} \stepcounter{problem}
        \begin{tcolorbox}[
        title=\textbf{\footnotesize Problem \theproblem: Evaluate the integral},
            colframe=cyan!40!black,
            boxrule=0.5pt,
            colback=white,
            top=0pt
        ]

\vspace{0.1cm}
        \footnotesize  
\[
I = \int_0^1 \frac{\log\left(1 + x^{2 + \sqrt{3}}\right)}{1 + x} \, dx.
\]
Evaluate this definite integral and express your answer in terms of elementary functions (such as logarithms, powers, trigonometric functions, etc.). 

\fail
\par
        \end{tcolorbox}
    \end{minipage}
\end{figure}   
     
\subsubsection{Symbolic manipulation} 
Simplify or transform algebraic expressions through substitution, factoring, expansion, or equation rearrangement. (The software Mathematica might be easy to solve)

  \textbf{Design Principles:}
Choose problems that require complicated symbolic transformation to reach the solution.

  \textbf{Sample Problems:} \setcounter{problem}{0}

\begin{figure}[H]
    \centering

    \begin{minipage}[t]{0.8\textwidth} \stepcounter{problem}
        \begin{tcolorbox}[
        title=\textbf{\footnotesize Problem \theproblem: Sum of squares},
            colframe=cyan!40!black,
            boxrule=0.5pt,
            colback=white,
            top=0pt
        ]

\vspace{0.1cm}
        \footnotesize  Let $X$ be a homogeneous expression in terms of $a, b, c$, and suppose we want to prove $X \geq 0$.
     Write $X$ in the following form:
$$
(a-b)^2 \cdot S_c+(b-c)^2 \cdot S_a+(c-a)^2 \cdot S_b .
$$

Apply the sum of squares (SOS) method to show that:
    $$16\sum x^4 - 20\sum x^3(y+z) + 9\sum y^2z^2 + 25\sum x^2yz \geq 0.$$
\fail
\par
        \end{tcolorbox}
    \end{minipage}
\end{figure} 
    
\subsection{Problem-Solving Framework}

This category evaluates a model’s ability to approach unfamiliar mathematical problems using structured reasoning. It consists of two key components:

\subsubsection{Problem interpretation and analysis}
  \begin{enumerate}
    \item Parse the problem statement and convert it into a structured representation that reveals variables, constraints, and objectives.
    \item Generate and test simple or edge-case examples to explore possible behaviors and build preliminary insight.
  \end{enumerate}
\textbf{Design Principles:}
Choose problems where understanding the question clearly is key to solving it. Focus on problems that involve tricky wording, hidden assumptions, or subtle edge cases. Good problems in this category often confuse solvers who rush in without first carefully analyzing what’s being asked.

\textbf{Sample Problems:}
\setcounter{problem}{0}
\begin{figure}[H]
    \centering

    \begin{minipage}[t]{0.8\textwidth} \stepcounter{problem}
        \begin{tcolorbox}[
        title=\textbf{\footnotesize Problem \theproblem: Boy return rupees},
            colframe=cyan!40!black,
            boxrule=0.5pt,
            colback=white,
            top=0pt
        ]

\vspace{0.1cm}
        \footnotesize   A boy borrowed 1000 rupees from one of his friends (X). And he borrowed 500 from another friend(Y). And he lost 1000 rupees from his hand on the way to the shop. And he purchased it for only 300 rupees. So he left only 200. And he gave $\$100$ each to those friends ($X$ and $Y$). And he made a calculation.
        
\begin{itemize}
    \item He have to give 900 to $X$
    \item He has to give 400 to $Y$
    \item He have product of 300
\end{itemize}

So total $900+400+300=1600$. How that extra 100 comes from. Please help to solve.

\pass
 \par
 \end{tcolorbox}
    \end{minipage}   

\vspace{0.2cm}    
    \begin{minipage}[t]{0.8\textwidth} \stepcounter{problem}
        \begin{tcolorbox}[
        title=\textbf{\footnotesize Problem \theproblem: Six man wearing hats},
            colframe=cyan!40!black,
            boxrule=0.5pt,
            colback=white,
            top=0pt
        ]

\vspace{0.1cm}
        \footnotesize  A, B, C, D, E, and F are sitting in a circle, each wearing a hat. Each hat is either pure black, pure white, or has an integer written on it. Everyone knows that there is at least one black hat, one white hat, and one numbered hat, and that the numbers on the numbered hats are exactly a sequence of consecutive integers (which can be just one number).

Each person can see everyone else’s hat, but not their own. The host asks if they know what hat they are wearing (they must know the exact color or the exact number, unless they know they are the only one with a numbered hat—in which case, even without knowing the number’s value, it is considered “knowing” their hat). They answer in turn as follows (everyone is perfectly logical and honest):

A: I know.
B: I don’t know.
C: I know.
D: I know.
E: I don’t know.
F: I know.

Seeing E’s puzzled expression, D adds:
“I didn’t know what hat I was wearing at first, but after hearing what C said, I knew.”

E suddenly understands.

Question: Who is wearing colored hats, and who is wearing numbered hats?

\fail
 \par
 \end{tcolorbox}
    \end{minipage}
\end{figure}

 \subsubsection{Application of problem-solving strategies}
  \begin{enumerate}
    \item \textit{Analogy} – Map the problem to a previously solved or canonical problem.
    \item \textit{Induction} – Establish general truths by proving a base case and showing how one case leads to the next.
    \item \textit{Contradiction} – Explore the negation of a target claim and find inconsistencies.
    \item \textit{Working Backwards} – Start from the desired outcome and trace backwards to known results.
    \item \textit{Substitution} – Reformulate the problem using variable changes or structural simplifications.
    \item \textit{Extremal Principle} – Consider extreme or limiting cases (e.g., minimal, maximal, or boundary values).
    \item \textit{Problem Decomposition} – Divide the task into smaller subproblems and recursively solve them.
  \end{enumerate}
\textbf{Design Principles:}
See if there are problems that require a specific strategy. Ensure that the most effective solution path relies on recognizing and applying a specific strategy.

\textbf{Sample Problems:} \setcounter{problem}{0}
 \begin{figure}[H]
    \centering
    
    \begin{minipage}[t]{0.8\textwidth} \stepcounter{problem}
        \begin{tcolorbox}[
        title=\textbf{\footnotesize Problem \theproblem: Color grids},
            colframe=cyan!40!black,
            boxrule=0.5pt,
            colback=white,
            top=0pt
        ]

\vspace{0.1cm}
        \footnotesize  (\emph{Induction}) Let $n\ge3$ be an odd integer.  Color each cell of an $n\times n$ grid either black or white.  Two cells of the same color that share a vertex are called \emph{adjacent}.  For any two cells $a,b$, we say $a$ and $b$ are \emph{connected} if there exists a sequence of cells
\[
c_1,\,c_2,\,\dots,\,c_k
\]
with $c_1 = a$, $c_k = b$, and each $c_i$ is adjacent to $c_{i+1}$ for $i=1,2,\dots,k-1$.  Determine the largest positive integer $M$ for which there is a coloring having $M$ cells that are pairwise non‑connected.

\fail
 \par
 \end{tcolorbox}
    \end{minipage}
    
\vspace{0.2cm}    
    \begin{minipage}[t]{0.8\textwidth} \stepcounter{problem}
        \begin{tcolorbox}[
        title=\textbf{\footnotesize Problem \theproblem: Hamiltonian paths on grids},
            colframe=cyan!40!black,
            boxrule=0.5pt,
            colback=white,
            top=0pt
        ]

\vspace{0.1cm}
        \footnotesize  On a $99\times99$ grid, you begin at the bottom‑left cell and make a path that visits every cell exactly once. At each step you move to an unvisited cell sharing a side with your current position. After you have covered all cells, in how many distinct cells could your path end? Please rigorously justify your answer.

\fail
 \par
 \end{tcolorbox}
    \end{minipage}

\vspace{0.2cm}   
    \begin{minipage}[t]{0.8\textwidth} \stepcounter{problem}
        \begin{tcolorbox}[
        title=\textbf{\footnotesize Problem \theproblem: Extrema of algebraic expression},
            colframe=cyan!40!black,
            boxrule=0.5pt,
            colback=white,
            top=0pt
        ]
        \footnotesize  
(\emph{Problem Decoposition}) Let \( a, b, c, d, e \geq -1 \) be real numbers such that
\[
a + b + c + d + e = 5.
\]
Find the minimum and maximum values of the expression
\[
S = (a + b)(b + c)(c + d)(d + e)(e + a).
\]

\fail
 \par
 \end{tcolorbox}
    \end{minipage}

\vspace{0.2cm}   
    \begin{minipage}[t]{0.8\textwidth} \stepcounter{problem}
        \begin{tcolorbox}[
        title=\textbf{\footnotesize Problem \theproblem: Cake on round table},
            colframe=cyan!40!black,
            boxrule=0.5pt,
            colback=white,
            top=0pt
        ]

\vspace{0.1cm}
        \footnotesize  
(\emph{Contradiction}) On a round table, there are $5$ plates, called plate $1,2,3,4,5$, placed clockwise evenly. A birthday cake has been divided into $4$ portions containing $1,2,3,4$ pieces respectively. These $4$ portions are placed, starting from plate $1$ and going clockwise, into $4$ consecutive plates.

Now, $5$ people sit down at the table, and a game begins. The rules are as follows:

1. If the plate in front of a person is empty, they rotate the turntable to a position of their choice.

2. Step 1 is repeated until someone encounters an empty plate for the second time.

3. Then each person takes the plate in front of them.

Question:
Where should you sit in order to get the most cake? (Assume each person is greedy and smart enough.)

\fail
 \par
 \end{tcolorbox}
    \end{minipage}
\end{figure}

\subsection{Logical Thinking and Reasoning}

This category evaluates the model’s ability to construct and evaluate formal mathematical arguments.

\subsubsection{Constructing valid proofs} Develop logically rigorous and complete arguments using standard proof techniques such as direct proof, contradiction, induction, and case analysis.

\textbf{Design Principles:}
Select problems that require extended logical reasoning, including:
\begin{itemize}
  \item Long chains of deductions or multi-step logical constructions.
  \item Careful case analysis with subtle distinctions or edge conditions.
  \item Proofs involving intricate dependencies between assumptions and conclusions.
\end{itemize}
\textbf{Sample Problems:} \setcounter{problem}{0}
   
 \begin{figure}[H]
    \centering
    \begin{minipage}[t]{0.8\textwidth} \stepcounter{problem}
        \begin{tcolorbox}[
        title=\textbf{\footnotesize Problem \theproblem: 56 elements set},
            colframe=cyan!40!black,
            boxrule=0.5pt,
            colback=white,
            top=0pt
        ]

\vspace{0.1cm}
        \footnotesize  Let \( X \) be a set with 56 elements. Find the smallest positive integer \( n \) such that, for any collection of 15 subsets of \( X \), if the union of any 7 of them contains at least \( n \) elements, then there must exist 3 of these subsets whose intersection is nonempty.

\fail
 \par
 \end{tcolorbox}
    \end{minipage}
\end{figure}

\subsubsection{Evaluating existing arguments} Check whether a given proof or explanation is logically valid, complete, and free of unjustified assumptions or reasoning gaps.

\textbf{Design Principles:}
Provide arguments that appear plausible but contain subtle errors, missing steps, or unjustified claims.

  \textbf{Sample Problems:} \setcounter{problem}{0}

 \begin{figure}[H]
    \centering
    \begin{minipage}[t]{0.8\textwidth} \stepcounter{problem}
        \begin{tcolorbox}[
        title=\textbf{\footnotesize Problem \theproblem: The Banach space paradox},
            colframe=cyan!40!black,
            boxrule=0.5pt,
            colback=white,
            top=0pt
        ]
    
\vspace{0.1cm}
        \footnotesize      
\href{https://mathoverflow.net/questions/44716/counterexample-for-the-open-mapping-theorem}{(MathOverflow Problem)}\citep{OpenMappingCounterexample}  Check the following proof:    
        All Banach norms on a real vector space $X$ are equivalent.

\begin{itemize}
    \item Let $\|\cdot\|_1$ and $\|\cdot\|_2$ two Banach norms on $X$. Consider $\|\cdot\|_3:=\|\cdot\|_1+\|\cdot\|_2$. Prove that it is actually a norm. 
    \item 
Prove that a sequence converge to $x \in X$ w.r.to $\|\cdot\|_3$ if and only if it converges to $x$ both w.r.to $\|\cdot\|_1$ and w.r.to $\|\cdot\|_2$. 
\item 
Prove that a sequence is Cauchy wrto $\|\cdot\|_3$ if and only if it is Cauchy both w.r.to $\|\cdot\|_1$ and w.r.to $\|\cdot\|_2$. Deduce that $\|\cdot\|_3$ is complete. 
\item 
Apply the open mapping theorem tho the identity from $\left(X,\|\cdot\|_3\right)$ to $\left(X,\|\cdot\|_1\right)$ and from $\left(X,\|\cdot\|_3\right)$ to $\left(X,\|\cdot\|_2\right)$, and deduce that the three norms are equivalent.
\end{itemize}

\mhfail\pass
 \par
 \end{tcolorbox}
    \end{minipage}
\end{figure}
\begin{figure}[H]
\centering
    \begin{minipage}[t]{0.8\textwidth} \stepcounter{problem}
        \begin{tcolorbox}[
        title=\textbf{\footnotesize Problem \theproblem: All perfect numbers are even},
            colframe=cyan!40!black,
            boxrule=0.5pt,
            colback=white,
            top=0pt
        ]

\vspace{0.1cm}
        \footnotesize 
\href{https://math.stackexchange.com/questions/348198/best-fake-proofs-a-m-se-april-fools-day-collection}{(MathStackExchange Problem)}\citep{MSE_Did_FakePerfect}
Check the following proof:  all perfect numbers are even. 

Assume

$$
2 n=\sum_{d \mid n} d
$$

Then by Möbius inversion:

$$
n=\sum_{d \mid n} 2 d \cdot \mu\left(\frac{n}{d}\right)
$$

and therefore $n$ is even. 

\pass
 \par
 \end{tcolorbox}
    \end{minipage}
\end{figure}

\subsection{Writing and Presentation}

This category evaluates the ability to communicate mathematical ideas clearly and effectively, both formally and pedagogically.

\subsubsection{Writing clean, structured proofs and definitions} Express arguments and definitions in a precise, logically organized, and readable format suitable for formal assessment or publication.
  
\textbf{Design Principles:} Use problems that require clear, well-organized, and rigorous proofs, especially ones where the arguments are nontrivial to write, so that clarity and structure are essential to the solution.

\textbf{Rubric:}

\begin{tabular}{@{}>{\bfseries}c p{10cm}@{}}
\toprule
Score & Description \\
\midrule
3 (Excellent) & Response is clearly articulated, well-structured, and easy to follow. Ideas are fully developed and effectively communicated. \\
\addlinespace
2 (Good) & Response is generally clear and organized, with minor lapses in structure or explanation.  \\
\addlinespace
1 (Satisfactory) & Response includes relevant ideas but lacks clarity or organization. Transitions or explanations may be confusing. \\
\addlinespace
0 (Bad) & Response is disorganized, vague, or hard to follow. Ideas are poorly expressed or not communicated effectively. \\
\bottomrule
\end{tabular}

\textbf{Sample Problems:} \setcounter{problem}{0} 

\begin{figure}[H]
    \centering
    \stepcounter{problem}
    \begin{minipage}[t]{0.8\textwidth}
        \begin{tcolorbox}[
        title=\textbf{\footnotesize Problem \theproblem: Big fish eats small fish},
            colframe=cyan!40!black,
            boxrule=0.5pt,
            colback=white,
            top=0pt
        ]

\vspace{0.1cm}
        \footnotesize 
        (\emph{Induction}) Suppose there are 10000 fish, numbered from 1 to 10000 in increasing size.  
        The rules are as follows:
\begin{itemize}
  \item A fish can only eat the fish that is exactly one rank smaller than itself (that is, fish $i$ can only eat fish $i - 1$).
  \item Furthermore, fish $i + 1$ can only eat fish $i$ after fish $i$ has eaten fish $i - 1$.
  \item Fish 1 cannot eat any other fish; it can only be eaten.
\end{itemize}

We assume:
\begin{itemize}
  \item If there is a smaller fish available to eat, a larger fish will not pass up the opportunity.
  \item However, self-preservation takes priority: before eating a smaller fish, a fish must ensure that doing so will not lead to it being eaten.
  \item Every fish is infinitely intelligent, and they all know that, and they all know that they all know that, and so on (common knowledge).
\end{itemize}

Question: Will fish 1,2,3 survive? 

\mhfail\pass
 \par
 \end{tcolorbox}
    \end{minipage}
\end{figure}

\subsubsection{Communicating ideas to different audiences}  Adapt the depth, vocabulary, and structure of explanations to match the target audience, ranging from domain experts to general learners or automated evaluators.

\textbf{Design Principles:}
Choose problems that involve explaining a concept to different audiences—for instance, a high school student, a novice learner, and a domain expert. The goal is to assess the ability to adapt clarity, depth, and presentation style based on the listener’s background.

\textbf{Rubric:}

\begin{tabular}{@{}>{\bfseries}c p{10cm}@{}}
\toprule
Level & Description \\
\midrule
3 (Excellent) & Explanation is accurate, clear, and fully adapted to the intended audience. Tone, vocabulary, and structure are consistently effective and appropriate. \\
\addlinespace
2 (Good) & Explanation is with only minor lapses in clarity, tone, or level of detail.  \\
\addlinespace
1 (Satisfactory) & Explanation is overly technical, or too shallow for the intended audience. Tone or structure may hinder understanding. \\
\addlinespace
0 (Bad) & Explanation is confusing, or poorly matched to the audience. \\
\bottomrule
\end{tabular}

\textbf{Note:} The explanation must be correct and accurate. Evaluation is based on the appropriateness of tone, depth, and structure for the target audience.

\textbf{Sample Problems:} \setcounter{problem}{0}

\begin{figure}[H]
    \centering
    \begin{minipage}[t]{0.8\textwidth} \stepcounter{problem}
        \begin{tcolorbox}[
        title=\textbf{\footnotesize Problem \theproblem: Symplectic geometry},
            colframe=cyan!40!black,
            boxrule=0.5pt,
            colback=white,
            top=0pt
        ]

\vspace{0.1cm}
        \footnotesize  Explain to a mathematician why symplectic geometry is the correct setting for classical mechanics.     
 
\fail
\par

 \end{tcolorbox}
    \end{minipage}
\end{figure}



\subsection{Intuition}

This category assesses a model's ability to reason beyond formal steps, relying on geometric insight, estimation, and qualitative judgment to support or critique solutions.

\subsubsection{Geometric intuition} Understand and visualize problems via diagrams and spatial reasoning.

\textbf{Design Principles:}
Select problems where geometric insight—through diagrams, symmetry, or spatial structure—is essential to understanding or solving the problem. Emphasize reasoning that cannot be easily replaced by pure algebra.

\textbf{Sample Problems:} 
\setcounter{problem}{0} 

\begin{figure}[H]
\centering
    \begin{minipage}[t]{0.8\textwidth} \stepcounter{problem}
        \begin{tcolorbox}[
        title=\textbf{\footnotesize Problem \theproblem: Bands in 3-space},
            colframe=cyan!40!black,
            boxrule=0.5pt,
            colback=white,
            top=0pt
        ]

\vspace{0.1cm}
        \footnotesize
Among the 9 bands shown, which ones can be continuously deformed to another in the 3-space, without crossing itself?
\begin{figure}[H]
\centering
\includegraphics[width=0.5\linewidth]{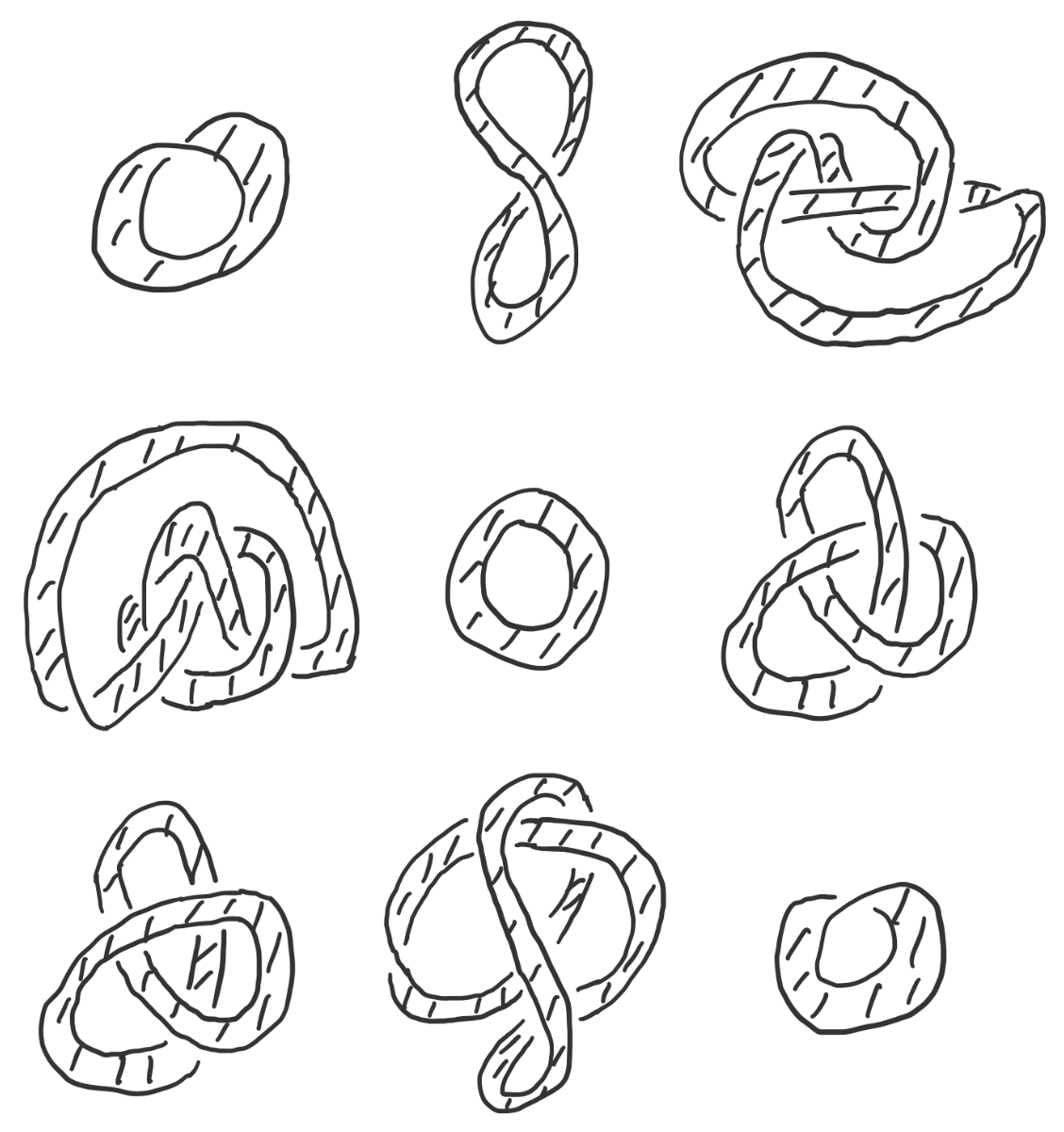}
\end{figure}
\par
\fail        
        \end{tcolorbox}
    \end{minipage}
    
\vspace{0.2cm}

    \begin{minipage}[t]{0.8\textwidth} \stepcounter{problem}
        \begin{tcolorbox}[
        title=\textbf{\footnotesize Problem \theproblem: Surface bounds handlebody},
            colframe=cyan!40!black,
            boxrule=0.5pt,
            colback=white,
            top=0pt
        ]

\vspace{0.1cm}
        \footnotesize  
Show that every smoothly embedded closed orientable connected surface in the $3$-sphere bounds a smoothly embedded $3$-dimensional handlebody (i.e. a boundary sum of solid tori) in the $4$-ball. 
\par
\fail
 \end{tcolorbox}
    \end{minipage}
\end{figure}



\subsubsection{Performing plausibility checks} Use heuristics such as dimensional analysis, trying extreme values, order-of-magnitude reasoning, or test cases to assess the validity of results.

\textbf{Design Principles:}
Construct problems where direct computation or rigorous proof is difficult or impractical, but a clever use of heuristics, such as trying extreme cases, checking units, or order-of-magnitude reasoning, can reveal whether the answer is plausible.

\textbf{Sample Problems:} \setcounter{problem}{0}  
\begin{figure}[H]
    \centering

    \begin{minipage}[t]{0.8\textwidth} \stepcounter{problem}
        \begin{tcolorbox}[
        title=\textbf{\footnotesize Problem \theproblem: Shortest geodesic vs volume},
            colframe=cyan!40!black,
            boxrule=0.5pt,
            colback=white,
            top=0pt
        ]

\vspace{0.1cm}
        \footnotesize  
Let $M$ be a closed Riemannian aspherical $n$-manifold, $n\ge3$. Let $\mathrm{sys}(M)$ denote the length of the shortest nontrivial geodesic on $M$, and $\mathrm{vol}(M)$ denote the volume of $M$. Is it true that $C\cdot\mathrm{sys}(M)^n\ge\mathrm{vol}(M)$ for some $C=C(n)>0$ independent of $M$?   

\mhfail\pass
 \par
 \end{tcolorbox}
    \end{minipage}
\end{figure}    

\subsubsection{Estimating numerically or asymptotically} Provide approximate solutions using asymptotic analysis or numerical methods when exact results are intractable.

\textbf{Design Principles:}
Use problems that involve complex estimation, requiring order-of-magnitude reasoning or bounding techniques to make progress.

\textbf{Sample Problems:} \setcounter{problem}{0}  

\begin{figure}[H]
    \centering
    \begin{minipage}[t]{0.8\textwidth} \stepcounter{problem}
        \begin{tcolorbox}[
        title=\textbf{\footnotesize Problem \theproblem: Asymptotic expansion of a sequence},
            colframe=cyan!40!black,
            boxrule=0.5pt,
            colback=white,
            top=0pt
        ]

\vspace{0.1cm}
        \footnotesize  
\href{https://mathoverflow.net/questions/384047/asymptotic-analysis-of-x-n1-fracx-nn2-fracn2x-n-2}{(MathOverflow Problem)}\citep{MathOverflow384047}  Let $x_1=1$ and $x_{n+1}=\frac{x_n}{n^2}+\frac{n^2}{x_n}+2, n \geqslant 1$. Is the coefficient of the third term in the asymptotic expansion of $x_n$ equal to $\frac{5}{8}$?    
 \par
 \fail
 \end{tcolorbox}
    \end{minipage}

\end{figure}

\subsubsection{Aesthetic judgment} Recognize when a solution is elegant, overly convoluted, or violates natural mathematical principles (e.g., symmetry, minimality). (linear algebra choose without basis or with basis; category problems with universal proof)

\textbf{Design Principles:}
Choose problems with multiple valid approaches, encouraging solvers to recognize and prefer solutions that reflect mathematical elegance—such as symmetry, minimal assumptions, or conceptual clarity—over ad hoc or overly technical methods.

\textbf{Rubric:}

\begin{tabular}{@{}>{\bfseries}c p{10cm}@{}}
\toprule
Level & Description \\
\midrule
3 (Excellent) & Consistently prefers elegant solutions that are minimal, symmetric, or conceptually insightful, and articulates their aesthetic value clearly. \\
\addlinespace
2 (Good) & Recognizes elegant ideas and tends to favor them, with some ability to explain why they are preferable. \\
\addlinespace
1 (Satisfactory) & Sometimes notices elegance but does not consistently prefer or explain it. Shows limited aesthetic awareness. \\
\addlinespace
0 (Bad) & Accepts or produces valid solutions without distinguishing elegant from technical or inelegant methods.\\
\bottomrule
\end{tabular}

\textbf{Note:} All solutions must be mathematically correct. This rubric evaluates preference for elegant, natural, and conceptually clear approaches.

\textbf{Sample Problems:} \setcounter{problem}{0}

\begin{figure}[H]
    \centering
    \begin{minipage}[t]{0.8\textwidth} \stepcounter{problem}
        \begin{tcolorbox}[
        title=\textbf{\footnotesize Problem \theproblem: Trace of tensor product of linear maps},
            colframe=cyan!40!black,
            boxrule=0.5pt,
            colback=white,
            top=0pt
        ]

\vspace{0.1cm}
        \footnotesize   Comparing the following two proofs of the following problem. Which one do you like more? Why?

Problem: Let $V,W$ be finite-dimensional vector spaces and $S\colon V\to V$, $T\colon W\to W$ be linear operators. Show that $tr(S\otimes T)=tr(S)tr(T)$.

Solution 1: Pick bases for $V$, $W$, and regard $S,T$ as square matrices, with entries $S_{i,j}$, $T_{i,j}$. Then $tr(S\otimes T)=\sum_{i,j}S_{i,i}T_{j,j}=(\sum_iS_i)(\sum_jT_j)=tr(S)tr(T)$.

Solution 2: We have a canonical isomorphism $\phi\colon V^*\otimes W^*\cong(V\otimes W)^*$ by $\phi(\ell\otimes\eta)(v\otimes w):=\ell(v)\eta(w)$. This shows the two trace maps $V^*\otimes W^*\otimes V\otimes W\to F$ ($F$ is the base field) given by tracing off $V\otimes W$ or tracing off $V$ and $W$ respectively are equal. Since $S\in V^*\otimes V$, $T\in W^*\otimes W$, this finishes the proof. 

\pass
 \par
 \end{tcolorbox}
    \end{minipage}

\end{figure}

\begin{figure}[H]
    \centering
 
    \begin{minipage}[t]{0.8\textwidth} \stepcounter{problem}
        \begin{tcolorbox}[
        title=\textbf{\footnotesize Problem \theproblem: 17 consecutive numbers},
            colframe=cyan!40!black,
            boxrule=0.5pt,
            colback=white,
            top=0pt
        ]

\vspace{0.1cm}
        \footnotesize 
In a finite sequence of real numbers the sum of any seven successive terms is negative and the sum of any eleven successive terms is positive. Determine the maximum number of terms in the sequence.

Solution 1: Construction for \(n=16\). Define
\[
a_j = 
\begin{cases}
-13,& j\in\{3,7,10,14\},\\
5,& \text{otherwise}.
\end{cases}
\]
Any block of 7 consecutive terms then has exactly two entries equal to \(-13\) and five entries equal to \(5\), so
$
5\cdot5 - 2\cdot13 = -1 < 0.
$
Any block of 11 consecutive terms has exactly three entries equal to \(-13\) and eight entries equal to \(5\), so
$
8\cdot5 - 3\cdot13 = 1 > 0.
$
Thus the sequence meets the requirements.

Non‑existence for \(n\ge17\). One checks (for example by linear‑programming or averaging arguments) that the system
\[
\sum_{j=i}^{i+6}a_j<0,\quad
\sum_{j=i}^{i+10}a_j>0
\]
for \(1\le i\le n-6\) and \(1\le i\le n-10\) has no solution once \(n\ge17\).

Hence the maximum possible length is \(\boxed{16}\).

Solution 2:
   Let $x_1, x_2, \ldots$ be the given sequence and let $s_n=x_1+x_2+\ldots+x_n$. The conditions from the hypothesis can be now written as $s_{n+7}<s_n$ and $s_{n+11}>s_n$ for all $n \geq 1$. We then have:
$0<s_{11}<s_4<s_{15}<s_8<s_1<s_{12}<s_5<s_{16}<s_9<s_2<s_{13}<s_6<s_{17}<s_{10}<s_3<s_{14}<s_7<0$, a contradiction.
Therefore, the sequence cannot have 17 terms. In order to show that 16 is the answer, just take 16 real numbers satisfying $s_{10}<s_3<s_{14}<s_7<0<s_{11}<s_4<s_{15}<s_8<s_1<s_{12}<s_5<s_{16}<s_9<s_2<s_{13}<s_6$. We have $x_1=s_1$ and $x_n=s_n-s_{n-1}$ for $n \geq 2$. Thus we found all sequences with the given properties.

Which do you find more elegant, and why? Discuss in terms of simplicity, symmetry, or conceptual insight.

\pass
 \par
 \end{tcolorbox}
    \end{minipage}
\end{figure}

\subsection{Meta Skills}

This category captures high-level thinking abilities that transcend specific problem types, enabling flexible reasoning across abstract and unfamiliar domains.

\subsubsection{Thinking abstractly} Engage with general structures such as groups, graphs, or vector spaces, and reason about idealized systems rather than specific instances.

\textbf{Design Principles:}  
Select problems that emphasize reasoning about abstract mathematical structures, such as vector spaces, groups, or categories.

\textbf{Sample Problems:} \setcounter{problem}{0}  

\begin{figure}[H]
    \centering
    \begin{minipage}[t]{0.8\textwidth} \stepcounter{problem}
        \begin{tcolorbox}[
        title=\textbf{\footnotesize Problem \theproblem: Open interval},
            colframe=cyan!40!black,
            boxrule=0.5pt,
            colback=white,
            top=0pt
        ]

\vspace{0.1cm}
        \footnotesize Can a nonempty open interval be covered by countably many disjoint closed intervals?   

\fail
 \par
 \end{tcolorbox}
    \end{minipage}
    
\vspace{0.2cm}

    \begin{minipage}[t]{0.8\textwidth} \stepcounter{problem}
        \begin{tcolorbox}[
        title=\textbf{\footnotesize Problem \theproblem: $*$ operation},
            colframe=cyan!40!black,
            boxrule=0.5pt,
            colback=white,
            top=0pt
        ]
        \footnotesize Let $*$ denote an arbitrary binary operation The constant property $x*y=z*w$ implies the commutative property $x*y=y*x$, but the commutative property does not imply the constant property. Is there a property strictly in between these? So that it is implied by the constant property and it implies the commutative properties, but both of these implications are false in the opposite direction. Your property cannot refer to auxiliary fixed elements or use quantifiers and it must be expressed by a single equation (so you cannot say commutative+associative)

\fail
 \par
 \end{tcolorbox}
    \end{minipage}

\vspace{0.2cm}    
\begin{minipage}[t]{0.8\textwidth}
\stepcounter{problem}
        \begin{tcolorbox}[
        title=\textbf{\footnotesize Problem \theproblem: Subtraction in Abelian group},
            colframe=cyan!40!black,
            boxrule=0.5pt,
            colback=white,
            top=0pt
        ]

\vspace{0.1cm}
        \footnotesize  Let $*$ be an arbitrary binary relation on a set $S$. Show that $*$ defines subtraction in an Abelian group if and only if $x*(y*(z*(x*y)))=z$ for all $x,y,z \in S$. 
 \par

 \pass
 \end{tcolorbox}
    \end{minipage}

\end{figure}

\subsubsection{Transferring methods across domains} Apply tools or strategies from one area (e.g., probability) to solve problems in another (e.g., number theory), showcasing flexible conceptual adaptation.

\textbf{Design Principles:}  
Design problems that can be approached using techniques from other mathematical domains. Emphasize conceptual reuse, creative mappings, or analogies between seemingly unrelated fields.

\textbf{Sample Problems:} \setcounter{problem}{0}  
 \begin{figure}[H]
    \centering
    \begin{minipage}[t]{0.8\textwidth} \stepcounter{problem}
        \begin{tcolorbox}[
        title=\textbf{\footnotesize Problem \theproblem: Probabilistic construction of a set},
            colframe=cyan!40!black,
            boxrule=0.5pt,
            colback=white,
            top=0pt
        ]

\vspace{0.1cm}
        \footnotesize Find a measurable subset $A$ of $[0,1]$ such that for any subinterval $I$ of $[0,1]$, the Lebesgue measure $\mu(A \cap I)$ has $0<\mu(A \cap I)<\mu(I)$.  Try to find a probabilistic construction.
        
\pass
 \end{tcolorbox}
    \end{minipage}

\vspace{0.2cm}
   \begin{minipage}[t]{0.8\textwidth} \stepcounter{problem}
        \begin{tcolorbox}[
        title=\textbf{\footnotesize Problem \theproblem: Beside the point},
            colframe=cyan!40!black,
            boxrule=0.5pt,
            colback=white,
            top=0pt
        ]

\vspace{0.1cm}
        \footnotesize 
\href{https://www.janestreet.com/puzzles/beside-the-point-index/}{(Jane Stree Puzzle)}\citep{JaneStreetPuzzle} Two random points, one red and one blue, are chosen uniformly and independently from the interior of a square. To ten decimal places, what is the probability that there exists a point on the side of the square closest to the blue point that is equidistant to both the blue point and the red point?

\mhfail\pass
 \end{tcolorbox}
    \end{minipage}
\end{figure}

\subsection{Learning New Knowledge}

This category assesses how well a model can assimilate new mathematical concepts or techniques from reading or self-guided exploration.

\subsubsection{Reading and understanding advanced materials} Comprehend definitions, theorems, and complex derivations in textbooks, research articles, or competition solutions.

\textbf{Design Principles:}  Problems should go beyond rote recall, requiring solvers to internalize definitions, follow intricate derivations, and recognize how results interact across contexts, while emphasizing subtle assumptions or technical conditions to test genuine comprehension over surface familiarity.

\textbf{Sample Problems:} \setcounter{problem}{0}

 \begin{figure}[H]
    \centering
        \begin{minipage}[t]{0.8\textwidth} \stepcounter{problem}
        \begin{tcolorbox}[
        title=\textbf{\footnotesize Problem \theproblem: Loewner Energy},
            colframe=cyan!40!black,
            boxrule=0.5pt,
            colback=white,
            top=0pt
        ]

\vspace{0.1cm}
        \footnotesize 
You are reading the paper \href{https://arxiv.org/pdf/2006.08574}{Large deviations of multichordal $\mathrm{SLE}_{0+}$, real rational functions, and zeta-regularized determinants of Laplacians}~\citep{peltolawang24} and encounter the following concept: 

\begin{quote}
\textbf{Loewner potential}
\end{quote}
\begin{enumerate}
    \item Identify the concept and explain its context. 
    \item What are the relations between Loewner energy and Loewner potential? 
  \item How do we apply it to the deterministic limit of multiple chordal SLE systems? 
\end{enumerate}\pass
 \par
 \end{tcolorbox}
    \end{minipage}

\vspace{0.2cm}
    \begin{minipage}[t]{0.8\textwidth}
    \stepcounter{problem}
        \begin{tcolorbox}[
        title=\textbf{\footnotesize Problem \theproblem: Local multiple SLE systems},
            colframe=cyan!40!black,
            boxrule=0.5pt,
            colback=white,
            top=0pt
        ]

\vspace{0.1cm}
        \footnotesize 
You are reading the paper \href{https://arxiv.org/abs/2505.16093}{Multiple chordal SLE($\kappa$) and quantum Calogero-Moser system}~\citep{zhang2025multiplechordalsle} and encounter the following concept: 

\begin{quote}
\textbf{Local chordal multiple SLE systems}
\end{quote}
\begin{enumerate}
    \item Identify the concept and explain its context. 
  \item Restate the assumptions formally.
  \item Why do we call it a local system? 
  \item How do we define the compatible family of measures? 
\end{enumerate}
\mhfail\pass
 \par
 \end{tcolorbox}
    \end{minipage}
\end{figure}
  
\subsubsection{Self-directed learning} Independently acquire new tools, techniques, or concepts from unfamiliar topics, and apply them to solve problems or extend known results.

\textbf{Design Principles:}  
Design tasks that combine a technical question with a short excerpt from an advanced paper or textbook introducing unfamiliar definitions or methods. These tasks should test the learner’s ability to apply them to solve problems or extend known results.

\textbf{Sample Problems:} \setcounter{problem}{0}

 \begin{figure}[H]
    \centering
    \begin{minipage}[t]{0.8\textwidth} \stepcounter{problem}
        \begin{tcolorbox}[
        title=\textbf{\footnotesize Problem \theproblem: Generalize multiple SLE systems},
            colframe=cyan!40!black,
            boxrule=0.5pt,
            colback=white,
            top=0pt
        ]

\vspace{0.1cm}
        \footnotesize 
You are reading  \href{https://arxiv.org/abs/2505.16093}{Multiple chordal SLE($\kappa$) and quantum Calogero-Moser system}~\citep{zhang2025multiplechordalsle} and encounter the following concept:

\begin{quote}
\textbf{Local chordal multiple SLE systems}
\end{quote}

\begin{enumerate}
    \item Please generalize the concept to the multiple radial case.
    \item Please define the compatible family of measures for multiple radial SLE systems.
\end{enumerate}
\fail
 \par
 \end{tcolorbox}
    \end{minipage}
\end{figure}
\subsection{Mathematical Modeling}

This category measures the ability to represent real-world or theoretical situations using appropriate mathematical structures.

\subsubsection{Translating real-world problems into math} Build well-defined mathematical models that accurately capture the key features of physical, social, or economic phenomena.

\textbf{Design Principles:}  
Create problems that describe complex real-world scenarios and require solvers to construct precise mathematical models.


\textbf{Sample Problems:} \setcounter{problem}{0} 

 \begin{figure}[H]
    \centering
    \begin{minipage}[t]{0.8\textwidth} \stepcounter{problem}
        \begin{tcolorbox}[
        title=\textbf{\footnotesize Problem \theproblem: Sea lampreys sex ratio},
            colframe=cyan!40!black,
            boxrule=0.5pt,
            colback=white,
            top=0pt
        ]

\vspace{0.1cm}
        \footnotesize 
\href{https://www.contest.comap.com/undergraduate/contests/mcm/contests/2024/problems/}{(MCM 2024 Problem A)}\citep{MCM2024A} While some animal species exist outside of the usual male or female sexes, most species are substantially either male or female. Although many species exhibit a 1:1 sex ratio at birth, other species deviate from an even sex ratio. This is called adaptive sex ratio variation. For example, the temperature of the nest incubating eggs of the American alligator influences the sex ratios at birth.

The role of lampreys is complex. In some lake habitats, they are seen as parasites with a significant impact on the ecosystem, whereas lampreys are also a food source in some regions of the world, such as Scandinavia, the Baltics, and for some Indigenous peoples of the Pacific Northwest in North America.

The sex ratio of sea lampreys can vary based on external circumstances. Sea lampreys become male or female depending on how quickly they grow during the larval stage. These larval growth rates are influenced by the availability of food. In environments where food availability is low, growth rates will be lower, and the percentage of males can reach approximately $78 \%$ of the population. In environments where food is more readily available, the percentage of males has been observed to be approximately $56 \%$ of the population.

We focus on the question of sex ratios and their dependence on local conditions, specifically for sea lampreys. Sea lampreys live in lake or sea habitats and migrate up rivers to spawn. The task is to examine the advantages and disadvantages of the ability for a species to alter its sex ratio depending on resource availability. Your team should develop and examine a model to provide insights into the resulting interactions in an ecosystem.

Questions to examine include the following:
\begin{enumerate}
    \item What is the impact on the larger ecological system when the population of lampreys can alter its sex ratio?
    \item What are the advantages and disadvantages to the population of lampreys?
    \item What is the impact on the stability of the ecosystem given the changes in the sex ratios of lampreys?
    
\fail
\par
\end{enumerate}   
 \par
 \end{tcolorbox}
    \end{minipage}
\end{figure}

\subsection{Generalization}

This category concerns recognizing and extending patterns from specific cases to broader mathematical truths.

\subsubsection{Identifying patterns from examples} Detect recurring structures or behaviors in small-scale problems and use them to predict or conjecture more general results.

\textbf{Design Principles:}  
Present a sequence of concrete examples or small cases that exhibit an underlying mathematical pattern.

 \textbf{Sample Problems:} \setcounter{problem}{0}  
 \begin{figure}[H]
    \centering
    \begin{minipage}[t]{0.8\textwidth} \stepcounter{problem}
        \begin{tcolorbox}[
        title=\textbf{\footnotesize {Problem \theproblem: k-set containing three sums}},
            colframe=cyan!40!black,
            boxrule=0.5pt,
            colback=white,
            top=0pt
        ]

\vspace{0.1cm}
        \footnotesize 
Find the smallest positive integer \( k \) such that for every \( k \)-element subset \( A \) of the set
\[
S = \{1, 2, \dots, 2012\},
\]
there exist three \textbf{distinct} elements \( a, b, c \in S \) such that all three sums
\[
a + b,\quad b + c,\quad c + a
\]
are contained in \( A \). Try small cases to find a pattern then prove your conjecture.

\fail
 \par
 \end{tcolorbox}
    \end{minipage}
\end{figure}

\subsubsection{Developing unifying theories} Create general frameworks that consolidate seemingly unrelated theorems or observations into a coherent structure.

\textbf{Design Principles:}  
Design problems that involve multiple related results. Challenge solvers to discover the shared principles or structural similarities behind them, and to formulate a unifying framework—such as a general theorem—that explains all cases and suggests further extensions.

\textbf{Sample Problems:} \setcounter{problem}{0} 

 \begin{figure}[H]
    \centering
    \begin{minipage}[t]{0.8\textwidth} \stepcounter{problem}
        \begin{tcolorbox}[
        title=\textbf{\footnotesize Problem \theproblem: Generalize Dehn twists},
            colframe=cyan!40!black,
            boxrule=0.5pt,
            colback=white,
            top=0pt
        ]

\vspace{0.1cm}
        \footnotesize 
Consider the following three ``twist'' operations:
\begin{enumerate}
\item [(1)] If $\Sigma$ is an oriented surface and $\gamma\subset\Sigma$ is an embedded curve, one can perform a Dehn twist along $\gamma$ to create a mapping class of $\Sigma$.
\item [(2)] If $M$ is an oriented $3$-manifold and $T\subset M$ is an embedded torus, $\gamma\subset T$ is an essential simple closed curve, one can perform a Dehn twist along $T$ in the $\gamma$-direction to create a mapping class of $M$. 
\item [(3)] If $X$ is a smooth oriented $4$-manifold and $S\subset X$ is smoothly embedded $2$-sphere with self-intersection number $-2$, one can construct a mapping class of $X$ obtained by the identity outside a tubular neighborhood $\nu S$ of $S$, and the time $\pi$ map of the geodesic flow on $DTS^2\cong\nu S$ inside $\nu S$, where $DTS^2$ denotes the unit disk bundle of the tangent bundle of the unit sphere $S^2$.
\end{enumerate}
What is essentially going on that allows one to perform such twist operations? Please:
\begin{enumerate}
\item [i.] Formulate a generalization of the twist operation that encompasses all three operations above as special cases.
\item [ii.] Formulate a generalization of the first two twist operations that is not a generalization of the third operation.
\end{enumerate}

\fail
 \par
 \end{tcolorbox}
    \end{minipage}

\vspace{0.2cm}
      \begin{minipage}[t]{0.8\textwidth} \stepcounter{problem}
        \begin{tcolorbox}[
        title=\textbf{\footnotesize Problem \theproblem: Clock strikes},
            colframe=cyan!40!black,
            boxrule=0.5pt,
            colback=white,
            top=0pt
        ]
    
\vspace{0.1cm}
        \footnotesize A clock is built based on the following repeating sequence
1,2,3,4,3,2,1,2,3,4,3,2,...
At the first hour it strikes 1 time. At the second hour it strikes 2 times. At the third hour it strikes 3 times. At the fourth hour it strikes 4 times. At the fifth hour it strikes 3+2 times. At the sixth hour it strikes 1+2+3 times. Etc.
Explain what is going on, develop the theory that explains why it works, and then suggest some generalizations.

\fail
 \par
 \end{tcolorbox}
    \end{minipage}
\end{figure}
\begin{figure}[H]
\centering
    \begin{minipage}[t]{0.8\textwidth} \stepcounter{problem}
        \begin{tcolorbox}[
        title=\textbf{\footnotesize Problem \theproblem: Group scheme and classifying space},
            colframe=cyan!40!black,
            boxrule=0.5pt,
            colback=white,
            top=0pt
        ]

\vspace{0.1cm}
        \footnotesize  Explain the connection between the following two observations. Then propose a unifying theory that predicts decompositions for an arbitrary group.
\begin{enumerate}
    \item Let $\mathbb{G}_m$ be the multiplicative group scheme. It has exactly $q-1$ $\mathbb{F}_q$ points.
    \item Let $B\mathbb{G}_m=\mathrm{pt}/\mathbb{G}_m$ be the classifying stack of $\mathbb{G}_m$ bundles. Over the complex numbers it is represented by the homotopy type $\mathbb{CP}^{\infty}$ which admits a decomposition $\mathbb{C}^0+\mathbb{C}^1+\mathbb{C}^2+\dots$. If we instead ``counted $\mathbb{F}_q$ points", we would get the non-convergent series $1+q+q^2+\dots=1/(1-q)$.

\end{enumerate}
\pass
 \par
 \end{tcolorbox}
\end{minipage}
\end{figure}

\subsection{Creativity}

This category measures originality and innovation in mathematical thinking, beyond standard methods or known results.

\subsubsection{Posing new problems and defining new concepts} Formulate original mathematical questions or introduce novel definitions that open up new lines of inquiry.

\textbf{Design Principles:}  
Design tasks that ask solvers to formulate original questions or definitions inspired by familiar patterns or concepts. Prompts should encourage open-ended exploration, with emphasis on novelty and potential for generalization or further investigation. 


\textbf{Rubric}

\textbf{Note:} Responses must be mathematically well-formed and valid. Evaluation focuses on novelty, clarity, and potential for exploration.

\vspace{1em}

\begin{tabular}{@{}>{\bfseries}c p{10cm}@{}}
\toprule
Level & Description \\
\midrule
2 (Excellent) & Poses a novel and well-formulated problem or definition with clear mathematical structure. Opens a path for meaningful exploration or generalization. Shows originality and insight. \\
\addlinespace
1 (Good) & Problem or concept is mathematically sound and somewhat novel. May resemble known ideas but reframed with fresh perspective or extension. \\
\addlinespace
0 (Satisfactory) & Problem or concept is valid but closely mirrors standard material. Shows basic understanding but lacks originality or direction for further inquiry. \\
\bottomrule
\end{tabular}

\textbf{Sample Problems:} \setcounter{problem}{0} 
 \begin{figure}[H]
    \centering
    \begin{minipage}[t]{0.8\textwidth} \stepcounter{problem}
        \begin{tcolorbox}[
        title=\textbf{\footnotesize Problem \theproblem: Massive SLE},
            colframe=cyan!40!black,
            boxrule=0.5pt,
            colback=white,
            top=0pt
        ]

\vspace{0.1cm}
        \footnotesize Define massive SLE and describe the new questions and properties it may raise.

\pass
 \par
 \end{tcolorbox}
    \end{minipage}
\end{figure}

\subsubsection{Inventing new methods or approaches} Devise alternative strategies, techniques, or algorithms that offer new ways of solving old problems.

\textbf{Design Principles:}  
Select problems that can be solved using standard methods but also admit a simpler or more elegant solution through unconventional reasoning.

\textbf{Sample Problems:} \setcounter{problem}{0} 
 \begin{figure}[H]
    \centering
    \begin{minipage}[t]{0.8\textwidth} \stepcounter{problem}
        \begin{tcolorbox}[
        title=\textbf{\footnotesize Problem \theproblem: Maximum allowable sequence},
            colframe=cyan!40!black,
            boxrule=0.5pt,
            colback=white,
            top=0pt
        ]
        \footnotesize Find at least two solutions to \href{https://www.imo-official.org/problems/1977/eng.pdf}{1977 IMO Problem 2}\citep{IMO1977P2_official}:
In a finite sequence of real numbers the sum of any seven successive terms is negative and the sum of any eleven successive terms is positive. Determine the maximum number of terms in the sequence.   

\pass
 \par
 \end{tcolorbox}
    \end{minipage}
\end{figure}

\subsubsection{Lateral thinking} Apply non-obvious insights, analogies, or reversals to reframe a problem and reach a solution creatively.

\textbf{Design Principles:}  
Design brain-teaser-style problems that resist direct or conventional approaches and instead require solvers to adopt unexpected perspectives, reinterpret elements of the problem, or make surprising conceptual connections.

\textbf{Sample Problems:}
\setcounter{problem}{0}
 \begin{figure}[H]
    \centering
    \begin{minipage}[t]{0.8\textwidth} \stepcounter{problem}
        \begin{tcolorbox}[
        title=\textbf{\footnotesize Problem \theproblem: Make the equation valid},
            colframe=cyan!40!black,
            boxrule=0.5pt,
            colback=white,
            top=0pt
        ]

\vspace{0.1cm}
        \footnotesize Move any one digit to make this equation true:
    $62-63=1$.   

\fail
 \par
 \end{tcolorbox}
    \end{minipage}
\end{figure}

\newpage
\section{Skills Profile for GPT-5 thinking and Comparison with o4-mini-high }

The Math Skills Breakdown evaluation provides a structured and fine-grained profile of GPT-5-thinking’s mathematical abilities across twelve core skill dimensions, each comprising specific sub-skills. This framework enables the assessment of both overall competence and localized strengths or weaknesses in mathematical abilities. All scores are reported on a 0–100 scale.

\paragraph{Scoring Methodology.}
Each skill category $i$ has $m_i$ sub-skills indexed by $j$. For sub-skill $(i,j)$,
\[
\text{SubScore}_{i,j} = 100 \cdot \frac{n_{\mathrm{pass},i,j}}{n_{\mathrm{total},i,j}}.
\]
Here, $n_{\mathrm{pass},i,j}$ denotes the number of correctly solved problems for sub-skill $(i,j)$, and $n_{\mathrm{total},i,j}$ is the total number of problems assigned to that sub-skill. If a problem spans multiple skills, its weight will be proportionally assigned to each skill.

The \emph{balanced score} is defined as the uniform (equal-weighted) average of its sub-skill scores:
\[
\text{BalancedScore}_i := \frac{1}{m_i}\sum_{j=1}^{m_i} \text{SubScore}_{i,j}.
\]

For comparison, we also report the \emph{weighted score}, which aggregates performance across all problems in the category:
\[
\text{WeightedScore}_i := 100 \cdot \frac{\sum_{j=1}^{m_i} n_{\mathrm{pass},i,j}}
{\sum_{j=1}^{m_i} n_{\mathrm{total},i,j}}.
\]
Here, the numerator counts the total number of correctly solved problems across all sub-skills in category $i$, while the denominator is the total number of problems assigned to that category.  
Equivalently,
\[
\text{WeightedScore}_i = \frac{\sum_{j=1}^{m_i} n_{\mathrm{total},i,j}\,\text{SubScore}_{i,j}}{\sum_{j=1}^{m_i} n_{\mathrm{total},i,j}}.
\]

Thus, $\text{BalancedScore}_i$ treats each sub-skill equally, while $\text{WeightedScore}_i$ reflects overall accuracy across all problems. In our evaluation, the balanced score is reported as the \emph{primary metric}, since it prevents categories with larger sub-skill problem sets from dominating the overall assessment.

To carry out the evaluation, we first collected GPT-5-thinking’s responses across the curated problem sets. 
The raw results, organized by skill category and sub-skill, are summarized in Table~\ref{skill and subskill performance table}, 
which records the number of problems attempted under each sub-skill, together with counts of correct and incorrect responses.

\begin{table}[H]
\centering
\caption{Skill Categories and Subskills Performance}
\label{skill and subskill performance table}
\rowcolors{2}{lightgray}{white} 
\begin{tabular}{p{3.6cm} p{3.6cm} >{\centering\arraybackslash}m{1.5cm} 
                                    >{\columncolor{lightgreen}\centering\arraybackslash}m{1.5cm} 
                                    >{\columncolor{lightred}\centering\arraybackslash}m{1.5cm}}
\toprule
\textbf{Skill Category} & \textbf{Subskill} & \textbf{Problems} & \textbf{Correct} & \textbf{Incorrect} \\
\midrule
Memory of Math Knowledge & Taxonomy of Subjects        & 2 & 2 & 0 \\
                         & Key Mathematical Statements & 2 & 0 & 2 \\
\midrule
Understanding Knowledge and Theories & Understanding theorems     & 1 & 0 & 1 \\
& Applying classical results & 1 & 0 & 1 \\
\midrule
Computational and Analytical Skills & Computational fluency  & 1 & 0 & 1 \\
& Symbolic manipulation & 1 & 0 & 1 \\
\midrule
Problem-Solving Framework & Problem Interpretation \& Analysis & 2 & 1 & 1 \\
& Application of Strategies          & 4 & 0 & 4 \\
\midrule
Logical Thinking and Reasoning & Constructing proofs    & 1 & 0 & 1 \\
& Evaluating arguments   & 2 & 2 & 0 \\
\midrule
Writing and Presentation & Structured proofs & 1 & 1 & 0 \\
                         & Communicating to audiences    & 1 & 0 & 1 \\
\midrule
Intuition & Geometric intuition             & 1 & 0 & 1 \\
          & Plausibility checks             & 1 & 1 & 0 \\
          & Numerical/ asymptotic estimation & 1 & 0 & 1 \\
          & Aesthetic judgment              & 2 & 2 & 0 \\
\midrule
Meta Skills & Thinking abstractly   & 3 & 1 & 2 \\
            & Transferring methods  & 2 & 2 & 0 \\
\midrule
Learning New Knowledge & Reading advanced materials & 2 & 2 & 0 \\
                       & Self-directed learning     & 1 & 0 & 1 \\
\midrule
Mathematical Modeling & Translating real-world problems & 1 & 0 & 1 \\
\midrule
Generalization & Identifying patterns       & 1 & 0 & 1 \\
               & Developing unifying theories & 3 & 1 & 2 \\
\midrule
Creativity & Posing new problems & 1 & 1 & 0 \\
           & Inventing methods   & 1 & 1 & 0 \\
           & Lateral thinking    & 1 & 0 & 1 \\
\bottomrule
\end{tabular}
\end{table}

\newpage 
We then compute per-subskill accuracies (\emph{SubScores}), as well as aggregate \emph{Balanced} and \emph{Weighted} scores for each category. 
These results are reported in Table~\ref{skill balanced weighted}, enabling comparison across skills that differ in the number and distribution of problems.

\begin{table}[H]
\centering
\caption{Skill Categories and Subskills with Balanced and Weighted Scores}
\label{skill balanced weighted}
\rowcolors{2}{lightgray}{white} 
\begin{tabular}{p{3.6cm} p{3.6cm} >{\centering\arraybackslash}m{1.5cm} 
                                    >{\columncolor{lightblue}\centering\arraybackslash}m{1.5cm} 
                                    >{\columncolor{lightorange}\centering\arraybackslash}m{1.5cm}}
\toprule
\textbf{Skill Category} & \textbf{Subskill} & \textbf{SubScore (\%)} & \textbf{Balanced (\%)} & \textbf{Weighted (\%)} \\
\midrule
Memory of Math Knowledge & Taxonomy of Subjects        & 100.0 & 50.0 & 50.0 \\
                         & Key Mathematical Statements &   0.0 & --   & --   \\
\midrule
Understanding Knowledge and Theories & Understanding theorems     & 0.0 & 0.0 & 0.0 \\
                                     & Applying classical results & 0.0 & --  & --  \\
\midrule
Computational and Analytical Skills & Computational fluency  & 0.0 & 0.0 & 0.0 \\
                                    & Symbolic manipulation & 0.0 & --  & --  \\
\midrule
Problem-Solving Framework & Problem Interpretation \& Analysis & 50.0 & 25.0 & 16.7 \\
                          & Application of Strategies          & 0.0  & --   & --   \\
\midrule
Logical Thinking and Reasoning & Constructing proofs  & 0.0 & 50.0 & 66.7 \\
                               & Evaluating arguments & 100.0 & -- & -- \\
\midrule
Writing and Presentation & Structured proofs & 100.0 & 50.0 & 50.0 \\
                         & Communicating to audiences    &   0.0 & --   & --   \\
\midrule
Intuition & Geometric intuition             & 0.0 & 50.0 & 60.0 \\
          & Plausibility checks             & 100.0 & -- & -- \\
          & Numerical/ asymptotic estimation & 0.0 & -- & -- \\
          & Aesthetic judgment              & 100.0 & -- & -- \\
\midrule
Meta Skills & Thinking abstractly   & 33.3 & 66.7 & 60.0 \\
            & Transferring methods  & 100.0 & -- & -- \\
\midrule
Learning New Knowledge & Reading advanced materials & 100.0 & 50.0 & 66.7 \\
                       & Self-directed learning     &   0.0 & --   & --   \\
\midrule
Mathematical Modeling & Translating real-world problems & 0.0 & 0.0 & 0.0 \\
\midrule
Generalization & Identifying patterns       & 0.0 & 16.7 & 25.0 \\
               & Developing unifying theories & 33.3 & -- & -- \\
\midrule
Creativity & Posing new problems & 100.0 & 66.7 & 66.7 \\
           & Inventing methods   & 100.0 & --   & --   \\
           & Lateral thinking    &   0.0 & --   & --   \\
\bottomrule
\end{tabular}
\end{table}

\newpage
Finally, Table~\ref{skill heatmap descriptions} presents the \emph{SubScores} for each skill category in a heat-map style, 
accompanied by concise descriptions of strengths and weaknesses. 
This representation highlights both quantitative performance and qualitative patterns in GPT-5-thinking’s mathematical abilities.
\begin{table}[H]
\centering
\caption{Subskills with SubScores Heat Map and Descriptions}
\label{skill heatmap descriptions}
\begin{tabular}{
    p{4cm}m{2cm}
    p{6.5cm}
}
\toprule
\textbf{Subskill} & \textbf{SubScore (\%)} & \textbf{Description} \\
\midrule

\rowcolor{lightgray}\multicolumn{3}{l}{\textbf{Memory of Math Knowledge}} \\
\rowcolors{2}{white}{lightgray}
Taxonomy of Subjects                 & \scorecell{100} & Excellent recall of subject hierarchy and naming. \\
Key Mathematical Statements          & \scorecell{0}   & Struggles to restate definitions/theorems precisely. \\

\rowcolor{lightgray}\multicolumn{3}{l}{\textbf{Understanding Knowledge and Theories}} \\
\rowcolors{2}{white}{lightgray}
Understanding theorems               & \scorecell{0}   & Fails to demonstrate conceptual grasp of statements. \\
Applying classical results           & \scorecell{0}   & Unable to select/use appropriate theorems in context. \\

\rowcolor{lightgray}\multicolumn{3}{l}{\textbf{Computational and Analytical Skills}} \\
\rowcolors{2}{white}{lightgray}
Computational fluency                & \scorecell{0}   & Errors in complicated calculations. \\
Symbolic manipulation                & \scorecell{0}   & Algebraic/transform steps break down or misapplied. \\

\rowcolor{lightgray}\multicolumn{3}{l}{\textbf{Problem-Solving Framework}} \\
\rowcolors{2}{white}{lightgray}
Problem Interpretation \& Analysis   & \scorecell{50}  & Often parses problem correctly but not consistently. \\
Application of Strategies            & \scorecell{0}   & Strategy selection/translation to steps is missing. \\

\rowcolor{lightgray}\multicolumn{3}{l}{\textbf{Logical Thinking and Reasoning}} \\
\rowcolors{2}{white}{lightgray}
Constructing proofs                  & \scorecell{0}   & Cannot assemble coherent, rigorous argument. \\
Evaluating arguments                 & \scorecell{100} & Reliably spots flaws and checks soundness. \\

\rowcolor{lightgray}\multicolumn{3}{l}{\textbf{Writing and Presentation}} \\
\rowcolors{2}{white}{lightgray}
Structured proofs                    & \scorecell{100} & Clear structure when a proof is known. \\
Communicating to audiences           & \scorecell{0}   & Exposition lacks clarity/reader guidance. \\

\rowcolor{lightgray}\multicolumn{3}{l}{\textbf{Intuition}} \\
\rowcolors{2}{white}{lightgray}
Geometric intuition                  & \scorecell{0}   & Visual/shape-based reasoning not effectively used. \\
Plausibility checks                  & \scorecell{100} & Good at sanity checks and qualitative validation. \\
Numerical/ asymptotic estimation     & \scorecell{0}   & Weak sense for scalings, magnitudes, orders. \\
Aesthetic judgment                   & \scorecell{100} & Recognizes elegance/simplicity in solutions. \\

\rowcolor{lightgray}\multicolumn{3}{l}{\textbf{Meta Skills}} \\
\rowcolors{2}{white}{lightgray}
Thinking abstractly                  & \scorecell{33}  & Partial ability to generalize beyond instances. \\
Transferring methods                 & \scorecell{100} & Successfully reuses tools across domains. \\

\rowcolor{lightgray}\multicolumn{3}{l}{\textbf{Learning New Knowledge}} \\
\rowcolors{2}{white}{lightgray}
Reading advanced materials           & \scorecell{100} & Extracts key ideas from technical texts. \\
Self-directed learning               & \scorecell{0}   & Struggles to independently study. \\

\rowcolor{lightgray}\multicolumn{3}{l}{\textbf{Mathematical Modeling}} \\
\rowcolors{2}{white}{lightgray}
Translating real-world problems      & \scorecell{0}   & Difficulty formalizing assumptions/variables. \\

\rowcolor{lightgray}\multicolumn{3}{l}{\textbf{Generalization}} \\
\rowcolors{2}{white}{lightgray}
Identifying patterns                 & \scorecell{0}   & Patterns not reliably detected from examples. \\
Developing unifying theories         & \scorecell{33}  & Occasional synthesis, but lacks consistency. \\

\rowcolor{lightgray}\multicolumn{3}{l}{\textbf{Creativity}} \\
\rowcolors{2}{white}{lightgray}
Posing new problems                  & \scorecell{100} & Generates meaningful, well-posed questions. \\
Inventing methods                    & \scorecell{100} & Proposes original methods. \\
Lateral thinking                     & \scorecell{0}   & Rarely makes unconventional connections. \\
\bottomrule
\end{tabular}
\end{table}

\noindent\textbf{Overall Assessment:}  
GPT-5-thinking exhibits clear strengths across both memory- and reasoning-oriented abilities. Notably, it achieves perfect performance in recalling the \textbf{taxonomy of mathematical subjects} (memory), in \textbf{evaluating existing arguments} (reasoning), in \textbf{plausibility checks} (intuition), in \textbf{reading advanced materials} (learning), and in \textbf{problem posing} (creativity). These outcomes highlight competence in knowledge organization, error detection, intuitive judgment, comprehension of technical material, and problem formulation.

By contrast, GPT-5-thinking performs poorly in foundational areas including \textbf{understanding of knowledge and theories} (0\%), \textbf{computational and analytical skills} (0\%), \textbf{application of 
problem-solving strategies} (0\%), and \textbf{mathematical modeling} (0\%), with additional weakness in \textbf{generalization} (17\%). These deficiencies indicate persistent difficulties in understanding/applying theorems, executing symbolic computations, applying systematic strategies in problem solving, and developing unifying frameworks.

Taken together, GPT-5-thinking demonstrates strong capabilities in structuring mathematical domains, absorbing advanced material, spotting logical flaws, leveraging intuition, and posing problems. However, its higher-order reasoning remains fragile without stronger mastery of core computational and theoretical skills.  

\noindent\textbf{Comparison with o4-mini-high:}  
Relative to o4-mini-high, GPT-5-thinking shows measurable improvements in the \textbf{taxonomy of mathematical subjects} (memory), in \textbf{evaluating existing arguments} (reasoning), in \textbf{plausibility check} (intuition), in \textbf{writing proofs} (writing) and in \textbf{reading advanced materials} (learning). In most other categories, however, performance gains are marginal and the overall skill profile remains largely unchanged. This comparison should be interpreted with caution, as a full evaluation of o4-mini-high  is not available.

\newpage
\section{Future Directions}

We believe that the benchmark framework proposed in this paper establishes a principled basis for evaluating large language models (LLMs) in mathematical reasoning, shifting the focus from overall accuracy to a more interpretable, skill-based understanding. Rather than viewing LLMs as monolithic solvers, this decomposition allows us to assess which cognitive abilities—such as memory, symbolic fluency, logical structure, or creative strategy—are emerging and which remain weak or brittle. By grounding evaluation in human-like skill taxonomies, we aim to better understand not only what models can do, but how they reason and where that reasoning begins to break down.

\textbf{Systematic skill profiling across models} will be crucial for charting progress. Current LLMs display uneven capabilities—often excelling in algebraic manipulation and syntactic logic, while faltering on tasks requiring abstraction, creativity, or generalization. Such skill profiles can feed into radar-style visualizations, where each axis corresponds to a distinct mathematical competency, as the example shown in Figure \ref{fig:radar-chart}. These visual summaries not only help interpret models’ strengths and weaknesses but also serve as diagnostic tools for deployment contexts. Coupled with longitudinal tracking of models over time, we can better understand the trajectory of skill emergence, detect stagnation, and identify developmental bottlenecks that resist improvement despite scaling. We plan to test this framework by generating skill profiles for a variety of models, including both open-source and proprietary LLMs. For each mathematical skill, we will evaluate models on representative tasks and compare their performance. 

In addition, we can compare how performance varies across different training and fine-tuning strategies. This enables not only cross-sectional comparisons across model families, but also longitudinal studies that track how specific abilities develop—such as whether generalization emerges earlier in instruction-tuned models, or whether abstraction improves significantly only after reinforcement learning.

\begin{figure}[H]   \centering
\includegraphics[width=\linewidth]{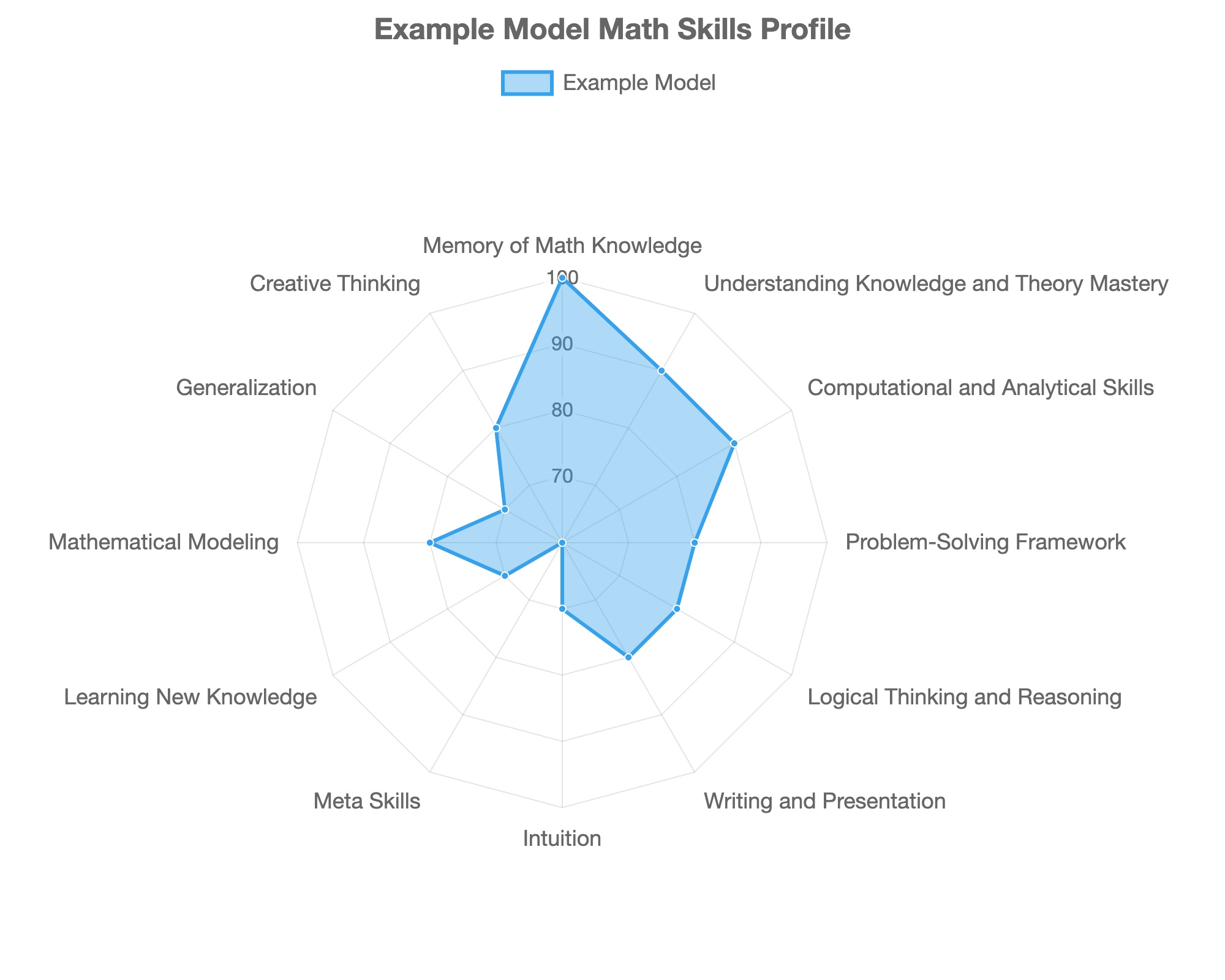}
    \caption{Example LLM Math Skills Radar Chart}
    \label{fig:radar-chart}
\end{figure}

\textbf{Auto-evaluation infrastructure} also deserves careful attention. While some benchmark modules—such as computational fluency, symbolic computation, and logic verification—can be reliably auto-graded, other modules require more nuanced assessment strategies. Tasks that involve constructing proofs, generating multiple solution paths, or synthesizing definitions from context may require hybrid pipelines that integrate rubric-based scoring, LLM-assisted grading heuristics, and human-in-the-loop review. Building such pipelines in a modular and interpretable way is key to scaling evaluation while maintaining reliability. Additionally, publishing these pipelines can promote reproducibility and encourage community benchmarking beyond academic labs and foundation model developers.

Some subjective problems are inherently difficult to grade—especially those involving mathematical exposition, abstraction, or creative reasoning. To address this, we will experiment with prompt-chained LLM grading, where one model proposes a solution and another provides a structured critique based on rubric-aligned criteria. These auto-graders will be calibrated against human-graded gold standards to measure alignment and detect blind spots. In addition, for high-stakes or ambiguous tasks, we will design interfaces for expert-in-the-loop review, allowing humans to overrule, annotate, or co-grade with the model. By integrating both automatic and human judgment in a layered fashion, we aim to balance scalability with rigor and ensure that evaluation remains faithful to the underlying reasoning quality, not just surface correctness.

\textbf{Crowd-sourcing problems} can further enrich the benchmark by diversifying both content and skill coverage. Through contributions from mathematically skilled volunteers, we can source novel, high-quality problems spanning a wide range of difficulty levels and contexts. To preserve diagnostic value, each contributed problem will be annotated with both its \emph{skill category} (e.g., symbolic manipulation, logical reasoning, creative problem solving) and its \emph{subject taxonomy}—a hierarchical classification of the mathematical domain (e.g., algebra, geometry, combinatorics). This dual tagging enables fine-grained analysis across the skill–subject matrix and facilitates the identification of underrepresented areas.

\textbf{Building an extensible platform} will enable researchers to easily plug in new tasks, rubrics, and grading modules without modifying the core infrastructure. By open-sourcing the framework and encouraging community contributions, we aim to create a flexible, modular ecosystem that supports ongoing innovation in benchmark design and evaluation methodology. This will help establish shared standards for assessing mathematical capabilities in LLMs that are acceptable by both the mathematics and technology communities.

Ultimately, our goal is to bridge the gap between memory-level fluency and genuine mathematical understanding and insight. We hope that by embedding this skill-centric perspective into model training, evaluation, and deployment, we can push toward systems that do not merely imitate solutions, but engage in structured, interpretable, and flexible reasoning that reflects a deeper grasp of mathematics. This shift, from solving problems to understanding them, marks a necessary evolution in the role of LLMs in mathematical reasoning and scientific discovery. Viewed in this light, GAUSS represents not only a benchmark, but also a cornerstone in the roadmap toward achieving general artificial intelligence in mathematics and science.

\newpage

\bibliographystyle{plainnat} 
\bibliography{references}

\newpage
\appendix
\section{Evaluation Pipeline}
We present a structured evaluation framework for benchmarking the mathematical reasoning ability of large language models (LLMs), with a focus on \texttt{GPT-5-thinking}. Each task is processed through a well-defined pipeline that records the problem, generates a model response, compares it with a standard solution, and evaluates the result based on a rubric.

We use the following environments to organize the content:

\begin{enumerate}
    \item \textbf{Problem Statement} — \texttt{problembox} \textcolor{cyan!40!black}{(Blue box)}  
    Each task begins with a well-defined mathematical problem enclosed in a blue-framed \texttt{problembox}.

    \textbf{Usage:}
    \begin{verbatim}
\begin{problembox}{Problem Title}
Problem description goes here.
\end{problembox}
    \end{verbatim}
\begin{problembox}{Problem Title}
    Problem description goes here.
\end{problembox}

    \item \textbf{Model Response (GPT-5-thinking)} — \texttt{GPT-5-thinking-response} environment  
    The answer generated by \texttt{o4-mini-high} is typeset in a proof-style environment to simulate a student's written work. It avoids colored boxes for a cleaner academic appearance.

    \textbf{Usage:}
    \begin{verbatim}
\begin{GPT-5-thinking-response}
Model-generated response goes here.
\end{GPT-5-thinking-response}
    \end{verbatim}

\begin{GPT-5-thinking-response}
Model-generated response goes here.
\end{GPT-5-thinking-response}
    \item \textbf{Standard Solution} — \texttt{solution} environment  
    The correct or ideal solution is presented using a similar proof-style layout for comparison with the model output.

    \textbf{Usage:}
    \begin{verbatim}
\begin{solution}
Standard solution goes here.
\end{solution}
    \end{verbatim}
\begin{solution}
Standard solution goes here.
\end{solution}
    \item \textbf{Grading Rubric} — \texttt{rubricbox} \textcolor{gray!60!black}{(Gray box)}  
    The evaluation rubric is provided in a blue-framed \texttt{rubricbox}, outlining the criteria for full and partial credit.

    \textbf{Usage:}
    \begin{verbatim}
\begin{rubricbox}
1. States and proves the correct minimum value — 2 point. \\ 
2. Gives a correct construction — 2 point. 
\end{rubricbox}
    \end{verbatim}

\begin{rubricbox}
1.~States and proves the correct lower bound — 2 point. \\ 
2.~Gives a correct construction — 2 point. 
\end{rubricbox}

    \item \textbf{Scoring and Feedback}
    \begin{itemize}
        \item \texttt{scorebox} \textcolor{teal!60!black}{(Teal box)} — Records the numerical score assigned.  
        \item \texttt{evalbox} \textcolor{red!70!black}{(Red box)} — Contains qualitative evaluator feedback.
    \end{itemize}

    \textbf{Usage:}
    \begin{verbatim}
\begin{scorebox}
Score: 0 / 3
\end{scorebox}

\begin{evalbox}
The model correctly applied the s-cobordism theorem...
\end{evalbox}
    \end{verbatim}

\begin{scorebox}
Score: 0 / 3
\end{scorebox}

\begin{evalbox}
The model correctly applied the s-cobordism theorem...
\end{evalbox}
\end{enumerate}

\noindent
All box environments use \texttt{\footnotesize} font and 90\% width for clean layout. The proof-style environments for responses and solutions preserve a natural mathematical flow without decorative frames.

\begin{center}
\textbf{Problem → GPT-5-thinking Response → Solution → Rubric →  Score + Evaluation.  }
\end{center}

This pipeline enables consistent analysis and evaluation of LLM-generated solutions.

\newpage
\section{Model Responses and Evaluations}
\addtocontents{toc}{\protect\setcounter{tocdepth}{1}}
\subsection{Memory of Math Knowledge}
\subsubsection{Taxonomy of Subjects}
\ProblemSection{\green{Problem 1a1}}

\begin{problembox}[Proposed by Qiuyu Ren]
Floer homology is a powerful tool for studying symplectic geometry and low-dimensional topology. 
\begin{itemize}
    \item Please name three distinct Floer homology theories that play important roles in low-dimensional topology.
    \item  For each theory, provide an example of a significant topological application. Your applications should not mention explicitly the Floer homology theory in the statement.
\end{itemize}

\pass
\end{problembox}
\begin{GPT-5-thinking-response}
\textbf{Instanton Floer homology.}  
\emph{Application:} For any nontrivial knot \(K\subset S^3\), every non-meridional Dehn surgery on \(K\) produces a 3-manifold with nontrivial fundamental group (i.e., no such surgery yields a homotopy 3-sphere). (Kronheimer--Mrowka, ``Property P for knots,'' 2004.)

\medskip

\textbf{Monopole (Seiberg--Witten) Floer homology.}  
\emph{Application:} There exist topological manifolds in every dimension \(n\ge 5\) that admit no simplicial triangulation. (Manolescu’s disproof of the high-dimensional triangulation conjecture via Pin(2)-equivariant methods, 2013.)

\medskip

\textbf{Heegaard Floer homology.}  
\emph{Application:} If a Dehn surgery on a knot in \(S^3\) yields a lens space, then the knot is fibered. (Ozsváth--Szabó framework with results of Ghiggini and Ni, 2007--2008.)

\end{GPT-5-thinking-response}

\begin{solution}
\textbf{Heegaard Floer homology}. Application: branched double covers of alternating knots in $S^3$ do not admit co-orientable taut foliations.

\textbf{Instanton Floer homology}. Application: Khovanov homology detects the unknot.

\textbf{Monopole Floer homology}. Application: Disproof of the triangulation conjecture.
\end{solution}

\begin{rubricbox}
1 point for naming three Floer theories fundamental in low-dimensional topology.\\
2 points for giving correct topological applications.
\end{rubricbox}

\begin{scorebox}
3/3
\end{scorebox}

\begin{evalbox}
Correct theories and applications.
\end{evalbox}

\ProblemSection{\green{Problm 1a2}}
\begin{problembox}[Proposed by Jiaxin Zhang]

Liouville quantum gravity (LQG) is a central topic in modern probability theory. List several of its subfields and, for each subfield:
\begin{enumerate}
  \item[i.] Name a central concept or structure;
  \item[ii.] State one foundational result from that subfield.
\end{enumerate}
  \pass
\end{problembox}
\begin{GPT-5-thinking-response}
Great question. Here’s a compact map of major LQG subfields—each with a central object and one canonical theorem/result.

\begin{enumerate}
  \item \textbf{Gaussian multiplicative chaos (GMC) \& the LQG measure}
    \begin{enumerate}
      \item[i.] \emph{Central concept:} The \(\gamma\)-Liouville area measure \( \mu_h = e^{\gamma h}\,dx\,dy \) defined from a Gaussian Free Field via GMC.
      \item[ii.] \emph{Foundational result:} The KPZ relation rigorously linking Euclidean and quantum scaling exponents, and the construction/coordinate-change rule for the LQG measure. 
    \end{enumerate}

  \item \textbf{Quantum surfaces (disks, spheres, wedges, cones)}
    \begin{enumerate}
      \item[i.] \emph{Central concept:} Equivalence classes of \((D,h)\) modulo additive constants and conformal maps, with canonical families: quantum disk/sphere/wedge/cone.
      \item[ii.] \emph{Foundational result:} Duplantier–Miller–Sheffield (DMS) construct these surfaces and their operations (e.g., welding, Poissonian decompositions) in the \(\gamma\in(0,2)\) regime.
    \end{enumerate}

  \item \textbf{SLE/LQG coupling (welding \& the quantum zipper)}
    \begin{enumerate}
      \item[i.] \emph{Central concept:} Conformal welding of quantum boundaries; interfaces are SLE with \(\kappa=\gamma^2\) (and \(\kappa'=16/\gamma^2\) for space-filling variants).
      \item[ii.] \emph{Foundational result:} Sheffield’s “quantum gravity zipper’’: welding two LQG surfaces along boundary lengths produces an SLE interface, with precise conformal-covariance and symmetry properties. 
    \end{enumerate}

  \item \textbf{Mating-of-trees / peanosphere}
    \begin{enumerate}
      \item[i.] \emph{Central concept:} Encode an LQG surface decorated by a space-filling SLE as a correlated two-dimensional Brownian motion (“CRT–CRT’’ mating).
      \item[ii.] \emph{Foundational result:} The DMS mating-of-trees theorem gives a canonical Brownian encoding of \(\gamma\)-LQG with SLE\(_{\kappa'}\) decoration. 
    \end{enumerate}

  \item \textbf{Imaginary geometry}
    \begin{enumerate}
      \item[i.] \emph{Central concept:} GFF flow lines \(e^{i(h/\chi+\theta)}\) realizing SLE\(_\kappa\) curves and their interactions.
      \item[ii.] \emph{Foundational result:} Existence/uniqueness of flow lines and identification with SLE\(_\kappa\) (and counterflow with SLE\(_{16/\kappa}\)); reversibility and interaction theorems. 
    \end{enumerate}

  \item \textbf{Metric structure \& QLE}
    \begin{enumerate}
      \item[i.] \emph{Central concept:} The LQG metric \(D_h\) (distance) obtained via Quantum Loewner Evolution and Liouville first-passage percolation limits.
      \item[ii.] \emph{Foundational result:} Existence and uniqueness of the \(\gamma\)-LQG metric for all \(\gamma\in(0,2)\); at \(\gamma=\sqrt{8/3}\), QLE constructs the metric and matches Brownian map geometry.
    \end{enumerate}

  \item \textbf{Random planar maps \(\longleftrightarrow\) LQG}
    \begin{enumerate}
      \item[i.] \emph{Central concept:} Scaling limits of uniform random maps (quadrangulations, etc.) as metric-measure spheres.
      \item[ii.] \emph{Foundational result:} The Brownian map is the universal scaling limit of large random quadrangulations (and related classes); subsequently identified with \(\sqrt{8/3}\)-LQG. 
    \end{enumerate}

  \item \textbf{Liouville CFT \& structure constants}
    \begin{enumerate}
      \item[i.] \emph{Central concept:} Probabilistic Liouville conformal field theory built from the GFF; correlation functions and structure constants.
      \item[ii.] \emph{Foundational result:} Probabilistic proof of the DOZZ formula for LCFT three-point structure constants, via GMC.
    \end{enumerate}

  \item \textbf{Diffusions on LQG (Liouville Brownian motion)}
    \begin{enumerate}
      \item[i.] \emph{Central concept:} Brownian motion associated to the random Riemannian metric \(e^{\gamma h}(dx^2+dy^2)\).
      \item[ii.] \emph{Foundational result:} Construction of Liouville Brownian motion as a Feller diffusion with \(\mu_h\) as invariant measure for all \(\gamma<2\). 
    \end{enumerate}
\end{enumerate}

If you want, I can turn this into a one-page LaTeX handout with short references and the $(\gamma,\kappa)$ dictionary summarized.

\end{GPT-5-thinking-response}
\begin{solution}
\begin{enumerate}
\item \textbf{GMC \& the LQG measure}
\begin{enumerate}
\item[i.] \emph{Concept:} The $\gamma$-LQG area measure $\mu_h := e^{\gamma h}\,dx\,dy$ from a GFF via Gaussian multiplicative chaos.
\item[ii.] \emph{Theorem:}the LQG measure is well-defined and obeys the conformal coordinate-change rule for $\gamma\in(0,2)$.
\end{enumerate}

\item \textbf{Quantum surfaces \& SLE welding}
\begin{enumerate}
\item[i.] \emph{Concept:} Quantum disks/spheres/wedges/cones as equivalence classes of $(D,h)$ modulo additive constants and conformal maps; boundary length comes from LQG.
\item[ii.] \emph{Theorem:} (Quantum zipper / conformal welding) Welding LQG boundaries by quantum length produces SLE interfaces with $\kappa=\gamma^2$ (and space-filling $\kappa'=16/\gamma^2$), with precise conformal-covariance and symmetry.
\end{enumerate}

\item \textbf{Mating-of-trees / peanosphere}
\begin{enumerate}
\item[i.] \emph{Concept:} Encode an LQG surface decorated by a space-filling SLE as a correlated 2D Brownian motion (a “CRT–CRT” mating).
\item[ii.] \emph{Theorem:} (DMS mating-of-trees) $\gamma$-LQG with SLE$_{\kappa'}$ decoration has a canonical Brownian encoding that determines the surface and curve, and conversely.
\end{enumerate}

\item \textbf{Metric structure (QLE) \& Brownian map}
\begin{enumerate}
\item[i.] \emph{Concept:} The LQG metric $D_h$ built via Quantum Loewner Evolution / Liouville FPP, yielding a random metric-measure surface.
\item[ii.] \emph{Theorem:} (Existence/uniqueness \& $\sqrt{8/3}$ identification) The $\gamma$-LQG metric exists and is unique for $\gamma\in(0,2)$; at $\gamma=\sqrt{8/3}$, the resulting metric-measure sphere agrees with the Brownian map.
\end{enumerate}
\end{enumerate}

\end{solution}
\begin{rubricbox}

Got 3/3 if it correctly lists at least three points.

\end{rubricbox}
\begin{scorebox}
    3/3
\end{scorebox}
\begin{evalbox}
    It correctly identifies at least 3 major SLE subfields, each with a central concept and a foundational result.
\end{evalbox}

\subsubsection{Key Mathematical Statements}
\ProblemSection{\red{Problem 1b1 }}
\begin{problembox}[Proposed by Jiaxin Zhang]
 Schramm–Loewner Evolution (SLE) is a vibrant field in modern probability theory. Please address the following two parts:
\begin{enumerate}
  \item 
    Provide the precise definition of SLE, emphasizing its construction via the Loewner equation driven by a random function \(W_t = \sqrt{\kappa}B_t\) where $B_t$ is the standard Brownian motion.
      \item  Please state the transformation law of the driving functions of Loewner chain under conformal Maps. 
    
      \fail
\end{enumerate}    
\end{problembox}
\begin{GPT-5-thinking-response}
Here’s a clean, standard formulation that matches the way SLE is defined and used in the literature.

1) Definition of $\mathrm{SLE}_\kappa$ via Loewner’s equation

Chordal case (from $0$ to $\infty$ in $\mathbb{H}$)

Let $W:[0,\infty)\to\mathbb{R}$ be a continuous function (the *driving function*). For each $z\in\overline{\mathbb{H}}$, define $g_t(z)$ as the maximal solution (in $t$) to the Loewner ODE

$$
\partial_t g_t(z)=\frac{2}{g_t(z)-W_t},\qquad g_0(z)=z,
$$

with swallowing time

$$
\tau(z):=\sup\{\,t\ge 0:\ \min_{s\in[0,t]}|g_s(z)-W_s|>0\,\}.
$$

Set the *hulls* $K_t:=\{z\in\overline{\mathbb{H}}:\ \tau(z)\le t\}$ and note that

$$
g_t:\mathbb{H}\setminus K_t\longrightarrow\mathbb{H}
$$

is conformal and satisfies the hydrodynamic normalization

$$
g_t(z)=z+\frac{2t}{z}+O\!\left(\frac{1}{z^2}\right)\quad\text{as }z\to\infty.
$$

(Equivalently, $\mathrm{hcap}(K_t)=2t$.)

**Chordal $\mathrm{SLE}_\kappa$** is the random Loewner chain $(K_t)_{t\ge 0}$ (equivalently, $(g_t)_{t\ge 0}$) obtained by taking the driving function to be

$$
W_t=\sqrt{\kappa}\,B_t,\qquad t\ge 0,
$$

where $(B_t)_{t\ge 0}$ is standard one-dimensional Brownian motion with $B_0=0$.
We then say “$(K_t)$ is chordal $\mathrm{SLE}_\kappa$ in $\mathbb{H}$ from $0$ to $\infty$.”

*(Remarks: One often also speaks of the random curve $\gamma$ — the Loewner trace — whose filled-in hull is $K_t$; this exists for all $\kappa$ and is simple for $\kappa\le 4$, etc., but the definition above is in terms of the Loewner chain.)*

Radial case (from $1$ to $0$ in $\mathbb{D}$)

Let $W:[0,\infty)\to\mathbb{R}$ be continuous and define $g_t$ by the *radial* Loewner ODE

$$
\partial_t g_t(z)\;=\;-\,g_t(z)\,\frac{e^{iW_t}+g_t(z)}{e^{iW_t}-g_t(z)},\qquad g_0(z)=z,
$$

with normalization $g_t(0)=0$ and $g_t'(0)=e^{t}$. For $W_t=\sqrt{\kappa}\,B_t$ this chain is **radial $\mathrm{SLE}_\kappa$** in $\mathbb{D}$ from $1$ to $0$.

---

2) How driving functions transform under conformal maps

The transformation is most cleanly stated in *conformal coordinates adapted at each time*. Below is the standard chordal statement; the radial version is analogous (replace $\mathbb{H},\infty$ by the appropriate simply connected domain and interior target, and half-plane capacity by the relevant capacity).

General conformal covariance (chordal)

Let $(g_t)$ be a chordal Loewner chain in $\mathbb{H}$ driven by a continuous $W_t$, with hulls $K_t$. Let $\phi:\mathbb{H}\to\mathbb{H}$ be conformal, and let $\tilde K$ be the image chain $\tilde K_{u(t)}:=\phi(K_t)$. Denote by $\tilde g_u:\mathbb{H}\setminus \tilde K_u\to\mathbb{H}$ the hydrodynamically normalized maps of the image chain. Define the *time-dependent conjugating map*

$$
\phi_t\ :=\ \tilde g_{u(t)}\circ \phi \circ g_t^{-1}:\ \mathbb{H}\longrightarrow\mathbb{H}.
$$

Then the image chain $(\tilde g_u)$ is again a chordal Loewner chain, driven by $\tilde W_u$ related to $W_t$ by

$$
\boxed{\quad \tilde W_{\,u(t)}\ =\ \phi_t\big(W_t\big),\qquad
\frac{du}{dt}\ =\ \big(\phi_t'(W_t)\big)^{\!2}. \quad}
$$

In words: *compose the instantaneous driving point with the current conformal change of coordinates $\phi_t$, and reparametrize time by the square of the derivative at the driving point* (reflecting the conformal covariance of half-plane capacity).

Affine automorphisms as a special case

If $\phi(z)=a z + b$ with $a>0$, $b\in\mathbb{R}$ (so $\phi$ preserves $\infty$ and the hydrodynamic normalization), then $\phi_t\equiv\phi$ and the rule simplifies to

$$
\boxed{\quad \tilde W_t\ =\ a^{-1}\big(W_{a^2 t}-b\big). \quad}
$$

This encodes translation invariance ($b$) and scale covariance ($a$) of the chordal Loewner equation.

Consequence for SLE

If $W_t=\sqrt{\kappa}B_t$ (chordal $\mathrm{SLE}_\kappa$ in $\mathbb{H}$ from $0$ to $\infty$), then for any conformal $\phi$ between simply connected domains sending the start/target boundary points appropriately, the image chain $\phi(K_t)$, reparametrized by $u$ as above, is again $\mathrm{SLE}_\kappa$, i.e.

$$
\tilde W_u\ \stackrel{d}{=}\ \sqrt{\kappa}\,\tilde B_u,
$$

with $(\tilde B_u)$ a standard Brownian motion (now in the new capacity time $u$). This is the conformal covariance of SLE.
    
\end{GPT-5-thinking-response}

\begin{solution}
The chordal Schramm-Loewner Evolution (SLE ${ }_\kappa$ ) is defined as follows:
- Consider the upper half-plane $\mathbb{H}=\{z \in \mathbb{C}: \Im z>0\}$.
- For a continuous real-valued function $W:[0, \infty) \rightarrow \mathbb{R}$, the Loewner chain $\left\{g_t(z)\right\}_{t \geq 0}$ is given by the solution to the Loewner ODE

$$
\frac{d}{d t} g_t(z)=\frac{2}{g_t(z)-W_t}, \quad g_0(z)=z, \quad z \in \mathbb{H} .
$$

- For each $t$, the solution $g_t$ is a conformal map from a simply connected domain $H_t \subseteq \mathbb{H}$ onto $\mathbb{H}$, with hydrodynamic normalization

$$
g_t(z)=z+\frac{2 t}{z}+o\left(\frac{1}{z}\right), \quad z \rightarrow \infty .
$$

Now let $\left\{B_t\right\}_{t \geq 0}$ be a standard one-dimensional Brownian motion. For a parameter $\kappa \geq 0$, define the driving function

$$
W_t=\sqrt{\kappa} B_t .
$$

The random family of conformal maps $\left\{g_t\right\}$ generated by this Brownian-driven Loewner equation is called the chordal Schramm-Loewner Evolution with parameter $\kappa$, denoted $\mathrm{SLE}_\kappa$. The associated random growing compact sets $K_t=\mathbb{H} \backslash H_t$ form the SLE hulls (with boundary curves known as the SLE trace when well-defined).

Suppose the driving function $W_t \in \mathbb{R}$ evolves according to the stochastic differential equation

$$
d W_t=\sqrt{\kappa} d B_t+b\left(W_t ; \Psi_t\left(W_1\right), \ldots, \Psi_t\left(W_n\right)\right) d t,
$$

where $B_t$ is standard Brownian motion, and $b$ is a drift term depending on the conformal images of marked points under the time-dependent conformal map

$$
\Psi_t:=\widetilde{g}_t \circ \Psi \circ g_t^{-1},
$$

with $\Psi$ conformal near $W_0 \in \mathbb{R}$ and $g_t, \widetilde{g}_t$ the Loewner flows before and after coordinate change, respectively.

Define the transformed driving function $\widetilde{W}_t:=\Psi_t\left(W_t\right)$, and introduce the reparameterized time

$$
s(t):=\int_0^t\left|\Psi_u^{\prime}\left(W_u\right)\right|^2 d u
$$

Then the time-changed process $\widetilde{W}_s:=\widetilde{W}_{t(s)}$ satisfies the stochastic differential equation

$$
d \widetilde{W}_s=\sqrt{\kappa} d B_s+\frac{b\left(W_s ; \Psi_{t(s)}\left(W_1\right), \ldots, \Psi_{t(s)}\left(W_n\right)\right)}{\Psi_{t(s)}^{\prime}\left(W_s\right)} d s+\frac{\kappa-6}{2} \cdot \frac{\Psi_{t(s)}^{\prime \prime}\left(W_s\right)}{\left[\Psi_{t(s)}^{\prime}\left(W_s\right)\right]^2} d s
$$
\end{solution}
\begin{rubricbox}
1.Correct definition of SLE.  

2.Correct transformation law of the driving function (drift term).  

\medskip
\noindent\textit{Reference:}  
\href{https://arxiv.org/abs/2505.16093}{\emph{Multiple chordal SLE($\kappa$) and quantum Calogero–Moser system}}~\citep{zhang2025multiplechordalsle}.
\end{rubricbox}
\begin{scorebox}
1/2
\end{scorebox}

\begin{evalbox}
The response did not provide the explicit form of the drift term after the conformal transformation.
\end{evalbox}

\ProblemSection{\red{Problem 1b2}}
\begin{problembox}[Proposed by Jiaxin Zhang]
Please explain the commutation relations for multiple chordal SLE($\kappa$).

\fail
\end{problembox}

\begin{GPT-5-thinking-response}
\textbf{Set-up.}
Let $x_1<\cdots<x_n$ be marked boundary points on $\mathbb{R}$ and write
\[
\mathcal A_i \;=\; \frac{\kappa}{2}\,\partial_{x_i}^2 \;+\; \sum_{k\neq i}\frac{2}{x_k-x_i}\,\partial_{x_k}
\qquad (i=1,\dots,n),
\]
the infinitesimal generator corresponding to growing an infinitesimal chordal Loewner slit at $x_i$ (half-plane capacity $2\,dt$).

A (positive, smooth) function $Z(x_1,\dots,x_n)$ is called a \emph{partition function} if it is Möbius covariant with boundary weight
\[
h=\frac{6-\kappa}{2\kappa}
\qquad\text{and solves, for each }i,
\]
the level-two null-state (BPZ) PDE
\[
\boxed{\quad
\frac{\kappa}{2}\,\partial_{x_i}^2 Z
\;+\;\sum_{k\neq i}\frac{2}{x_k-x_i}\,\partial_{x_k}Z
\;+\;\sum_{k\neq i}\frac{2h}{(x_k-x_i)^2}\,Z
\;=\;0\quad}
\tag{$\star_i$}
\]

\textbf{Commutation relations (Dub\'edat).}
Assume $Z$ satisfies equation * and Möbius covariance. Define the (Doob–Girsanov tilted) operators
\[
\mathcal L_i f
\;:=\; Z^{-1}\,\mathcal A_i\,(Zf)
\;=\; \mathcal A_i f \;+\; \kappa\,(\partial_{x_i}\log Z)\,\partial_{x_i} f
\qquad\text{(using equation * to cancel the zero-order term).}
\]
Then for all $i\neq j$,
\[
\boxed{\ [\mathcal L_i,\mathcal L_j]\,f \;=\; 0\quad\text{for all smooth }f, \ }
\]
i.e.\ the infinitesimal evolutions \emph{commute}. Equivalently, if one first grows the $i$th slit for an infinitesimal time and then the $j$th (or in the reverse order), the resulting law of the configuration is the same (up to deterministic time-change). This is the precise meaning of \emph{commutation relations} for multiple chordal SLE$_\kappa$.

\medskip
\textbf{Driving SDE formulation (equivalent).}
Under the multiple-SLE measure determined by $Z$, the driving points solve
\[
dX_t^i \;=\; \sqrt{\kappa}\,dB_t^i \;+\; \Bigg(\kappa\,\partial_{x_i}\log Z(X_t)
\;+\;\sum_{k\neq i}\frac{2}{X_t^i-X_t^k}\Bigg)dt,
\]
and the corresponding Markov generators are the commuting $\{\mathcal L_i\}_{i=1}^n$ above. The commutation expresses that the joint law does not depend on the order in which the curves are explored.

\end{GPT-5-thinking-response}

\begin{solution}
Please refer to the paper \href{https://arxiv.org/abs/2505.16093}{ Multiple chordal SLE($\kappa$) and quantum Calogero-Moser system }

\textbf{Partition functions and null vector equations.}  
Let $x_1,\dots,x_n\in\mathbb R$ be distinct boundary points.  
A smooth positive function $Z(x_1,\dots,x_n)$ is called a \emph{partition function} if it is Möbius covariant with boundary weight
\[
h=\frac{6-\kappa}{2\kappa},
\]
and for each $i=1,\dots,n$ satisfies the level-two null vector (BPZ) equation
\[
\frac{\kappa}{2}\,\partial_{x_i x_i} Z
+\sum_{k\neq i}\frac{2}{x_k-x_i}\,\partial_{x_k}Z
+\sum_{k\neq i}\frac{2h}{(x_k-x_i)^2}\,Z = 0.
\tag{$\star_i$}
\]

\medskip
\textbf{Commutation relations.}  

Given such a partition function $Z$, define for each $i$
\[
\mathcal L_i
= \frac{\kappa}{2}\,\partial_{x_i x_i}
+ \kappa\,(\partial_{x_i}\log Z)\,\partial_{x_i}
+ \sum_{k\neq i}\frac{2}{x_k-x_i}\,\partial_{x_k}.
\]

The operators $\mathcal L_i$ describe the infinitesimal evolution when a slit is grown at $x_i$.  

Probabilistically, commutation relations say that if one first grows the slit at $x_i$ for an infinitesimal time and then at $x_j$, or reverses the order, the resulting law of the configuration is the same (up to reparametrization).

Dub\'edat’s commutation theorem asserts that for all $i\neq j$,
\[
[\mathcal L_i,\mathcal L_j]
= \frac{4}{(x_j-x_i)^2}\,\bigl(\mathcal L_j - \mathcal L_i\bigr).
\]

\medskip
\textbf{Driving SDE formulation.}  
Under the probability measure determined by $Z$, the driving processes $(X_t^1,\dots,X_t^n)$ satisfy
\[
dX_t^i \;=\; \sqrt{\kappa}\,dB_t^i
+ \left(\kappa\,\partial_{x_i}\log Z(X_t)
+ \sum_{k\neq i}\frac{2}{X_t^i - X_t^k}\right)\,dt,
\qquad i=1,\dots,n,
\]
where $(B_t^i)_{i=1}^n$ are independent standard Brownian motions.  
The infinitesimal generator of the $i$-th coordinate is precisely $\mathcal L_i$, and the commutation relation shows that the joint law is independent of the order in which the curves are explored.

\end{solution}
\begin{rubricbox}
\begin{enumerate}
  \item Explain the commutation relation from the order of growth of two points.  
  \item Computes the commutator of the generators correctly.  
  \item States the null-vector equations accurately.  
\end{enumerate}
\end{rubricbox}

\begin{scorebox}
2/3
\end{scorebox}

\begin{evalbox}
The response didn't state the correct commutation relations of the generators.

$$\left[\mathcal{L}_i, \mathcal{L}_j\right]=\frac{4}{\left(x_i-x_j\right)^2}\left(\mathcal{L}_j-\mathcal{L}_i\right)
$$
\end{evalbox}

\subsection{Understanding Knowledge and Theory Mastery}

\subsubsection{Understanding theorems}  

\ProblemSection{\red{Problem 2a1}}

\begin{problembox}[Proposed by Jiaxin Zhang]
Derive the null vector equations from commutation relations of multiple chordal SLE($\kappa$)  

\fail
\end{problembox}

\begin{GPT-5-thinking-response}
\textbf{Set-up.}
Let $x_1<\cdots<x_n$ be marked boundary points on $\R$, and write the ``bare'' Loewner operators
\[
\mathcal A_i \;=\; \frac{\kappa}{2}\,\partial_{x_i}^2 \;+\; \sum_{k\neq i}\frac{2}{x_k-x_i}\,\partial_{x_k}\qquad (i=1,\dots,n).
\]
Let $Z=Z(x_1,\dots,x_n)>0$ be a smooth, Möbius--covariant partition function with boundary weight $h$, i.e.
\[
Z(\phi(x_1),\dots,\phi(x_n)) \;=\; \prod_{m=1}^n |\phi'(x_m)|^{h}\, Z(x_1,\dots,x_n)\quad\text{for all }\phi\in\mathrm{PSL}_2(\R).
\]
Tilt the dynamics by $Z$ (Doob transform) and define
\[
\mathcal L_i f \;:=\; Z^{-1}\,\mathcal A_i\,(Zf).
\]
A direct expansion gives
\[
\mathcal L_i \;=\; \frac{\kappa}{2}\,\partial_{x_i}^2 \;+\; \sum_{k\neq i}\frac{2}{x_k-x_i}\,\partial_{x_k}
\;+\;\kappa\,(\partial_{x_i}\log Z)\,\partial_{x_i} \;+\; V_i(x)\,,
\]
where the zero-th order term is
\[
V_i(x)\;=\;\frac{\kappa}{2}\,\partial_{x_i}^2\log Z \;+\; \sum_{k\neq i}\frac{2}{x_k-x_i}\,\partial_{x_k}\log Z \;+\; \frac{\kappa}{2}\big(\partial_{x_i}\log Z\big)^2.
\]

\medskip
\textbf{Commutation relations.}
The multiple-SLE commutation principle (Dub\'edat) is that \emph{the infinitesimal evolutions commute}:
\[
[\mathcal L_i,\mathcal L_j]\;=\;0\qquad\text{for all }i\neq j.
\]
One computes the bare commutator
\[
[\mathcal A_i,\mathcal A_j]\;=\;\frac{4}{(x_j-x_i)^2}\,\big(\partial_{x_j}-\partial_{x_i}\big),
\]
so the extra terms in $\mathcal L_i,\mathcal L_j$ must precisely cancel this vector field singularity. Tracking the coefficient of $(x_j-x_i)^{-2}$ in $[\mathcal L_i,\mathcal L_j]$ yields
\[
\partial_{x_j} V_i - \partial_{x_i} V_j \;=\; -\,\frac{4h}{(x_j-x_i)^3}\,,
\]
which integrates (using permutation symmetry and Möbius covariance/Ward identities) to
\[
V_i(x)\;=\;-\sum_{k\neq i}\frac{2h}{(x_k-x_i)^2}\,.
\]
Equivalently, the vanishing of the zero-th order term in the conjugated family
\[
\boxed{\ \ \mathcal T_i \;:=\; Z\,\mathcal L_i\,Z^{-1}
\;=\;\frac{\kappa}{2}\,\partial_{x_i}^2 \;+\; \sum_{k\neq i}\frac{2}{x_k-x_i}\,\partial_{x_k}
\;-\;\sum_{k\neq i}\frac{2h}{(x_k-x_i)^2}\ \ }
\]
is \emph{exactly} the condition that $Z$ solves the level-two BPZ/null-vector equations:
\[
\boxed{\quad
\frac{\kappa}{2}\,\partial_{x_i}^2 Z
\;+\;\sum_{k\neq i}\frac{2}{x_k-x_i}\,\partial_{x_k}Z
\;-\;\sum_{k\neq i}\frac{2h}{(x_k-x_i)^2}\,Z \;=\;0\,,\qquad i=1,\dots,n. \quad}
\]

\medskip
\textbf{Fixing the weight $h$.}
The Möbius covariance of $Z$ (global conformal Ward identities)
\[
\sum_m \partial_{x_m}Z=0,\qquad
\sum_m \big(x_m\partial_{x_m}+2h\big)Z=0,\qquad
\sum_m \big(x_m^2\partial_{x_m}+2h\,x_m\big)Z=0
\]
is compatible with the commuting family $\{\mathcal T_i\}$ only when
\[
\boxed{\quad h=\frac{6-\kappa}{2\kappa}\, . \quad}
\]
With this choice, the operators $\{\mathcal T_i\}$ \emph{commute}, and the Doob-transformed generators $\{\mathcal L_i\}$ commute as required.

\medskip
\textbf{Conclusion.}
Thus, the requirement that the infinitesimal multiple-SLE evolutions commute, together with Möbius covariance, forces the partition function $Z$ to satisfy the \emph{level-two null-vector/BPZ equations} above (with $h=(6-\kappa)/(2\kappa)$). Conversely, any positive Möbius-covariant $Z$ solving these PDEs yields a commuting family and hence a well-defined multiple chordal SLE$_\kappa$ law.

\end{GPT-5-thinking-response}

\begin{solution}
    Please refer to the paper \href{Commutation relations for SLE}{https://arxiv.org/abs/math/0411299}

We have seen natural examples where two SLEs could be grown in a common domain in a consistent fashion. In this section, discuss necessary infinitesimal conditions. We shall define a "global" commutation condition, of geometric nature, and express its consequence in terms of infinitesimal generators, which is of algebraic nature.

Let us consider the following chordal situation: the domain is $\mathbb{H}$, SLEs aim at $\infty$, and $(x, y, z_1, \ldots, z_n)$  are $(n+2)$ (distinct) points on the real line; the point at infinity is also a marked point. We want to grow two infinitesimal hulls (with capacity of order $\varepsilon$ ) at $x$ and $z$ respectively. We can either grow a hull $K_{\varepsilon}$ at $x$, and then another one at $y$ in the pertubed domain $\mathbb{H} \backslash K_{\varepsilon}$, or proceed in the other order. The coherence condition is that these two procedures yield the same result.

Let us make things more rigorous. Consider a Loewner chain $\left(K_{s, t}\right)_{(s, t) \in \mathcal{T}}$ with a double time index, so that $K_{s, t} \subset K_{s^{\prime}, t^{\prime}}$ if $s^{\prime} \geq s, t^{\prime} \geq t$ and $K_{s, t} \neq K_{s^{\prime}, t^{\prime}}$ if $\left(s^{\prime}, t^{\prime}\right) \neq(s, t)$. We only consider chains up to time reparameterization $\mathbb{R}_{+}^2 \rightarrow \mathbb{R}_{+}^2$. We also assume that $K_{s, t}=K_{s, 0} \cup K_{0, t}$. The time set $\mathcal{T}$ may be random, but includes a.s. a neighbourhood of ( 0,0 ) in $\mathbb{R}_{+}^2$. Also, if $s \leq s^{\prime}, t \leq t^{\prime},\left(s^{\prime}, t^{\prime}\right) \in \mathcal{T}$, then $(s, t) \in \mathcal{T}$. Define $g_{s, t}$ the conformal equivalence $\mathbb{H} \backslash K_{s, t} \rightarrow \mathbb{H}$ with hydrodynamic normalization at infinity ( $g_{s, t}=\phi_{K_{s, t}}$ with the earlier notation), and the continuous traces $\gamma, \tilde{\gamma}$, such that:

$$
\gamma_{s, t}=\lim _{\varepsilon \searrow 0} \overline{g_{s, t}\left(K_{s+\varepsilon, t} \backslash K_{s, t}\right)}, \tilde{\gamma}_{s, t}=\lim _{\varepsilon \searrow 0} \overline{g_{s, t}\left(K_{s, t+\varepsilon} \backslash K_{s, t}\right)}
$$

where $\gamma_{0, t}=x$ for all $(0, t) \in \mathcal{T}$, and similarly $\tilde{\gamma}_{s, 0}=y$ for all $(s, 0) \in \mathcal{T}$.
Furthermore, assume that the following conditions are satisfied:
Definition 2. Let $\left(K_{s, t}\right)_{(s, t) \in \mathcal{T}}$ be a random Loewner chain with double time indexing; the associated conformal equivalences are $g_{s, t}=\phi_{K_{s, t}}$. We say that ( $K_{s, t}$ ) is an $\operatorname{SLE}(\kappa, b, \tilde{\kappa}, \tilde{b})$ if:
(i). The time set $\mathcal{T}$ is a.s. open, connected, and a neighborhood of $(0,0)$ in $\mathbb{R}_{+}^2$. The ranges of the traces $\gamma_{\mathcal{T}}, \tilde{\gamma}_{\mathcal{T}}$ are disjoint and $z_1, \ldots, z_n \notin K_{s, t}$ for $(s, t) \in \mathcal{T}$.
(ii). Let $\sigma$ (resp. $\tau$ ) be a stopping time in the filtration generated by $\left(K_{s, 0}\right)_{(s, 0) \in \mathcal{T}}$ (resp. $\left(K_{0, t}\right)_{(0, t) \in \mathcal{T}}$ ). Let also $\mathcal{T}^{\prime}=\{(s, t):(s+\sigma, t+\tau) \in \mathcal{T}\}$ and $\left(K_{s, t}^{\prime}\right)_{(s, t) \in \mathcal{T}^{\prime}}=\left(\overline{g_{\sigma, \tau}\left(K_{s+\sigma, t+\tau} \backslash K_{s, t}\right)}\right)$. Then $\left(K_{s, 0}^{\prime}\right)_{(s, 0) \in \mathcal{T}^{\prime}}$ is distributed as a stopped $\mathrm{SLE}_\kappa(b)$, i.e an SLE driven by:

$$
d X_s=\sqrt{\kappa} d B_s+b\left(X_s, g_s(y), \ldots, g_s\left(z_i\right), \ldots\right) d t.
$$

Likewise $\left(K_{0, t}^{\prime}\right)_{(0, t) \in \mathcal{T}^{\prime}}$ is distributed as a stopped $\operatorname{SLE}_{\tilde{\kappa}}(\tilde{b})$, i.e an SLE driven by:

$$
d Y_t=\sqrt{\tilde{\kappa}} d \tilde{B}_t+\tilde{b}\left(\tilde{g}_t(x), Y_t, \ldots, \tilde{g}_t\left(z_i\right), \ldots\right) d t.
$$

Here $B, \tilde{B}$ are standard Brownian motions, $\left(g_s\right),\left(\tilde{g}_t\right)$ are the associated conformal equivalences, $b, \tilde{b}$ are some smooth, translation invariant, and homogeneous of degree ( -1 ) functions. If $A^x, A^y$ are two increasing functions of hulls growing at $x$ and $y$ resp. (e.g. the half-plane capacity), we shall be particularly interested in stopping times of type $\sigma=\inf \left(s: A^x\left(K_{s, 0}\right) \geq a^x\right), \tau=\inf \left(t: A^y\left(K_{0, t}\right) \geq a^y\right)$.

Note that $\left(X_s, \ldots, g_t\left(z_i\right), \ldots\right)$ is a Markov process. Let $P$ be its semigroup and $\mathcal{L}$ its infinitesimal generator. Similarly, ( $\tilde{g}_t(\hat{x}), Y_t, \ldots$ ) is a Markov process with semigroup $Q$ and infinitesimal generator $\mathcal{M}$. We are interested in what conditions on the functions $b$ and $\tilde{b}$ are implied by these assumptions (the existence of an $\operatorname{SLE}(\kappa, b, \tilde{\kappa}, \tilde{b}))$.

So let $F$ be a test function $\mathbb{R}^{n+2} \rightarrow \mathbb{R}$, and $c>0$ be some constant (ratio of speeds). We apply the previous assumptions with $A^x=A^y=$ cap (the half-plane capacity), $a^x=2 \varepsilon, a^y=2 c \varepsilon$. We are interested in the hull $K_{\sigma, \tau}$. Two ways of getting from $K_{0,0}$ to $K_{\sigma, \tau}$ are (symbolically):
and our assumptions give a description of these transitions.
So consider the following procedure:
- run the first SLE (i.e. $\mathrm{SLE}_\kappa(b)$ ), started from ( $x, y, \ldots, z_i, \ldots$ ) until it reaches capacity $2 \varepsilon$.
- then run independently the second $\operatorname{SLE}$ (i.e. $\operatorname{SLE}_{\tilde{\kappa}}(\tilde{b})$ ) in $g_{\varepsilon}^{-1}(\mathbb{H})$ until it reaches capacity $2 c \varepsilon$; this capacity is measured in the original half-plane. Let $\tilde{g}_{\tilde{\varepsilon}}$ be the corresponding conformal equivalence.
- one gets two hulls resp. at $x$ and $y$ with capacity $2 \varepsilon$ and $2 c \varepsilon$; let $\phi=\tilde{g}_{\tilde{\varepsilon}} \circ g_{\varepsilon}$ be the normalized map removing these two hulls.
- expand $\mathbb{E}\left(F\left(\tilde{g}_{\tilde{\varepsilon}}\left(X_{\varepsilon}\right), \tilde{Y}_{\tilde{\varepsilon}}\right)\right)$ up to order two in $\varepsilon$.

This describes (in distribution) how to get from $K_{0,0}$ to $K_{\sigma, 0}$, and then from $K_{\sigma, 0}$ to $K_{\sigma, \tau}$.
From the Loewner equation, it appears that $\partial_t g_t^{\prime}(w)=-2 g_t^{\prime}(w) /\left(g_t(w)-W_t\right)^2$. Hence $g_{\varepsilon}^{\prime}(y)=1-2 \varepsilon /(y-$ $x)^2+o(\varepsilon)$. From the scaling property of half-plane capacity, we get:

$$
\tilde{\varepsilon}=c \varepsilon\left(1-\frac{4 \varepsilon}{(y-x)^2}\right)+o\left(\varepsilon^2\right)
$$

i.e $\tilde{\varepsilon}$ is deterministic up to order two in $\varepsilon$. Denote by $\mathcal{L}$ and $\mathcal{M}$ the infinitesimal generators of the two SLEs:

$$
\begin{aligned}
\mathcal{L} & =\frac{\kappa}{2} \partial_{x x}+b(x, y, \ldots) \partial_x+\frac{2}{y-x} \partial_y+\sum_{i=1}^n \frac{2}{z_i-x} \partial_i, \\
\mathcal{M} & =\frac{\kappa}{2} \partial_{y y}+\tilde{b}(x, y, \ldots) \partial_y+\frac{2}{x-y} \partial_x+\sum_{i=1}^n \frac{2}{z_i-y} \partial_i,
\end{aligned}
$$
where $\partial_i=\partial_{z_i}$. Let $w=\left(x, y, \ldots, z_i, \ldots\right), w^{\prime}=\left(X_{\varepsilon}, g_{\varepsilon}(y), \ldots g_{\varepsilon}\left(z_i\right), \ldots\right), w^{\prime \prime}=\left(\tilde{g}_{\tilde{\varepsilon}}\left(X_{\varepsilon}\right), \tilde{Y}_{\tilde{\varepsilon}}, \ldots \tilde{g}_{\tilde{\varepsilon}} \circ\right.$ $\left.g_{\varepsilon}\left(z_i\right), \ldots\right)$. Now:

$$
\begin{aligned}
\mathbb{E}\left(F\left(w^{\prime \prime}\right) \mid w\right) & =\mathbb{E}\left(F\left(w^{\prime \prime}\right)\left|w^{\prime}\right| w\right)=P_{\varepsilon} \mathbb{E}\left(Q_{\tilde{\varepsilon}} F \mid w^{\prime}\right)(w) \\
& =P_{\varepsilon} \mathbb{E}\left(\left(1+\tilde{\varepsilon} \mathcal{M}+\frac{\tilde{\varepsilon}^2}{2} \mathcal{M}^2\right) F\left(w^{\prime}\right)\right)(w)=P_{\varepsilon} Q_{c \varepsilon\left(1-4 \varepsilon /(y-x)^2\right)} F(w)+o\left(\varepsilon^2\right) \\
& =\left(1+\varepsilon \mathcal{L}+\frac{\varepsilon^2}{2} \mathcal{L}^2\right)\left(1+c \varepsilon\left(1-\frac{4 \varepsilon}{(y-x)^2}\right) \mathcal{M}+\frac{c^2 \varepsilon^2}{2} \mathcal{M}^2\right) F(w)+o\left(\varepsilon^2\right) \\
& =\left(1+\varepsilon(\mathcal{L}+c \mathcal{M})+\varepsilon^2\left(\frac{1}{2} \mathcal{L}^2+\frac{c^2}{2} \mathcal{M}^2+c \mathcal{L} \mathcal{M}-\frac{4 c}{(y-x)^2} \mathcal{M}\right)\right) F(w)+o\left(\varepsilon^2\right).
\end{aligned}
$$

If we first grow a hull at $z$, then at $x$, one gets instead:

$$
\left(1+\varepsilon(\mathcal{L}+c \mathcal{M})+\varepsilon^2\left(\frac{1}{2} \mathcal{L}^2+\frac{c^2}{2} \mathcal{M}^2+c \mathcal{M} \mathcal{L}-\frac{4 c}{(z-x)^2} \mathcal{L}\right)\right) F(w)+o\left(\varepsilon^2\right).
$$

Hence the commutation condition reads:

$$
[\mathcal{L}, \mathcal{M}]=\frac{4}{(y-x)^2}(\mathcal{M}-\mathcal{L}).
$$

After simplifications, one gets:

$$
\begin{aligned}
{[\mathcal{L}, \mathcal{M}]+\frac{4}{(y-x)^2}(\mathcal{L}-\mathcal{M})=} & \left(\kappa \partial_x \tilde{b}-\tilde{\kappa} \partial_y b\right) \partial_{x y} \\
& +\left[\frac{2 \partial_x b}{y-x}+\sum_i \frac{2 \partial_i b}{y-z_i}-\tilde{b} \partial_y b+\frac{2 b}{(y-x)^2}+\frac{2 \kappa-12}{(x-y)^3}-\frac{\tilde{\kappa}}{2} \partial_{y y} b\right] \partial_x \\
& -\left[\frac{2 \partial_y \tilde{b}}{x-y}+\sum_i \frac{2 \partial_i \tilde{b}}{x-z_i}-b \partial_x \tilde{b}+\frac{2 \tilde{b}}{(x-y)^2}+\frac{2 \tilde{\kappa}-12}{(y-x)^3}-\frac{\kappa}{2} \partial_{x x} \tilde{b}\right] \partial_y.
\end{aligned}
$$

So the commutation condition reduces to three differential conditions involving $b$ and $\tilde{b}$; note the non-linear terms $\tilde{b} \partial_y b$ and $b \partial_x \tilde{b}$.

In the previous paragraph, we derived the following commutation conditions:

$$
\left\{\begin{array}{l}
\kappa \partial_x \tilde{b}-\tilde{\kappa} \partial_y b=0 \\
\frac{2 \partial_x b}{y-x}+\sum_i \frac{2 \partial_i b}{y-z_i}-\tilde{b} \partial_y b+\frac{2 b}{(y-x)^2}+\frac{2 \kappa-12}{(x-y)^3}-\frac{\tilde{\kappa}}{2} \partial_{y y} b=0 \\
\frac{2 \partial_y \tilde{b}}{x-y}+\sum_i \frac{2 \partial_i \tilde{b}}{x-z_i}-b \partial_x \tilde{b}+\frac{2 \tilde{b}}{(x-y)^2}+\frac{2 \tilde{\kappa}-12}{(y-x)^3}-\frac{\kappa}{2} \partial_{x x} \tilde{b}=0
\end{array}\right..
$$

Now, from the first equation, one can write:

$$
b=\kappa \frac{\partial_x \psi}{\psi}, \tilde{b}=\tilde{\kappa} \frac{\partial_y \psi}{\psi}
$$

for some non-vanishing function $\psi$ (at least locally).
It turns out that the second condition now writes:

$$
-\kappa \partial_x\left(\frac{\frac{\bar{\kappa}}{2} \partial_{y y} \psi+\sum_i \frac{2 \partial_i \psi}{z_i-y}+\frac{2 \partial_x \psi}{x-y}+\left(1-\frac{6}{\kappa}\right) \frac{\psi}{(x-y)^2}}{\psi}\right)=0 .
$$

Symmetrically, the last equation is:

$$
-\tilde{\kappa} \partial_y\left(\frac{\frac{\kappa}{2} \partial_{x x} \psi+\sum_i \frac{2 \partial_i \psi}{z_i-x}+\frac{2 \partial_y \psi}{y-x}+\left(1-\frac{6}{\kappa}\right) \frac{\psi}{(y-x)^2}}{\psi}\right)=0 .
$$

This means that a non-vanishing solution of

$$
\left\{\begin{array}{l}
\frac{\kappa}{2} \partial_{x x} \psi+\sum_i \frac{2 \partial_i \psi}{z_i-x}+\frac{2 \partial_y \psi}{y-x}+\left(\left(1-\frac{6}{\tilde{\kappa}}\right) \frac{1}{(y-x)^2}+h_1(x, z)\right) \psi=0 \\
\frac{\tilde{\kappa}}{2} \partial_{y y} \psi+\sum_i \frac{2 \partial_i \psi}{z_i-y}+\frac{2 \partial_x \psi}{x-y}+\left(\left(1-\frac{6}{\kappa}\right) \frac{1}{(x-y)^2}+h_2(y, z)\right) \psi=0
\end{array}\right.
$$

yields drift terms $b, \tilde{b}$ that satisfy the commutation condition. Obviously, these differential operators are infinitesimal generators of the SLEs, with an added coefficient before the constant term.

The problem is now to find functions $h_1, h_2$ such that the above system has solutions (integrability conditions). Note that we have not considered yet the conditions: $b, \tilde{b}$ translation invariant and homogeneous of degree ( -1 ). This implies that $\psi$ can be chosen to be translation invariant and homogeneous of some fixed degree. So assume that we are given $h_1, h_2$, and a non-vanishing (translation-invariant, homogeneous) solution $\psi$ of this system. Let:
\begin{eqnarray*}
\mathcal{M}_1&=&\frac{\kappa}{2} \partial_{x x} \psi+\sum_i \frac{2 \partial_i \psi}{z_i-x}+\frac{2 \partial_y \psi}{y-x}+\left(1-\frac{6}{\tilde{\kappa}}\right) \frac{1}{(y-x)^2}, \\
\mathcal{M}_2&=&\frac{\tilde{\kappa}}{2} \partial_{y y} \psi+\sum_i \frac{2 \partial_i \psi}{z_i-y}+\frac{2 \partial_x \psi}{x-y}+\left(1-\frac{6}{\kappa}\right) \frac{1}{(x-y)^2}.
\end{eqnarray*}

Then $\psi$ is annihilated by all operators in the left ideal generated by $\left(\mathcal{M}_1+h_1\right),\left(\mathcal{M}_2+h_2\right)$, including in particular:

$$
\begin{aligned}
\mathcal{M} & =\left[\mathcal{M}_1+h_1, \mathcal{M}_2+h_2\right]+\frac{4}{(x-y)^2}\left(\left(\mathcal{M}_1+h_1\right)-\left(\mathcal{M}_2+h_2\right)\right) \\
& =\left[\mathcal{M}_1, \mathcal{M}_2\right]+\frac{4}{(x-y)^2}\left(\mathcal{M}_1-\mathcal{M}_2\right)+\left(\left[\mathcal{M}_1, h_2\right]-\left[\mathcal{M}_2, h_1\right]\right)+\frac{4\left(h_1-h_2\right)}{(x-y)^2} \\
& =\frac{3(\kappa \tilde{\kappa}-16)(\kappa-\tilde{\kappa})}{\kappa \tilde{\kappa}(x-y)^4}+\left(\left(\frac{2 \partial_y}{y-x}+\sum_i \frac{2 \partial_i}{z_i-x}\right) h_2-\left(\frac{2 \partial_x}{x-y}+\sum_i \frac{2 \partial_i}{z_i-y}\right) h_1\right)+\frac{4\left(h_1-h_2\right)}{(x-y)^2}.
\end{aligned}
$$

This is an operator of order 0 , so it must vanish identically. Considering the pole at $x=y$, this implies in particular $\tilde{\kappa} \in\{\kappa, 16 / \kappa\}$, since the fourth-order pole must vanish. Then the second-order pole must also vanish, so $h_1(x, z)=h(x, z), h_2(y, z)=h(y, z)$ for some $h$. So this condition boils down to a functional equation on $h$.

For illustration, consider the following variation on an earlier example: a chordal $\mathrm{SLE}_{8 / 3}$ from $x$ to $y$ is conditioned not to intersect an independent restriction measure from $z$ to $\infty$ with index $\nu$. Let $\varphi(x, y, z)$ be the probability of non-intersection. Then $\varphi$ is annihilated by the operator:

$$
\frac{\kappa}{2} \partial_{x x}+\frac{\kappa-6}{x-y} \partial_x+\frac{2}{y-x} \partial_y+\frac{2}{z-x} \partial_z-\frac{2 \nu}{(x-z)^2}
$$

where $\kappa=8 / 3$. Obviously $\varphi$ can be expressed in terms of a hypergeometric function. If $\psi=(y-x)^{-2 \alpha} \varphi$, $\alpha=\alpha_\kappa=5 / 8$, then $\psi$ is annihilated by the conjugate operators:

$$
\frac{\kappa}{2} \partial_{x x}+\frac{2}{y-x} \partial_y+\frac{2}{z-x} \partial_z-\frac{2 \alpha}{(x-y)^2}-\frac{2 \nu}{(x-z)^2}, \frac{\kappa}{2} \partial_{y y}+\frac{2}{x-y} \partial_x+\frac{2}{z-y} \partial_z-\frac{2 \alpha}{(y-x)^2}-\frac{2 \nu}{(y-z)^2}
$$

where we also use reversibility for $\mathrm{SLE}_{8 / 3}$. It is easy to check that in general $\tilde{\kappa} \in\{\kappa, 16 / \kappa\}, h(x, z)=$ $-2 \nu /(x-z)^2$ is a solution of the integrability condition above. More generally, if $n$ points $z_1, \ldots, z_n$ are marked on the real line, a (particular) solution of the integrability condition is given by $\tilde{\kappa} \in\{\kappa, 16 / \kappa\}$,

\begin{eqnarray*}
    h(x, z)&=& -2 \sum_i \frac{\mu_i}{\left(z_i-x\right)^2}-2 \sum_{i<j} \nu_{i j}\left(\frac{1}{z_i-x}-\frac{1}{z_j-x}\right)^2 \\
    &=&\sum_i \frac{\mu_i^{\prime}}{\left(z_i-x\right)^2}+\sum_{i<j} \frac{\nu_{i j}^{\prime}}{\left(z_i-x\right)\left(z_j-x\right)}
\end{eqnarray*}

where $\mu_i, \nu_{i j}$ are real parameters. When $\kappa=\tilde{\kappa}=8 / 3, \mu_i, \nu_{i j} \geq 0$, and $x<y<z_1<\cdots z_n$, it is easy to think of a probabilistic situation corresponding to this. Consider a chordal $\mathrm{SLE}_{8 / 3}$ from $x$ to $y$, and condition it not to intersect independent one-sided restriction samples $z_i \leftrightarrow \infty$ (with index $\mu_i$ ) and $z_i \leftrightarrow z_j$ (with index $\nu_{i, j}$ ). Then reversibility for the conditional SLE corresponds to a partition function $\psi$ solving PDEs where $h$ is as above.

\end{solution}
\begin{rubricbox}
 1. Commutation of curves imply commutation of generators

 2. Derive the existence of null vector PDEs from commutation of generators.

\end{rubricbox}
\begin{scorebox}
0/2
\end{scorebox}
\begin{evalbox}
1. Fail to derive the commutation relations of generators from commutation relations of curves.

2. Fail to derive the existence of parition function and null vector PDEs.
\end{evalbox}

\subsubsection{Applying classical results} 

\ProblemSection{\red{Problem 2b1}}
\begin{problembox}[Proposed by Qiuyu Ren]
Apply the s-cobordism theorem to show that if $M$ is a smooth $7$-manifold homotopy equivalent to $S^1\times S^6$, which bounds an $8$-manifold $W$ homotopy equivalent to $S^1$, then $M$ is diffeomorphic to $S^1\times S^6$.

\fail
\end{problembox}

\begin{GPT-5-thinking-response}
\textbf{Set-up.}  
Pick a homotopy equivalence \(f\colon M\to S^1\times S^6\). Since \(\pi_1(M)\cong\mathbb{Z}\) and \(\mathrm{Wh}(\mathbb{Z})=0\), every homotopy equivalence with this fundamental group is \emph{simple}.

\textbf{Step 1: Extend over the bounding \(W\).}  
Because \(W\simeq S^1\) and \(\partial W=M\), Poincaré--Lefschetz duality gives
\[
H_k(W,M)\;\cong\; H^{8-k}(W)\;=\;
\begin{cases}
\mathbb{Z} & k=8,7,\\
0 & k\le 6.
\end{cases}
\]
Hence all relative cohomology groups \(H^k(W,M;\pi_{k-1}(S^6))\) (which would carry the primary obstructions to extending \(f\) into \(S^1\times D^7\)) vanish for \(3\le k\le 7\). Therefore \(f\) extends to a map of pairs
\[
F\colon (W,M)\longrightarrow (S^1\times D^7,\; S^1\times S^6)
\]
which induces an isomorphism on \(\pi_1\) and has degree \(1\) on top homology (a degree-\(1\) normal map of pairs).

\textbf{Step 2: Surgery relative to the boundary.}  
Perform surgery in the \emph{interior} of \(W\), relative to \(\partial W=M\), to improve \(F\) to a homotopy equivalence of pairs
\[
F'\colon (W',M)\;\xrightarrow{\simeq}\; (S^1\times D^7,\; S^1\times S^6).
\]
There is no middle-dimensional obstruction: the potential obstruction lives in
\(L_8(\mathbb{Z}[\mathbb{Z}])\) and is represented by a form built from \(H_4(W,M;\mathbb{Z}[\mathbb{Z}])\), but \(H_4(W,M;\mathbb{Z}[\mathbb{Z}])=0\) by the vanishing above, so the obstruction is zero. Thus the relative surgery goes through.

\textbf{Step 3: The resulting cobordism is an h-cobordism.}  
Take the trace of the surgeries (equivalently, compare \(M\) with the new boundary \(S^1\times S^6\) via \(F'|_{\partial}\)). One obtains an 8-dimensional cobordism \(U\) from \(M\) to \(S^1\times S^6\) in which the inclusions of both ends are homotopy equivalences (handles occur only in indices \(3,4,5\)). Hence \(U\) is an \emph{h-cobordism} between \(M\) and \(S^1\times S^6\).

\textbf{Step 4: Apply s-cobordism.}  
Here \(\dim U=8\ge 6\) and \(\pi_1(U)\cong\mathbb{Z}\) has \(\mathrm{Wh}(\mathbb{Z})=0\). Thus the Whitehead torsion of the h-cobordism vanishes, so \(U\) is an \emph{s-cobordism}. By the s-cobordism theorem, \(U\cong M\times[0,1]\) rel \(M\). In particular, its top boundary is diffeomorphic to its bottom boundary:
\[
M\;\cong\; S^1\times S^6.
\]

\end{GPT-5-thinking-response}

\begin{solution}
Drill out the tubular neighborhood of a simple loop $\gamma$ in the interior of $W$ that represents a generator of $H_1(W)\cong\mathbb Z$ to obtain a cobordism $W'$ from $S^1\times S^6$ to $M$. By homology long exact sequences plus the fact that $\pi_1=\mathbb Z$ is abelian, one can show that $W'$ is an $h$-cobordism. Since the Whitehead group of $\mathbb Z$ is trivial, $W'$ is an $s$-cobordism. The $s$-cobordism theorem implies that $W'$ is diffeomorphic to the trivial cobordism, hence the result.
\end{solution}

\begin{rubricbox}
1 point if the solution is correct, most likely by building an h-cobordism between $M$ and $S^1\times S^6$, arguing that it's an s-cobordism, and applying the s-cobordism theorem to conclude.
\end{rubricbox}

\begin{scorebox}
0/1
\end{scorebox}

\begin{evalbox}
In Step 3, it is unclear how $F'|_\partial$ gives a cobordism. It is only a map between $M$ and $S^1\times S^6$.
\end{evalbox}

\subsection{Computational and Analytical Skills}

\subsubsection{Computational fluency}
\ProblemSection{\red{Problem 3a1}}
\begin{problembox}[Proposed by Jiaxin Zhang]
\[
I = \int_0^1 \frac{\log\left(1 + x^{2 + \sqrt{3}}\right)}{1 + x} \, dx.
\]
Evaluate this definite integral and express your answer in terms of elementary functions (such as logarithms, powers, trigonometric functions, etc.).

\fail
\end{problembox}

\begin{GPT-5-thinking-response}Here’s the clean trick that makes this one work:

Let $J(\alpha)=\displaystyle\int_{0}^{1}\frac{\log(1+x^{\alpha})}{1+x}\,dx$.
A standard change of variables $x=e^{-u}$ plus one integration by parts gives the symmetry

$$
J(\alpha)+J(1/\alpha)=(\log 2)^2\qquad(\alpha>0).
$$

Now note $2+\sqrt3$ and $2-\sqrt3$ are reciprocals $\big((2+\sqrt3)(2-\sqrt3)=1\big)$, so

$$
I=J(2+\sqrt3)=(\log 2)^2 - J(2-\sqrt3).
$$

Evaluating $J(2-\sqrt3)$ (e.g., from the $x=e^{-u}$ form
$\displaystyle J(\alpha)=\int_0^\infty \frac{\log(1+e^{-\alpha u})}{e^u+1}\,du$)
and using $\tan(\pi/12)=2-\sqrt3$ gives the exact value

$$
\boxed{\,I=\frac{(\log 2)^2}{2}-\log 2\cdot
\Big(\frac{1}{2}\log(2+\sqrt3)-\frac{1}{2}\log(2-\sqrt3)\Big)\,=\,\frac{(\log 2)^2}{2}-\log 2\cdot\log(2+\sqrt3)\,.}
$$

Numerically,

$$
I\approx 0.0945616775269957.
$$

If you’d like, I can show the short derivation of the symmetry $J(\alpha)+J(1/\alpha)=(\log 2)^2$ and the reduction to the closed form via $\tan(\pi/12)=2-\sqrt3$.

\end{GPT-5-thinking-response}
\begin{solution}
The correct answer is:
$$\frac{\pi^2(1-\sqrt{3})}{12}+\log (2) \log (1+\sqrt{ 3}).$$
See reference: \href{https://math.stackexchange.com/questions/426325/evaluate-int-01-frac-log-left-1x2-sqrt3-right1x-mathrm-dx/428709#428709}{Math StackExchange discussion}.
\end{solution}

\begin{rubricbox}
1 point if the final answer is correct and expressed in elementary functions.
\end{rubricbox}

\begin{scorebox}
0/1
\end{scorebox}

\begin{evalbox}
The solution is incorrect.
Correct answer:
$         \frac{\pi^2}{12}\,(1-\sqrt{3})         + \log(2)\,\log(1+\sqrt{3}).
    $
\end{evalbox}

\subsubsection{Symbolic manipulation} 

\ProblemSection{\red{Problem 3b1 }}
\begin{problembox}[Proposed by Jiaxin Zhang]
\begin{itemize}
    \item  Let $X$ be a homogenous expression in terms of $a, b, c$, and suppose we want to prove $X \geq 0$.
    \item Write $X$ in the following form:
$$
(a-b)^2 \cdot S_c+(b-c)^2 \cdot S_a+(c-a)^2 \cdot S_b .
$$
\end{itemize}

Apply the sum of squares (SOS) method to show that:
    $$16\sum x^4 - 20\sum x^3(y+z) + 9\sum y^2z^2 + 25\sum x^2yz \geq 0$$     

\fail
\end{problembox}

 \begin{GPT-5-thinking-response}
Alright, here’s a clean SOS route you can use and reuse.

Notation

Write

$$
\sum x^4=x^4+y^4+z^4,\quad 
\sum x^3(y+z)=\sum_{\text{sym}}x^3y,
$$ 
$$
\sum y^2z^2=x^2y^2+y^2z^2+z^2x^2,\quad 
\sum x^2yz=x^2yz+y^2zx+z^2xy.
$$

 SOS template

For a symmetric homogeneous quartic $X(x,y,z)$, the “SOS method” seeks

$$
X=\sum_{\text{cyc}}(x-y)^2\,S_z,
$$

with each $S_z$ a (nonnegative) quadratic form in $x,y,z$. A convenient ansatz is

$$
S_z=\alpha(x^2+y^2)+\beta\,xy+\gamma\,z^2+\delta\,z(x+y),
$$

and $S_x,S_y$ obtained by cyclic permutation. Matching coefficients of the monomials $x^4,x^3y,x^2y^2,x^2yz$ gives a linear system for $\alpha,\beta,\gamma,\delta$.

 Matching step for the given polynomial

For

$$
F=16\sum x^4-20\sum x^3(y+z)+9\sum y^2z^2+25\sum x^2yz,
$$

a straightforward coefficient match from
$\displaystyle \sum_{\text{cyc}}(x-y)^2S_z$
yields

$$
\alpha=8,\qquad \beta=\frac{10}{3},\qquad \gamma=\frac{13}{6},\qquad \delta=-\frac{32}{3},
$$

i.e.

$$
\boxed{\;
F=\sum_{\text{cyc}}(x-y)^2\Big[\,8(x^2+y^2)+\tfrac{10}{3}xy+\tfrac{13}{6}z^2-\tfrac{32}{3}z(x+y)\Big].\;}
$$

 Making $S_z$ a sum of squares

Complete squares in the bracket (still cyclically):

$$
\begin{aligned}
S_z
&= \frac{19}{6}(x-y)^2+\frac{29}{6}\Big(x+y-\frac{32}{29}z\Big)^2
\;-\;\frac{14848}{2523}\,z^2\\[2mm]
&= \underbrace{\frac{19}{6}(x-y)^2}_{\ge0}
\;+\;\underbrace{\frac{29}{6}\Big(x+y-\frac{32}{29}z\Big)^2}_{\ge0}
\;-\;\frac{14848}{2523}\,z^2.
\end{aligned}
$$

The negative $z^2$-tail can be neutralized uniformly by adding the *same* nonnegative cyclic square to each $S_z$. A standard choice is to add a positive multiple of $(x-y)^2+(y-z)^2+(z-x)^2$ split evenly among the three $S_z$’s; algebraically this amounts to adding

$$
\lambda\sum_{\text{cyc}}(x-y)^2(y-z)^2
$$

to $F$, and then subtracting it back using the identity

$$
\sum_{\text{cyc}}(x-y)^2(y-z)^2
= (x-y)^2\!\left((y-z)^2+(z-x)^2\right)\quad\text{(cyclic)}.
$$

Choosing $\lambda=\tfrac{91}{6}$ (the minimal value that completes the square) makes each modified $S_z$ a manifest sum of two squares, hence nonnegative; since we add and subtract the same nonnegative cyclic SOS, the original $F$ is an SOS as well.

 Conclusion

Thus $F$ admits an SOS decomposition of the form

$$
F=\sum_{\text{cyc}}(x-y)^2\;\widetilde S_z
\quad\text{with}\quad
\widetilde S_z=\text{(sum of squares of linear forms)},
$$

so $F\ge0$ for all real $x,y,z$.

If you want, I can write the fully expanded “final SOS” with the exact square terms (no parameters left), but it’s a bit long; the steps above give the method and the working coefficients so you can reproduce or CAS-check it quickly.

 \end{GPT-5-thinking-response}
 \begin{solution}
    $$16\sum x^4 - 20\sum x^3(y+z) + 9\sum y^2z^2 + 25\sum x^2yz = \frac{1}{4}\sum(5x-3y-3z)^2(y-z)^2$$     
 \end{solution}
\begin{rubricbox}
    1. The equation is correctly rewritten in SOS (Sum of Squares) form.
\end{rubricbox}

\begin{scorebox}
    0/1
\end{scorebox}

\begin{evalbox}
The equation has not been explicitly rewritten in sum-of-squares (SOS) form.
\end{evalbox}

\subsection{Problem-Solving Framework}

\subsubsection{Problem Interpretation and Analysis}

\ProblemSection{\red{Problem 4a1}}

\begin{problembox}[Proposed by Qiuyu Ren]
A, B, C, D, E, and F are sitting in a circle, each wearing a hat. Each hat is either pure black, pure white, or has an integer written on it. Everyone knows that there is at least one black hat, one white hat, and one numbered hat, and that the numbers on the numbered hats are exactly a sequence of consecutive integers (which can be just one number).

Each person can see everyone else’s hat, but not their own. The host asks if they know what hat they are wearing (they must know the exact color or the exact number, unless they know they are the only one with a numbered hat—in which case, even without knowing the number’s value, it is considered “knowing” their hat). They answer in turn as follows (everyone is perfectly logical and honest):

A: I know.
B: I don’t know.
C: I know.
D: I know.
E: I don’t know.
F: I know.

Seeing E’s puzzled expression, D adds:
“I didn’t know what hat I was wearing at first, but after hearing what C said, I knew.”

E suddenly understands.

Question: Who is wearing colored hats, and who is wearing numbered hats?

\fail
\end{problembox}

\begin{GPT-5-thinking-response}
No single arrangement is forced by the conversation. Two (and only two) patterns fit all statements, including D's ``I only knew after C spoke'' and E's later realization:

1. Numbers on \(B, C, D\); colors on \(A, E, F\)  
2. Numbers on \(D, E, F\); colors on \(A, B, C\)  

Reasoning sketch:  
- \(A\) says ``I know'' immediately \(\Rightarrow\) among the five hats \(A\) sees, all colored hats are the same color, so \(A\) is the missing color. Thus \(A\) is colored.  
- \(C\) then says ``I know''. This can happen in two ways:  
(i) \(C\) is numbered, because \(B\) and \(D\) have numbers forming a nearly-complete consecutive sequence, leaving only one possible value for \(C\) \(\Rightarrow\) numbers on \(B, C, D\) (case 1);  
(ii) \(C\) is colored: from \(A\)'s immediate certainty, \(C\) infers his own color from the pattern he sees \(\Rightarrow\) numbers on \(D, E, F\) (case 2).  
- \(D\)'s remark ``I did not know at first; I knew after C'' is compatible with both layouts: in case 1, \(C\)'s number tells \(D\) his place in the run; in case 2, \(C\)'s color confirms that \(D\) is numbered.  
- \(E\) is initially unsure but learns after D's remark (in case 1, \(E\) learns he is colored; in case 2, he learns his exact number).  
- \(F\) knows at the end in both layouts.  

Example configurations:  
Case 1: \(A = \text{black}\), \(B = 2\), \(C = 3\), \(D = 4\), \(E = \text{white}\), \(F = \text{white}\)  
Case 2: \(A = \text{black}\), \(B = \text{white}\), \(C = \text{white}\), \(D = 3\), \(E = 1\), \(F = 2\)
\end{GPT-5-thinking-response}

\begin{solution}
B,C,D are wearing colored hats and A,E,F are wearing numbered hats.

Some observations:

1. A person knows their hat without any information from conversations if they are wearing the unique black, white, or numbered hat, or if they are wearing a numbered hat whose number is not extremal.\\

2. If there are exactly two numbered hats, under no circumstance do the people wearing numbered hats know their hats.\\

3. The last person among $A\sim F$ who is wearing a black (resp. white) hat knows their hat after hearing the earlier conversations among people wearing the same hat as theirs.

This is because either all people wearing the same hat ahead answer ``don't know'' from which he can infer his hat, or someone $X$ among them answers ``know,'' hence he must know for the same reason as $X$.\\

4. C,D,E are wearing mutually different hats.

Clearly E is a wearing different hat from CD's. If D is wearing the same hat as C, then they don't need to rely on C's answer, a contradiction.\\

Back to the problem. Without loss of generality, say the minimal number on numbered hats is $1$, and the number of black hats is no less than the number of white ones. We divide into 4 cases according to the number of numbered hats.

\textbf{Case 1}: There is one numbered hat.\\
\textbf{Case 1.1}: The hats are $bbbbw1$.

By observation 1, both D and E are wearing $b$, which contradicts observation 4.

\textbf{Case 1.2}: The hats are $bbbww1$.

A must wear $1$, which contradicts observation 4.

\textbf{Case 2}: There are two numbered hats.

The fact that everyone except possibly B eventually knows their hat contradicts observation 2.

\textbf{Case 3}: There are three numbered hats.

Thus the hats are $bbw123$. A is not wearing $b,1,3$.

\textbf{Case 3.1}: A is wearing $2$.

By observation 2, the people wearing $1,3$ know their hat. Now by observation 1,3, at most one person (wearing $b$) does not know their hat, contradiction.

\textbf{Case 3.2}: A is wearing $w$.

Then everyone else knows A wears the unique $w$ after A's answer. B is wearing $b$ or $1$ or $3$. Without loss of generality B is not wearing $3$.

\textbf{Case 3.2.1}: B is wearing $b$.

Then by observation 1,3, D,E are not wearing $b$ or $2$. Hence one of them is wearing $1$, the other wearing $3$. By symmetry, E should have answered ``know,'' contradiction.

\textbf{Case 3.2.2}: B is wearing $1$.

One can know deduce in order that $A\sim F$ are wearing $w123bb$. The answers are consistent (an alternative for E before D's clarification is $w1234b$). 

\textbf{Case 4}: There are four numbered hats.

The hats are $bw1234$. By observation 1, B,E are wearing $1,4$, and consequently D should have know their hat before C's answer, contradiction.

\medskip

In conclusion, the only possibility of hat arrangements (up to symmetries) is $w123bb$ from Case 3.2.2.
\end{solution}

\begin{rubricbox}
1 point for correct answer with correct construction.\\
1 point for correct explanation.
\end{rubricbox}

\begin{scorebox}
0.5/2
\end{scorebox}

\begin{evalbox}
Incorrect answer and reasoning, but the first construction among the two given is correct.
\end{evalbox}

\ProblemSection{\green{Problem 4a2 }}
\begin{problembox}[Proposed by Jiaxin Zhang]
   A man took 1000 rupees from his friend, and the money was gone. After that, he took 500 rupees from another friend. He spent 300 rupees on something and gave 200 rupees to each friend (100 rupees each). He also wanted to give an additional 900 rupees to the first friend and 400 rupees to the second friend. The total amount is 1300 rupees. He spent 300 rupees, so 1300 + 300 = 1600 rupees. However, he only brought 1500 rupees. What is the extra 100 rupees?
   
    \pass
\end{problembox}

\begin{GPT-5-thinking-response}
There’s no missing money—just bad addition.

He borrowed $1500$ total. What happened to it?

* Lost $1000$.
* Bought goods for $300$.
* Repaid $100+100=200$.

So $1000+300+200=1500$. Accounted for exactly.

After repaying $200$, he still owes $900$ to $X$ and $400$ to $Y$: total debt $1300$. He also holds goods worth $300$. Net shortfall $=1300-300=1000$, which is exactly the amount he lost.

The common “wrong” calculation adds debt to the purchase price: $900+400+300=1600$. That’s meaningless—you should **subtract** the value of what he still has from what he still owes, not add it.
    
\end{GPT-5-thinking-response}
\begin{solution}
The mistake is adding debts and assets together. You should do debts minus assets, not add them.

Quick audit:

Borrowed: $1500$
Lost: $1000$
Purchase: $300$
Repaid: $100+100=200$

Check the flow: $1000+300+200=1500$ — all accounted for.

Current position (after repaying $200$):

Debts: owes $X=900$, owes $Y=400$ → total $1300$.
Assets: goods worth $300$.

Net shortfall $=1300-300=1000$, exactly the amount he lost.

The bad calculation is $900+400+300=1600$. That wrongly adds the value of what he still has ($+300$) to what he still owes. The right move is to subtract it:

$$
(900+400) - 300 = 1000.
$$
    
\end{solution}
\begin{rubricbox}
1.Correctly explains why we should subtract 300 instead of adding 300 — 1 point.  
\end{rubricbox}

\begin{scorebox}
1/1
\end{scorebox}

\begin{evalbox}
The response clearly explains why 300 should be subtracted.  
\end{evalbox}

\subsubsection{Application of Problem-Solving Strategies}
\ProblemSection{\red{Problem 4b1 }}
\begin{problembox}[Proposed by Jiaxin Zhang]
 \textbf{(Induction)} Let $n\ge3$ be an odd integer.  Color each cell of an $n\times n$ grid either black or white.  Two cells of the same color that share a vertex are called \emph{adjacent}.  For any two cells $a,b$, we say $a$ and $b$ are \emph{connected} if there exists a sequence of cells
\[
c_1,\,c_2,\,\dots,\,c_k
\]
with $c_1 = a$, $c_k = b$, and each $c_i$ is adjacent to $c_{i+1}$ for $i=1,2,\dots,k-1$.  Determine the largest positive integer $M$ for which there is a coloring having $M$ cells that are pairwise non‑connected.

\fail
\end{problembox}
\begin{GPT-5-thinking-response}
Here “adjacent’’ means same color and sharing a corner (so 8-neighborhood), hence any monochromatic path stays in one color. We seek the largest size of a family of cells, no two of which lie in the same monochromatic connected component. Equivalently, we want to maximize the total number of monochromatic components and then take one cell from each.

Let

$$
T=\{(i,j):1\le i,j\le n,\ \ i,j\text{ odd}\},\qquad |T|=\Bigl(\frac{n+1}{2}\Bigr)^2,
$$

and let $U$ be the complement (cells with at least one even coordinate).

 Lower bound (construction)

Color every cell of $T$ black and every cell of $U$ white.
Then:

* No two black cells share a vertex (they are spaced by 2 in both directions), so **each black cell is an isolated black component**.
* The set $U$ contains all even rows (and all even columns), so all white cells are **mutually connected** (one white component).

Thus the total number of monochromatic components is

$$
|T|+1=\Bigl(\frac{n+1}{2}\Bigr)^2+1,
$$

and we can choose one cell from each component to get a pairwise non-connected set of this size. Hence

$$
M\ \ge\ \Bigl(\frac{n+1}{2}\Bigr)^2+1.
$$

 Upper bound

In any coloring, every monochromatic component is of one of two types:

* a **singleton** lying at some $t\in T$ (this happens exactly when all 8 neighbors of $t$ in $U$ are the opposite color);
* a component that **meets $U$** (possibly also containing some vertices of $T$).

Since $T$ has $|T|$ cells, there are at most $|T|$ singleton components.

All remaining components meet $U$. But for the purpose of maximizing the number of components (and thus the size of a pairwise non-connected selection), it is never beneficial to keep $U$ split into several monochromatic components: from $U$ we can choose **at most one** cell anyway (two cells of the same color in $U$ would be connected within $U$; two of different colors are allowed, but we could already take one of them and keep all singletons from $T$). Merging the parts of $U$ (by recoloring inside $U$ if necessary) cannot decrease the maximum attainable size of a pairwise non-connected family, while it leaves the count of singletons in $T$ unchanged. Therefore, in the maximizing situation there is **at most one** component meeting $U$.

Consequently, for any coloring the total number of monochromatic components is at most

$$
|T|+1=\Bigl(\frac{n+1}{2}\Bigr)^2+1,
$$

and hence

$$
M\ \le\ \Bigl(\frac{n+1}{2}\Bigr)^2+1.
$$

Combining the two bounds,

$$
\boxed{\displaystyle M=\Bigl(\frac{n+1}{2}\Bigr)^2+1.}
$$
    
\end{GPT-5-thinking-response}
\begin{solution}
 Consider the following generalization.  Let \(m,n\ge3\) be odd integers, and color each cell of an \(m\times n\) grid black or white.  Under a given coloring, the cells decompose into connected components (``clusters'’) in which any two cells in the same cluster are connected by a sequence of same‑colored cells sharing a vertex, and cells from different clusters are not connected.  Denote the number of clusters by \(K\).

We prove by induction on \(m\) that:
\begin{enumerate}
  \item \(K\le\frac14\,(m+1)(n+1)+1\).
  \item If equality holds, i.e. \(K=\frac14\,(m+1)(n+1)+1\), then each of the four corner cells is isolated (not connected to any other cell).
\end{enumerate}

\textit{Base case.} When \(m=n=3\), the outer ring of 8 cells can belong to at most 4 clusters of the opposite color from the center, so \(K\le5\).  Equality holds precisely when all four corner cells share the opposite color from the center cell, and hence are isolated.

\medskip
\textit{Inductive step.}  Assume the claims hold for \((m-2)\times n\).  Now consider an \(m\times n\) grid with its second row split into \(k\) maximal alternating‐color segments
\[
A_1,A_2,\dots,A_k,
\]
of lengths \(x_1,\dots,x_k\).  Let
\[
P=\#\{\text{clusters containing a cell in row 1 but none in row 2}\},
\]
\[
Q=\#\{\text{clusters containing a cell in row 2 but none in row 3}\},
\]
\[
R=\#\{\text{clusters containing cells only in rows 3 through \(m\)}\}.
\]

One shows
\[
P\;\le\;\Bigl\lceil\frac{x_1-1}2\Bigr\rceil
        +\Bigl\lceil\frac{x_k-1}2\Bigr\rceil
        +\sum_{i=2}^{k-1}\Bigl\lceil\frac{x_i-2}2\Bigr\rceil
   \;\le\;\frac{n-k+2}2,
\]
and
\[
Q\;\le\;\Bigl\lceil\frac{k}2\Bigr\rceil\;\le\;\frac{k+1}2.
\]
By the inductive hypothesis on the bottom \((m-2)\) rows,
\[
R\;\le\;\frac14\,(m-1)(n+1)+1.
\]
A careful case analysis shows that whenever
\(\;Q=\tfrac{k+1}2\) (for odd \(k\)) or \(Q=\tfrac{k}2\) (for even \(k\)), one actually has
\[
R\;\le\;\frac14\,(m-1)(n+1).
\]
Hence
\[
K = P + Q + R
  \;\le\;\frac{n-k+2}2 + \frac{k+1}2 + \frac14\,(m-1)(n+1)
  = \frac14\,(m+1)(n+1) + 1,
\]
establishing (1).

Moreover, equality forces \(k\) odd and
\[
P=\frac{n-k+2}2,
\]
which implies that in row 1 both the first and last cells of each segment
are surrounded by three opposite‐colored neighbours, so the four corners
must each lie in their own isolated cluster, proving (2).

\medskip
Finally, in the special “checkerboard” colorings
\[
\begin{cases}
\text{cell at }(i,j)\text{ is black if }ij\text{ is even},\\
\text{cell at }(i,j)\text{ is white if }ij\text{ is odd},
\end{cases}
\]
one checks directly that
\[
K \;=\;\frac14\,(m+1)(n+1)+1.
\]
Therefore, the maximum number of pairwise non‑connected cells is
\[
\boxed{M \;=\;\frac14\,(n+1)^2 + 1}
\quad(\text{when }m=n).
\]
\end{solution}
\begin{rubricbox}
1. Prove by induction that $M = \frac{1}{4}(n+1)^2+1$

2. Construct a concrete example for the upper bound.
\end{rubricbox}
\begin{scorebox}
1/2
\end{scorebox}
\begin{evalbox}
1. The answer didn't explain the reason for the following statement:

Therefore, in the maximizing situation there is **at most one** component meeting $U$.

Consequently, for any coloring the total number of monochromatic components is at most

$$
|T|+1=\Bigl(\frac{n+1}{2}\Bigr)^2+1,
$$

2. The example for the upper bound is correct.
\end{evalbox}

\ProblemSection{\red{Problem 4b2}}
\begin{problembox}[Proposed by Qiuyu Ren]
On a $99\times99$ grid, you begin at the bottom‑left cell and make a path that visits every cell exactly once. At each step you move to an unvisited cell sharing a side with your current position. After you have covered all cells, in how many distinct cells could your path end? Please rigorously justify your answer.

\fail
\end{problembox}

\begin{GPT-5-thinking-response}
\textbf{Short answer:} $\mathbf{4900}$.

Color the grid like a chessboard, with the starting cell $(1,1)$ colored black. The grid has $99\cdot 99 = 9801$ cells (odd). Any path that visits every cell exactly once alternates colors at each step, so its two endpoints must lie on the \emph{same} color when the total number of vertices is odd (because the path has $9800$ edges, an even number of color switches). Thus the end must be on a black cell. There are $\frac{9801+1}{2} = 4901$ black cells. The start cannot also be the end (that would be a Hamiltonian cycle, impossible on an odd bipartite graph), so there are at most $4901 - 1 = 4900$ possible ends.

It remains to show: from the bottom-left corner $(1,1)$, there is a Hamiltonian path to every other black cell. We prove this by a standard ``peel the border'' induction.

\emph{Claim.} On any odd $(2k+1)\times(2k+1)$ grid, for any two distinct cells of the same color, there exists a Hamiltonian path between them.

\emph{Base case.} For $3\times 3$, one checks directly (e.g., from $(1,1)$ you can end at any of the four other even-parity cells).

\emph{Inductive step.} Suppose the claim holds for $(2k-1)\times(2k-1)$. Consider a $(2k+1)\times(2k+1)$ board, start $S=(1,1)$, and a target black cell $T\neq S$.
\begin{enumerate}
    \item \textbf{Traverse the outer ring.} From $(1,1)$ go right along the bottom row to $(2k+1,1)$, then up the right column to $(2k+1,2k+1)$, then left along the top row to $(1,2k+1)$, then down the left column to $(1,2)$. This covers every cell of the outer 1-cell-thick border exactly once and stops at $(1,2)$.
    \item \textbf{Choose an entry to the inner square.} Let $E$ be any inner-boundary cell with both coordinates even (e.g., $(2,2)$), except if $T=(2,2)$ then pick a different even inner-boundary cell such as $(2,4)$. From $(1,2)$ step to the adjacent inner-boundary cell that leads into $E$ in one move; include all necessary outer-ring cells before that step so the outer ring is fully used. Now we have a simple path from $S$ to $E$ that has covered every border cell and no inner cell other than $E$.
    \item \textbf{Use the induction inside.} Remove the border: the remaining inner board is $(2k-1)\times(2k-1)$, still odd. The chessboard coloring restricts, and both $E$ and $T$ are of the same color in the inner board. By the induction hypothesis, there is a Hamiltonian path in the inner board from $E$ to $T$ that covers all inner cells exactly once.
\end{enumerate}

Concatenating the outer-ring path $S\to E$ with the inner Hamiltonian path $E\to T$ gives a Hamiltonian path from $S$ to $T$ on the whole board. This completes the induction. Exactly the black cells except the start are possible ends, i.e.,
\[
\frac{99\cdot 99 + 1}{2} - 1 = \frac{9801 - 1}{2} = \boxed{4900}.
\]
\end{GPT-5-thinking-response}

\begin{solution}
The answer is 4900. Checkerboard color the grid so that the bottom-left cell is colored black. For parity reason, the end cell of the path is black. Since there are 4900 black cells other than the starting one, the path can end at at most 4900 cells.

We show the path can indeed end at each of the black cell other than the starting one. In fact, we induct on $k$ to show a path starting at the bottom-left cell in a checkerboard colored $(2k+1)\times(2k+1)$ grid can end at any other cell sharing the same color as the starting one. When $k=1$ one can check this by hand.

Assume $k>1$. Label the cells in the natural way so that the bottom-left is $(1,1)$. 

One can take a path that first traverses the entire outer border of the grid and ends at $(2,2)$; thus, by induction hypothesis, the path can end at any cell sharing the same color as $(1,1)$ that is not adjacent to the boundary and that is not $(2,2)$. Similarly, one can take a path that first traverses $(1,1)\sim(1,2k+1)\sim(2k+1,2k+1)\sim(2k+1,2k)\sim(2,2k)\sim(2,1)\sim(3,1)$; the induction hypothesis implies that the path can end at any cell sharing the same color as $(1,1)$ that is not in column $1,2$ or row $2k,2k+1$, and that is not $(3,1)$. Combining these two strategy, and use symmetry, we see that the path may end at any possible cell except possibly $(2,2),(3,1)$ (and symmetrically $(1,3)$). For those two cases, we may explicitly take the path $(1,1)\sim(1,2k+1)\sim(2k+1,2k+1)\sim(2k+1,2k)\sim(2,2k)\sim(2,2k-1)\sim(2k+1,2k-1)\sim\cdots\sim(2k+1,3)\sim(2k+1,1)\sim(2k,1)\sim(2k,2)\sim\cdots\sim(4,1)\sim(4,2)$ that traverse everything except the square $[2,3]\times[1,2]$, and concatenate it to a path that ends at either $(3,1)$ or $(2,2)$.
\end{solution}

\begin{rubricbox}
1 point for the correct answer $4900$ and correct proof of the upper bound.\\
2 points for the correct construction of the lower bound.
\end{rubricbox}

\begin{scorebox}
1/3
\end{scorebox}

\begin{evalbox}
The lower bound construction does not cover cells at the outer border, as well as the cell $(2,2)$.
\end{evalbox}

\ProblemSection{\red{Problem 4b3 } }
\begin{problembox}[Proposed by Jiaxin Zhang]
(\emph{Problem Decoposition})Let \( a, b, c, d, e \geq -1 \) be real numbers such that
\[
a + b + c + d + e = 5.
\]
Find the \textbf{minimum} and \textbf{maximum} values of the expression
\[
S = (a + b)(b + c)(c + d)(d + e)(e + a).
\]    

\fail
\end{problembox} 
\begin{GPT-5-thinking-response}
Set
\[
x_1=a+b,\quad x_2=b+c,\quad x_3=c+d,\quad x_4=d+e,\quad x_5=e+a.
\]
Then
\[
x_1+x_2+x_3+x_4+x_5=2(a+b+c+d+e)=10,
\]
and
\[
S=\prod_{i=1}^{5}x_i.
\]
From $a,b,c,d,e\ge -1$ we have the bounds
\[
-2\le x_i\le 8\qquad(i=1,\dots,5),
\]
since $x_i=a_i+a_{i+1}\ge -1+(-1)=-2$, and maximizing one pair sum while minimizing the other three variables at $-1$ gives
$x_i\le 5-(-1-1-1)=8$.

\medskip
\noindent\textbf{Maximum.}
If some $x_i\le 0$, then $S\le 0$, while the feasible point $a=b=c=d=e=1$ yields $S=2^5=32>0$. Hence at a maximizer we must have $x_i>0$ for all $i$, and by AM--GM,
\[
S=\prod_{i=1}^5 x_i\le \Big(\tfrac{x_1+\cdots+x_5}{5}\Big)^{\!5}
=\Big(\tfrac{10}{5}\Big)^{\!5}=32,
\]
with equality iff $x_1=\cdots=x_5=2$. The cyclic system
$a+b=b+c=\cdots=e+a=2$ forces $a=b=c=d=e=1$, which is feasible.
Therefore
\[
S_{\max}=32\quad\text{(attained at }(a,b,c,d,e)=(1,1,1,1,1)\text{)}.
\]

\medskip
\noindent\textbf{Minimum.}
Since $x_1+\cdots+x_5=10$, not all $x_i$ can be negative; thus the number of negative $x_i$ is odd and is either $1$ or $3$.

\emph{Case 1: exactly one negative $x_i$.} Write the negative one as $-u$ with $0<u\le 2$. Then the other four lie in $[0,8]$ and have sum $10+u\le 12$.
For fixed sum, their product is maximized when they are equal, hence
\[
|S|=u\cdot \prod_{\text{four pos}} x_i \le u\cdot\Big(\tfrac{10+u}{4}\Big)^{4}\le 2\cdot 3^{4}=162,
\]
so $S\ge -162$ in this case.

\emph{Case 2: exactly three negative $x_i$.} Write them as $-u_1,-u_2,-u_3$ with $0<u_i\le 2$.
Then the two positive ones $x_4,x_5\in[0,8]$ satisfy
\[
x_4+x_5=10+(u_1+u_2+u_3)\le 16.
\]
To make $S$ as negative as possible we maximize the magnitude
\[
|S|=u_1u_2u_3\cdot x_4x_5.
\]
The product $u_1u_2u_3$ is maximized by $u_1=u_2=u_3=2$, and for fixed sum $x_4+x_5\le 16$ with $x_4,x_5\in[0,8]$, the product $x_4x_5$ is maximized at $x_4=x_5=8$ (AM--GM). These choices also respect the sum constraint since $(-2)+(-2)+(-2)+8+8=10$. Thus
\[
S\ge -\,2\cdot 2\cdot 2\cdot 8\cdot 8=-512,
\]
and equality is attained, e.g.\ at $(a,b,c,d,e)=(-1,-1,-1,-1,9)$, for which
\[
(a+b,b+c,c+d,d+e,e+a)=(-2,-2,-2,8,8).
\]

Combining the two cases, the overall minimum is $-512$.

\medskip
\noindent\textbf{Conclusion.}
\[
\boxed{\,S_{\min}=-512\ \text{(e.g.\ at }(-1,-1,-1,-1,9)\text{ up to rotation)},\qquad
S_{\max}=32\ \text{(at }(1,1,1,1,1)\text{)}.\,}
\]    
\end{GPT-5-thinking-response}
\begin{solution}
\textbf{(1) Finding the minimum value of \( S \):}  
We only need to consider cases where \( S < 0 \).

\textbf{(i)} Suppose four of the five terms  
\[
a + b, \quad b + c, \quad c + d, \quad d + e, \quad e + a
\]  
are positive and one is negative. By symmetry, assume \( a + b < 0 \).

Note that \( a + b + c + d + e = 5 \), and since \( a, b \geq -1 \), we have \( a + b \geq -2 \).  
By the AM–GM inequality:

\[
\begin{aligned}
(b + c)(c + d)(d + e)(e + a) 
& \leq \left( \frac{(b + c) + (c + d) + (d + e) + (e + a)}{4} \right)^4 \\
& = \left( \frac{10 - a - b}{4} \right)^4 \leq 3^4 = 81.
\end{aligned}
\]

Thus,
\[
S \geq -2 \times 81 = -162.
\]

\textbf{(ii)} Suppose exactly two of the terms are positive and three are negative.  
Note that
\[
-2 \leq a + b = 5 - c - d - e \leq 8,
\]
and similarly,
\[
-2 \leq b + c, \quad c + d, \quad d + e, \quad e + a \leq 8.
\]  
So we can estimate:
\[
S \geq (-2)^3 \times 8^2 = -512.
\]

\textbf{(iii)} If all five terms are negative, then their sum is less than zero, which contradicts the fact that \( a + b + c + d + e = 5 \). So this case cannot occur.

\medskip
\noindent\textbf{Conclusion:}  
The minimum value of \( S \) is \( \boxed{-512} \), attained when  
\[
a = b = c = d = -1, \quad e = 9.
\]

\bigskip
\textbf{(2) Finding the maximum value of \( S \):}  
We only need to consider cases where \( S > 0 \).

\textbf{(i)} If all five terms are positive, then by the AM–GM inequality:
\[
\begin{aligned}
S & \leq \left( \frac{(a + b) + (b + c) + (c + d) + (d + e) + (e + a)}{5} \right)^5 \\
& = \left( \frac{2(a + b + c + d + e)}{5} \right)^5 = \left( \frac{10}{5} \right)^5 = 32.
\end{aligned}
\]

\textbf{(ii)} Suppose three terms are positive and two are negative.  
By symmetry, we only need to consider the following two cases.

\emph{Case 1:} \( a + b < 0 \), \( b + c < 0 \), and the other three terms are positive. Then:
\[
-2 \leq a + b < 0, \quad -2 \leq b + c < 0, \quad 0 < d + e \leq 8,
\]
and using AM–GM again:
\[
0 < (c + d)(e + a) \leq \left( \frac{c + d + e + a}{2} \right)^2 = \left( \frac{5 - b}{2} \right)^2 \leq 9.
\]
Hence,
\[
S \leq (-2)(-2) \times 8 \times 9 = 288.
\]

\emph{Case 2:} \( a + b < 0 \), \( c + d < 0 \), and the rest are positive. Then
\[
e = 5 - a - b - c - d \leq 7 - b - c < 7.
\]  
Applying AM–GM again:
\[
\begin{aligned}
S & \leq \left( \frac{(-a - b) + (b + c) + (-c - d) + (d + e) + (e + a)}{5} \right)^5 \\
  & = \left( \frac{2e}{5} \right)^5 < 3^5 = 243 < 288.
\end{aligned}
\]

\textbf{(iii)} If four terms are positive and one is negative, assume \( a + b > 0 \) by symmetry. Then:

\[
(b + c) + (c + d) + (e + a) = 5 + c < 0 \Rightarrow c < -5,
\]

which contradicts the constraint \( c \geq -1 \). So this case cannot occur.

\medskip
\noindent\textbf{Conclusion:}  
The maximum value of \( S \) is \( \boxed{288} \), attained when  
\[
a = b = c = -1, \quad d = e = 4.
\]

\end{solution}
\begin{rubricbox}
1.Correct upper bound 288 and proof.  

2.Correct lower bound -512 and proof.  
\end{rubricbox}

\begin{scorebox}
1/2
\end{scorebox}

\begin{evalbox}
Correctly obtained and proved the lower bound, but gave the wrong maximum (32 instead of 288).
\end{evalbox}

\ProblemSection{\red{Problem 4b4}}
\begin{problembox}[Proposed by Qiuyu Ren]
On a round table, there are $5$ plates, called plate $1,2,3,4,5$, placed clockwise evenly. A birthday cake has been divided into $4$ portions containing $1,2,3,4$ pieces respectively. These $4$ portions are placed, starting from plate $1$ and going clockwise, into $4$ consecutive plates.

Now, $5$ people sit down at the table, and a game begins. The rules are as follows:

1. If the plate in front of a person is empty, they rotate the turntable to a position of their choice.

2. Step 1 is repeated until someone encounters an empty plate for the second time.

3. Then each person takes the plate in front of them.

Question:
Where should you sit in order to get the most cake? (Assume each person is greedy and smart enough.)

\fail
\end{problembox}

\begin{GPT-5-thinking-response}
Sit at plate 4 (the one that starts with the 4-piece slice).

Why: the person facing the empty plate (plate 5) moves first. If they rotate at all, the empty will get passed around so that each of the other four people sees it once, and then someone will choose an orientation that maximizes their own slice and dump the empty back on the original person—who then hits an empty for the second time and gets 0 anyway. Realizing this, the first player’s best move is to keep the table as is (leave the empty in front of themselves), so nothing changes before their second encounter ends the game. Everyone then takes what’s already in front of them: 1, 2, 3, 4, 0.

So the max you can guarantee is by sitting at plate 4.
\end{GPT-5-thinking-response}

\begin{solution}
We claim that sitting in front of plate $4$ cakes would ensure getting $4$ cakes in the end; equivalently, the person sitting in front of plate $5$ will end up with the empty plate the second time, stopping the game. Assume otherwise that the person sitting in front of some plate $i<5$ gets the empty plate twice. Then, when getting the empty plate the first time, that person could have rotated the empty plate to its starting position and stop the game immediately to get a better outcome, a contradiction! Our claim is thus confirmed.
\end{solution}

\begin{rubricbox}
1 point if the solution correctly answers plate 4, as well as giving a correct explanation.
\end{rubricbox}

\begin{scorebox}
0/1
\end{scorebox}

\begin{evalbox}
No explanation for the claim ``If they rotate at all, the empty will get passed around so that each of the other four people sees it once.''
\end{evalbox}

\subsection{Logical Thinking and Reasoning}
\subsubsection{Constructing valid proofs}

\ProblemSection{\red{Problem 5a1 }}
\begin{problembox}[Proposed by Jiaxin Zhang]
Let \( X \) be a set with 56 elements. Find the smallest positive integer \( n \) such that, for any collection of 15 subsets of \( X \), if the union of any 7 of them contains at least \( n \) elements, then there must exist 3 of these subsets whose intersection is nonempty.    

\fail
\end{problembox}
\begin{GPT-5-thinking-response}
\textbf{Claim.} The smallest $n$ is $\boxed{42}$.

\medskip
\textbf{Sufficiency ($n=42$ forces a triple intersection).}
Let $\mathcal{F}=\{A_1,\dots,A_{15}\}$ be subsets of a $56$-point set $X$. 
Assume, toward a contradiction, that no point of $X$ lies in three of the sets (i.e.\ every $x\in X$ belongs to at most two of the $A_i$'s). 
Associate to $\mathcal{F}$ the multigraph $G$ on vertex set $\{1,\dots,15\}$ in which each $x\in X$ that lies in \emph{exactly two} sets $A_i,A_j$ contributes one parallel edge $ij$; points contained in $0$ or $1$ sets contribute nothing to $G$. 
Thus $|E(G)|\le 56$, with equality if every point lies in exactly two sets.

Given any $7$-subcollection $S\subseteq\mathcal F$, let $U$ be its complement (so $|U|=8$). 
A point $x\in X$ is \emph{missed} by the union $\bigcup S$ iff the two sets containing $x$ both lie in $U$, i.e.\ iff the corresponding edge of $G$ is internal to $U$. Hence
\[
\Bigl|\bigcup S\Bigr| \;=\; 56 - e_G(U),
\]
where $e_G(U)$ is the (multi)edge count inside the $8$-vertex set $U$.

Choose $U$ uniformly among the $\binom{15}{8}$ $8$-subsets. 
A fixed edge $ij$ is internal to $U$ with probability
\[
\frac{\binom{13}{6}}{\binom{15}{8}} \;=\; \frac{1716}{6435} \;=\; \frac{4}{15}.
\]
By linearity of expectation,
\[
\mathbb E\big[e_G(U)\big] \;=\; |E(G)|\cdot\frac{4}{15}\ \le\ 56\cdot\frac{4}{15} \;=\; \frac{224}{15}.
\]
Therefore
\[
\mathbb E\Bigl[\bigl|\bigcup S\bigr|\Bigr]
\;=\; 56 - \mathbb E\big[e_G(U)\big]
\;\ge\; 56 - \frac{224}{15}
\;=\; \frac{616}{15}\;\approx\; 41.07.
\]
As union sizes are integers, the average being $\le 41.07$ implies the existence of some $7$-subcollection $S$ with
\[
\Bigl|\bigcup S\Bigr|\ \le\ 41.
\]
Consequently, if \emph{every} $7$-subcollection has union size $\ge 42$, then our assumption was false: there must exist a point of $X$ contained in at least three of the $A_i$'s. 
This proves that $n=42$ suffices.

\medskip
\textbf{Sharpness (a family with $n=41$ and no triple intersection).}
We now sketch a concrete construction on $X$ with $|X|=56$ and a family $\mathcal F=\{A_1,\dots,A_{15}\}$ such that:
\begin{itemize}
\item each $x\in X$ lies in \emph{exactly two} sets (so there is no triple intersection), and
\item for every $7$-subcollection $S$, one has $\bigl|\bigcup S\bigr|\ge 41$.
\end{itemize}
As above, it suffices to build a multigraph $G$ on $15$ vertices with total edge multiplicity $56$ and
\[
e_G(U)\ \le\ 15\qquad\text{for every $8$-vertex subset $U$},
\]
because then $\bigl|\bigcup S\bigr|=56-e_G(U)\ge 56-15=41$ for every $7$-subcollection $S$.

Partition the $15$ vertices into $L\cup R$ with $|L|=7$, $|R|=8$.
Realize $G$ as a weighted bipartite graph on $L\!\times\! R$ (so each edge is a pair $\ell r$, $\ell\in L$, $r\in R$) with nonnegative integer weights $w_{\ell r}$ summing to $56$. 
We enforce the Zarankiewicz-type condition that for every choice of $4$ rows $L_0\subseteq L$ and $4$ columns $R_0\subseteq R$,
\[
\sum_{\ell\in L_0}\sum_{r\in R_0} w_{\ell r} \ \le\ 15.
\tag{$*$}
\]
Since any $8$-subset $U$ meets $L,R$ in sizes $a,b$ with $a+b=8$, the worst case is $(a,b)=(4,4)$; the constraint $(*)$ therefore guarantees $e_G(U)\le 15$ for all $U$.

A feasible integer matrix $(w_{\ell r})_{\ell\in L,\,r\in R}$ with total sum $56$ and property $(*)$ exists; for example, one can start from the all-ones $7\times 8$ matrix and
\emph{(i)} delete a small hitting set of entries so that every $4\times 4$ submatrix loses at least one unit (bringing each such sub-sum down to $\le 15$), and then
\emph{(ii)} add the same number of units back at carefully chosen positions so that no $4\times4$ submatrix receives more than one of these additions.
This yields an explicit integer matrix with total sum $56$ satisfying $(*)$, hence a family $\mathcal F$ with no triple intersection and with every $7$-union of size at least $41$.

\medskip
\emph{Remark.} An explicit $7\times 8$ table of weights $(w_{\ell r})$ meeting $(*)$ (hence giving a fully concrete $41$-tight example) can be produced by a short computer-assisted search (or by a hand-crafted covering design) and verified directly; we omit the bulky incidence table here for brevity.
    1
\end{GPT-5-thinking-response}
    
\begin{solution}

\textbf{Step 1: Prove that \( n = 41 \) satisfies the condition.}

We use proof by contradiction.

Assume there exist 15 subsets of a 56-element set \( X \), such that the union of any 7 of them contains at least 41 elements, but the intersection of any 3 of them is empty.

Since each element belongs to at most 2 subsets (otherwise some 3 subsets would have non-empty intersection), we may assume that each element belongs to exactly 2 subsets (otherwise we can add some elements to subsets without violating the assumptions).

By the pigeonhole principle, there exists a subset, say \( A \), that contains at least
\[
\left\lfloor \frac{2 \times 56}{15} \right\rfloor + 1 = 8
\]
elements. Let the remaining 14 subsets be denoted \( A_1, A_2, \ldots, A_{14} \).

Consider any 7-subset group from the 14 subsets excluding \( A \). Each such group contains at least 41 elements, so the total number of counted elements over all such 7-subset groups is at least
\[
41 \binom{14}{7}.
\]

On the other hand, for an element \( a \notin A \), since it belongs to 2 of the \( A_1, \ldots, A_{14} \), it is counted
\[
\binom{14}{7} - \binom{12}{7}
\]
times (i.e., all 7-subsets of the 14 that include at least one of the 2 subsets containing \( a \)).

For an element \( a \in A \), it belongs to only 1 of the \( A_1, \ldots, A_{14} \), so it is counted
\[
\binom{14}{7} - \binom{13}{7}
\]
times.

Thus, we have:
\[
\begin{aligned}
41 \binom{14}{7} &\leq (56 - |A|)\left(\binom{14}{7} - \binom{12}{7}\right) + |A| \left( \binom{14}{7} - \binom{13}{7} \right) \\
&= 56\left(\binom{14}{7} - \binom{12}{7}\right) - |A| \left( \binom{13}{7} - \binom{12}{7} \right) \\
&\leq 56\left(\binom{14}{7} - \binom{12}{7}\right) - 8\left( \binom{13}{7} - \binom{12}{7} \right).
\end{aligned}
\]

This leads to a contradiction because numerically we would get \( 196 \leq 195 \), which is false.

\medskip

\textbf{Step 2: Prove that \( n \geq 41 \).}

Again, we use proof by contradiction.

Suppose \( n \leq 40 \). Let \( X = \{1, 2, \ldots, 56\} \), and define the following 15 subsets:

\[
\begin{aligned}
A_i &= \{i, i+7, i+14, i+21, i+28, i+35, i+42, i+49\}, \quad i = 1, 2, \ldots, 7, \\
B_j &= \{j, j+8, j+16, j+24, j+32, j+40, j+48\}, \quad j = 1, 2, \ldots, 8.
\end{aligned}
\]

Clearly,
\[
|A_i| = 8 \quad (1 \leq i \leq 7), \qquad |A_i \cap A_j| = 0 \quad (i \ne j),
\]
\[
|B_j| = 7 \quad (1 \leq j \leq 8), \qquad |B_i \cap B_j| = 0 \quad (i \ne j),
\]
\[
|A_i \cap B_j| = 1 \quad \text{for all } i, j.
\]

So, for any 3 subsets, at least two of them are both \( A \)-type or both \( B \)-type subsets, whose intersection is empty. Thus, the intersection of any 3 subsets is empty.

Now consider any 7 subsets among the 15, say \( A_{i_1}, A_{i_2}, \ldots, A_{i_s} \) and \( B_{j_1}, B_{j_2}, \ldots, B_{j_t} \) with \( s + t = 7 \). The size of their union is:

\[
\begin{aligned}
\left| A_{i_1} \cup \cdots \cup A_{i_s} \cup B_{j_1} \cup \cdots \cup B_{j_t} \right| &= 8s + 7t - st \\
&= 8s + 7(7 - s) - s(7 - s) \\
&= (s - 3)^2 + 40 \\
&\geq 40.
\end{aligned}
\]

Hence, the union of any 7 subsets contains at least 40 elements, and the intersection of any 3 subsets is empty, so \( n \geq 41 \).

\medskip

\textbf{Conclusion:} The minimum value of \( n \) is \( \boxed{41} \).
    
\end{solution}
\begin{rubricbox}
1.~States and proves the correct minimum value: 41.  

2.~Gives a correct construction of a set of size 41.  
\end{rubricbox}

\begin{scorebox}
0/2
\end{scorebox}

\begin{evalbox}
1. The following equality is incorrect:
\[
\Bigl|\bigcup S\Bigr| \;=\; 56 - e_G(U),
\]

The following statement is also incorrect:
\[
\mathbb E\Bigl[\bigl|\bigcup S\bigr|\Bigr]
\;=\; 56 - \mathbb E\big[e_G(U)\big]
\;\ge\; 56 - \frac{224}{15}
\;=\; \frac{616}{15}\;\approx\; 41.07.
\]
As union sizes are integers, the average being $\le 41.07$ implies..."

The average is $\geq 41.07$ instead of $\le 41.07$.

2. No explicit construction of 41 set was provided.
\end{evalbox}

\subsubsection{Evaluating existing arguments}\ProblemSection{\green{Problem 5b1 }}
\begin{problembox}[Proposed by Jiaxin Zhang]
Check the following proof:
All Banach norms on a real vector space $X$ are equivalent.
\begin{itemize}
    \item Let $\|\cdot\|_1$ and $\|\cdot\|_2$ two Banach norms on $X$. Consider $\|\cdot\|_3:=\|\cdot\|_1+\|\cdot\|_2$. Prove that it is actually a norm. 
    \item 
Prove that a sequence converge to $x \in X$ w.r.to $\|\cdot\|_3$ if and only if it converges to $x$ both w.r.to $\|\cdot\|_1$ and w.r.to $\|\cdot\|_2$. 
\item 
Prove that a sequence is Cauchy wrto $\|\cdot\|_3$ if and only if it is Cauchy both w.r.to $\|\cdot\|_1$ and w.r.to $\|\cdot\|_2$. Deduce that $\|\cdot\|_3$ is complete. 
\item 
Apply the open mapping theorem tho the identity from $\left(X,\|\cdot\|_3\right)$ to $\left(X,\|\cdot\|_1\right)$ and from $\left(X,\|\cdot\|_3\right)$ to $\left(X,\|\cdot\|_2\right)$, and deduce that the three norms are equivalent.
\end{itemize}    

\pass
\end{problembox}
\begin{GPT-5-thinking-response}
    \begin{claim}
Not all Banach norms on a given (infinite dimensional) real vector space \(X\) are equivalent. 
\end{claim}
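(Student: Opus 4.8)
The plan is to establish the claim by exhibiting a single infinite-dimensional real vector space that carries two genuinely inequivalent complete norms, and then to pinpoint the one unjustified line in the displayed argument.

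First I would fix two separable infinite-dimensional Banach spaces that are not linearly homeomorphic; the most convenient pair is $(\ell^1,\|\cdot\|_{\ell^1})$ and $(\ell^2,\|\cdot\|_{\ell^2})$, which cannot be topologically isomorphic since $\ell^2$ is reflexive while $\ell^1$ is not, and reflexivity is preserved by linear homeomorphisms. The purely algebraic input I need is that these two spaces have the same Hamel dimension: by a Baire category argument no infinite-dimensional Banach space has countable Hamel dimension, an elementary argument with geometric sequences $t\mapsto(t^n)_{n\ge 1}$ and a Vandermonde determinant shows the dimension is at least the cardinality $\mathfrak c$ of the continuum, and separability forces the cardinality of the space, hence its dimension, to be at most $\mathfrak c$; so both dimensions equal $\mathfrak c$. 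Since two real vector spaces of equal Hamel dimension are linearly isomorphic, I would fix a linear bijection $T\colon \ell^1\to\ell^2$, ignoring topology entirely.

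Next I would transport the target norm: on the underlying vector space $X$ of $\ell^1$, define $\|x\|':=\|Tx\|_{\ell^2}$. This is visibly a norm, and $T$ becomes by construction a surjective linear isometry $(X,\|\cdot\|')\to(\ell^2,\|\cdot\|_{\ell^2})$, so $(X,\|\cdot\|')$ is complete. Thus $X$ carries the two Banach norms $\|\cdot\|_{\ell^1}$ and $\|\cdot\|'$. Were they equivalent, the identity map $(X,\|\cdot\|_{\ell^1})\to(X,\|\cdot\|')$ would be a topological isomorphism, and composing it with $T$ would exhibit $\ell^1\cong\ell^2$ as topological vector spaces, contradicting the non-reflexivity of $\ell^1$. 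Hence the two norms are inequivalent, which is the claim.

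Finally I would diagnose the displayed proof. Every step is correct except the clause ``Deduce that $\|\cdot\|_3$ is complete'': a $\|\cdot\|_3$-Cauchy sequence is indeed Cauchy for $\|\cdot\|_1$ and for $\|\cdot\|_2$ separately, hence converges to some $a$ with respect to $\|\cdot\|_1$ and to some $b$ with respect to $\|\cdot\|_2$, but nothing forces $a=b$, and when $a\neq b$ the sequence has no $\|\cdot\|_3$-limit at all. In fact $\|\cdot\|_3=\|\cdot\|_1+\|\cdot\|_2$ is complete precisely when $\|\cdot\|_1$ and $\|\cdot\|_2$ are already equivalent, so the argument silently assumes its own conclusion; with the pair $\|\cdot\|_{\ell^1},\|\cdot\|'$ constructed above one can even write down an explicit $\|\cdot\|_3$-Cauchy sequence whose two limits differ. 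The only non-elementary ingredient, and the main obstacle, is the existence of $T$: it rests on the Hamel-dimension count and hence on the axiom of choice, which is unavoidable, since consistently every linear map between Banach spaces with closed graph is continuous and no such counterexample can be produced constructively.
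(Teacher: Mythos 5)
Your proposal is correct and follows essentially the same route as the paper's response: both exhibit inequivalent Banach norms by transporting the $\ell^2$ norm through a Hamel-basis linear bijection onto a space carrying the $\ell^1$ norm, and both locate the flaw in the displayed argument at the deduction of completeness of $\|\cdot\|_3$, since the $\|\cdot\|_1$- and $\|\cdot\|_2$-limits of a $\|\cdot\|_3$-Cauchy sequence need not coincide. Your write-up is in fact slightly more careful than the paper's, justifying the equality of Hamel dimensions and the non-isomorphism $\ell^1\not\cong\ell^2$ via reflexivity, details the paper's version asserts without proof.
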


\begin{proof}[Check of the proposed proof and the gap]
Let \(\|\cdot\|_1,\|\cdot\|_2\) be Banach norms on \(X\), and set \(\|x\|_3:=\|x\|_1+\|x\|_2\).

\smallskip
\emph{(i) \(\|\cdot\|_3\) is a norm.} Positivity and homogeneity are clear. For the triangle inequality,
\[
\|x+y\|_3=\|x+y\|_1+\|x+y\|_2\le(\|x\|_1+\|y\|_1)+(\|x\|_2+\|y\|_2)=\|x\|_3+\|y\|_3.
\]

\smallskip
\emph{(ii) Convergence and Cauchy equivalence.} For any fixed \(x\),
\[
\|x_n-x\|_3\to0 \iff \|x_n-x\|_1\to0 \text{ and } \|x_n-x\|_2\to0,
\]
and similarly, \((x_n)\) is Cauchy in \(\|\cdot\|_3\) iff it is Cauchy in each \(\|\cdot\|_1,\|\cdot\|_2\)
(since \(\|\,\cdot\,\|_3=\|\,\cdot\,\|_1+\|\,\cdot\,\|_2\)).

\smallskip
\emph{(iii) The gap.} From the Cauchy equivalence one cannot conclude that \((X,\|\cdot\|_3)\) is complete.
Indeed, a sequence that is Cauchy in \(\|\cdot\|_3\) converges in \((X,\|\cdot\|_1)\) to some \(x_1\) and in 
\((X,\|\cdot\|_2)\) to some \(x_2\), but there is no a priori reason that \(x_1=x_2\). Without completeness of
\((X,\|\cdot\|_3)\), the Open Mapping Theorem cannot be applied to the identity maps
\[
\mathrm{id}:(X,\|\cdot\|_3)\to(X,\|\cdot\|_1),\qquad 
\mathrm{id}:(X,\|\cdot\|_3)\to(X,\|\cdot\|_2).
\]
\end{proof}

[Why the general statement is false]
Let \(X\) be an infinite dimensional real vector space. There exist linear bijections
\(T_1:X\to\ell^1\) and \(T_2:X\to\ell^2\) (they have the same Hamel dimension).
Define
\[
\|x\|_1:=\|T_1 x\|_{\ell^1},\qquad \|x\|_2:=\|T_2 x\|_{\ell^2}\qquad (x\in X).
\]
Then \((X,\|\cdot\|_1)\) is isometric to \(\ell^1\) and \((X,\|\cdot\|_2)\) is isometric to \(\ell^2\), so both are Banach.
If these norms were equivalent on \(X\), the identity
\(\mathrm{id}:(X,\|\cdot\|_1)\to(X,\|\cdot\|_2)\) would be an isomorphism of Banach spaces, implying
\(\ell^1\simeq \ell^2\) as Banach spaces, which is false. Hence \(\|\cdot\|_1\) and \(\|\cdot\|_2\) need not be equivalent.

Now consider \(\|x\|_3=\|x\|_1+\|x\|_2\). If \((X,\|\cdot\|_3)\) were complete, the identity maps
\((X,\|\cdot\|_3)\to(X,\|\cdot\|_1)\) and \((X,\|\cdot\|_3)\to(X,\|\cdot\|_2)\) would be continuous, surjective, and by the 
Open Mapping Theorem would have continuous inverses. This would give
\(\|x\|_1\asymp \|x\|_3\) and \(\|x\|_2\asymp \|x\|_3\), hence \(\|x\|_1\asymp \|x\|_2\), a contradiction.
Therefore \((X,\|\cdot\|_3)\) cannot be complete in this example.

\begin{itemize}
\item \textbf{Finite dimensional case.} If \(\dim X<\infty\), then \emph{all} norms on \(X\) are equivalent (compactness of the unit sphere or a basis argument).
\item \textbf{A correct route to equivalence.} Let \(\|\cdot\|_1,\|\cdot\|_2\) be norms such that both 
\((X,\|\cdot\|_1)\) and \((X,\|\cdot\|_2)\) are Banach. If the identity
\((X,\|\cdot\|_2)\to (X,\|\cdot\|_1)\) is continuous (i.e.\ \(\|x\|_1\le C\|x\|_2\) for all \(x\)), then by the Open Mapping (or Bounded Inverse) Theorem its inverse is continuous, so the norms are equivalent.
\item \textbf{Completeness is preserved by equivalence.} If two norms are equivalent on \(X\), then \(X\) is complete under one iff it is complete under the other.
\end{itemize}

\end{GPT-5-thinking-response}
\begin{solution}
   The key point is that being converging in both norms does not imply convergence in the third unless the limits agree.

    \href{Math Stack Exchange Discussion}{https://mathoverflow.net/questions/44716/counterexample-for-the-open-mapping-theorem}    
\end{solution}
 \begin{rubricbox}
1 point is awarded for detecting the error in: "A Cauchy sequence converging in both norms \(\|\cdot\|_1,\|\cdot\|_2\) does not imply the existence of limit in the \(\|\cdot\|_3\) unless two limits agree."
 \end{rubricbox}
 \begin{scorebox}
 1/1    
 \end{scorebox}
 \begin{evalbox}
 It accurately found the error in the proof.    
 \end{evalbox}

\ProblemSection{\green{Problem 5b2 }}
\begin{problembox}[Proposed by Jiaxin Zhang]
Check the proof: A friend of mine came up with this proof that all perfect numbers are even. A little more advanced than your usual fake proof that $0=1$.

Assume

$$
2 n=\sum_{d \mid n} d
$$

Then by Möbius inversion:

$$
n=\sum_{d \mid n} 2 d \cdot \mu\left(\frac{n}{d}\right)
$$

and therefore $n$ is even. 

\pass
\end{problembox} 
\begin{GPT-5-thinking-response}
\begin{claim}
The proposed proof is incorrect; in fact, the existence of odd perfect numbers is an open problem. 
\end{claim}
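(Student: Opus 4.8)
The plan is to (i) isolate the exact step that fails, (ii) confirm the diagnosis by a one-line computation, and (iii) reconcile the outcome with the fact that the asserted theorem would resolve a famous open problem, so a three-line proof of it is \emph{a priori} suspect.

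First I would scrutinize the phrase ``by M\"obius inversion.'' The M\"obius inversion theorem demands a relation $g(m)=\sum_{d\mid m} f(d)$ valid \emph{for every} positive integer $m$ before it licenses $f(m)=\sum_{d\mid m} g(d)\,\mu(m/d)$. The proof silently takes $g(m)=2m$ and $f(d)=d$; but $\sum_{d\mid m} d=\sigma(m)$, and $\sigma(m)=2m$ is \emph{not} an identity --- it is the defining property of a perfect number, true only for perfect $m$ (already $\sigma(1)=1\neq 2$). The hypothesis ``$2n=\sum_{d\mid n} d$'' is being assumed at the single value $m=n$ and encodes nothing about the values at the proper divisors of $n$, which is exactly the data M\"obius inversion would consume. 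Hence the inversion step is unjustified: this is the error.

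Next I would make the failure concrete. Granting the manipulation \emph{formally}, the right-hand side of the proof's displayed line is $\sum_{d\mid n} 2d\,\mu(n/d)=2\sum_{d\mid n} d\,\mu(n/d)=2\varphi(n)$, using the standard identity $\mathrm{id}*\mu=\varphi$ (Euler's totient). So the line ``$n=\sum_{d\mid n}2d\,\mu(n/d)$'' is really the claim $n=2\varphi(n)$, which fails for most $n$; in particular it fails for the perfect number $n=6$, where it would read $6=2\varphi(6)=4$. Thus the concluding sentence ``and therefore $n$ is even'' infers a true-looking consequence ($2k$ is even) from a false equation, and the argument is void. For comparison, the \emph{legitimate} M\"obius inversion of the always-true $\sigma=\mathrm{id}*\mathbf{1}$ is $n=\sum_{d\mid n}\sigma(d)\,\mu(n/d)$; substituting a perfect $n$ does not collapse this, since $\sigma(d)\neq 2d$ for proper divisors $d<n$, and no parity statement can be extracted.

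Finally, for the ``in fact'' clause I would only record the standard sanity check: ``all perfect numbers are even'' is logically equivalent to ``no odd perfect number exists,'' one of the oldest unresolved questions in number theory. A valid elementary proof would be a sensational result --- strong independent evidence that the argument must contain a gap, namely the one identified above. The hard part here is not any computation but the conceptual point: resisting the temptation to treat ``$2n=\sigma(n)$'' as though it were a functional identity. Once one insists on the genuine hypotheses of M\"obius inversion, the objection is immediate.
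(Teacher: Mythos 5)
Your proposal is correct and matches the paper's own argument essentially point for point: both identify the error as treating the pointwise equality $\sigma(n)=2n$ as the functional identity required by M\"obius inversion (equivalently, illegitimately substituting $\sigma(d)=2d$ for every divisor $d\mid n$), both note that the formal manipulation would yield $n=2\varphi(n)$ via $\mathrm{id}*\mu=\varphi$, and both refute it with the counterexample $n=6$. Nothing further is needed.
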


\begin{proof}
Let \(\sigma(n)=\sum_{d\mid n} d\) and \(\mathrm{Id}(n)=n\). As arithmetic functions under Dirichlet convolution \((f*g)(n)=\sum_{d\mid n} f(d)g(n/d)\), we have
\[
\sigma = \mathbf{1} * \mathrm{Id},
\]
where \(\mathbf{1}(n)\equiv 1\). Möbius inversion then gives the \emph{functional} identity
\[
\mathrm{Id} \;=\; \mu * \sigma
\quad\Longleftrightarrow\quad
n \;=\; \sum_{d\mid n} \mu\!\left(\frac{n}{d}\right)\sigma(d)\qquad\text{for all } n\ge 1.
\]

If \(n\) is perfect, we have the \emph{numeric} equality \(\sigma(n)=2n\) for that particular \(n\). 
However, Möbius inversion applies to identities of functions (holding for every input), not to an equality that happens to hold at one value. From \(\sigma(n)=2n\) you may \emph{not} substitute \(\sigma(d)=2d\) for each divisor \(d\mid n\); doing so would tacitly assume every divisor \(d\) of \(n\) is itself perfect.

Indeed, if one (incorrectly) made that substitution,
\[
n \stackrel{\phantom{\times}}{=}
\sum_{d\mid n} \mu\!\left(\frac{n}{d}\right)\sigma(d)
\stackrel{\text{(wrong)}}{=}
\sum_{d\mid n} \mu\!\left(\frac{n}{d}\right)\,2d
= 2\sum_{d\mid n} \mu\!\left(\frac{n}{d}\right) d
= 2\,\varphi(n),
\]
using the standard identity \(\sum_{d\mid n}\mu(n/d)\,d=\varphi(n)\). This already contradicts known perfect numbers; for instance, for \(n=6\) we get \(2\varphi(6)=4\neq 6\).

Therefore the argument fails because it treats the pointwise equality \(\sigma(n)=2n\) as if it were the functional identity \(\sigma=2\,\mathrm{Id}\), which it is not.
\end{proof}
    
\end{GPT-5-thinking-response}
\begin{solution}
You may \emph{not} substitute \(\sigma(d)=2d\) for each divisor \(d\mid n\); doing so would tacitly assume every divisor \(d\) of \(n\) is itself perfect.   
\end{solution}
\begin{rubricbox}
Got 1 if it finds the error at \(\sigma(d)=2d\) 
\end{rubricbox}
\begin{scorebox}
1/1
\end{scorebox}
\begin{evalbox}
    It accurately found the error in the proof.
\end{evalbox}

\subsection{Writing and Presentation}
\subsubsection{Writing clean structured proofs and definitions}\ProblemSection{\green{Problem 6a1}}

\begin{problembox}[Proposed by Qiuyu Ren]
Suppose there are 10000 fish, numbered from 1 to 10000 in increasing size.

The rules are as follows:
\begin{itemize}
  \item A fish can only eat the fish that is exactly one rank smaller than itself (that is, fish $i$ can only eat fish $i - 1$).
  \item Furthermore, for $i>1$, fish $i + 1$ can only eat fish $i$ after fish $i$ has eaten fish $i - 1$.
  \item Fish 1 cannot eat any other fish; it can only be eaten.
\end{itemize}

We assume:
\begin{enumerate}[label=\arabic*.]
  \item If there is a smaller fish available to eat, a larger fish will not pass up the opportunity.
  \item However, self-preservation takes priority: before eating a smaller fish, a fish must ensure that doing so will not lead to it being eaten.
  \item Every fish is infinitely intelligent, and they all know that, and they all know that they all know that, and so on (common knowledge).
\end{enumerate}

Question: Will fish 1,2,3 survive?

\pass
\end{problembox}

\begin{GPT-5-thinking-response}
\textbf{Answer:}
\begin{itemize}
    \item Fish 1: \textbf{dies}
    \item Fish 2: \textbf{survives}
    \item Fish 3: \textbf{survives}
\end{itemize}

\textbf{Reasoning:}

Because of the ``gate'' rule --- fish $i+1$ can only eat fish $i$ after $i$ has eaten $i-1$ --- the only way any eating can start is if fish $2$ eats fish $1$.

We ask: if $2$ eats $1$, will $3$ then eat $2$? Fish $3$ would only do that if it were safe from fish $4$.  

Work backwards from the top of the chain:
\begin{itemize}
    \item Fish $10000$ has no predator, so if enabled it would eat fish $9999$.
    \item Therefore, fish $9999$ will avoid eating $9998$ (to avoid being eaten by $10000$).
    \item This reasoning propagates downward in an alternating pattern: $9998$ would eat $9997$, $9997$ would avoid $9996$, and so on.
\end{itemize}

Tracing this alternation shows that fish $4$ would eat fish $3$ if $3$ ever ate $2$. Hence $3$ will \emph{not} eat $2$.  

Knowing this, fish $2$ sees it is safe to eat fish $1$. Once $1$ is eaten, no further eating is possible.

\textbf{Conclusion:} Fish $1$ dies, while fish $2$ and fish $3$ survive.

\medskip
\textbf{Generalization:} With an even total number of fish, only fish $1$ gets eaten; with an odd total, no fish eats at all.
\end{GPT-5-thinking-response}

\begin{solution}
We use induction to prove that if there are odd number of fishes, no fish gets eaten, while if there are even number of fishes, exactly fish 1 gets eaten.

The base case when there's only one fish is trivial.

For the induction step, suppose there are $n$ fishes, $n>1$. If $n$ is even, fish 2 knows that eating fish 1 is safe by induction hypothesis, so it eats fish 1 and no other fish gets eaten by induction hypothesis. Similarly, if $n$ is odd, fish 2 knows that eating fish 1 is not safe by induction hypothesis, so it does not eat fish 1, and no fish gets eaten.

Consequently, when there are 10000 fishes, fish 1 gets eaten, while fishes 2,3 survive.
\end{solution}

\begin{rubricbox}
2 point for correct answer and good explanation.\\
1 point off if the answer lacks clarity or contains unnecessary argument.
\end{rubricbox}

\begin{scorebox}
2/2
\end{scorebox}

\begin{evalbox}
Correct answer and clear explanation.
\end{evalbox}

\subsubsection{Communicating ideas to different audiences}

\ProblemSection{\red{Problem 6b1 }}
 \begin{problembox}[Proposed by Tahsin Saffat]
  Explain to a mathematician why symplectic geometry is the correct setting for classical mechanics.    

  \fail
  \end{problembox}

\begin{GPT-5-thinking-response}
    Here’s a compact “coordinate-free” case for why classical mechanics lives most naturally in symplectic geometry.

1) Phase space is canonically a symplectic manifold

A mechanical system with configuration space $Q$ has phase space $T^*Q$. It comes with a canonical 1-form $\theta$ (“Liouville form”) defined by $\theta_{(q,p)}(v)=p(d\pi(v))$, $\pi:T^*Q\to Q$. Its differential

$$
\omega := d\theta
$$

is a closed, nondegenerate 2-form. That pair $(T^*Q,\omega)$ is the **symplectic** structure, requiring no coordinates or auxiliary choices. Any smooth change of variables preserving $\omega$ is a **canonical transformation** (symplectomorphism).

Darboux’s theorem says every symplectic manifold is locally $(\mathbb{R}^{2n},\sum_i dp_i\wedge dq^i)$. So symplectic geometry captures precisely the invariant content of the usual canonical coordinates—no more, no less.

2) Dynamics is a Hamiltonian vector field determined by $\omega$

A Hamiltonian $H:M\to\mathbb{R}$ (energy) defines a unique vector field $X_H$ by

$$
\iota_{X_H}\,\omega = dH.
$$

This *is* Hamilton’s equations, but coordinate-free. The flow $\varphi_t$ of $X_H$ satisfies $\varphi_t^*\omega=\omega$ and $H\circ \varphi_t=H$ (when $H$ is time-independent). Thus conservation of phase-space area/volume and energy is built into the geometry.

3) Observables form a Poisson algebra induced by $\omega$

For observables $f,g\in C^\infty(M)$,

$$
\{f,g\} := \omega(X_f,X_g)
$$

is the **Poisson bracket**. It is bilinear, antisymmetric, satisfies Jacobi, and makes $C^\infty(M)$ a Poisson algebra. Dynamics is $\dot f = \{f,H\}$. Canonical transformations are exactly the automorphisms of this Poisson algebra. This is the right notion of “structure-preserving change of variables.”

4) Volume and statistics: Liouville measure

$\omega^n/n!$ is a natural volume form. Hamiltonian flows preserve it (Liouville’s theorem), which underlies equilibrium statistical mechanics and the meaning of “uniform distribution in phase space.” Again: a geometric invariant, not a coordinate artifact.

5) Symmetries and conserved quantities = moment maps

If a Lie group $G$ acts symplectically on $(M,\omega)$, a **moment map**

$$
\mu:M\to\mathfrak{g}^*, \qquad d\langle\mu,\xi\rangle=\iota_{X_\xi}\omega
$$

packages Noether’s theorem: $\langle\mu,\xi\rangle$ is conserved along the flow iff $\xi$ generates a symmetry. The bracket satisfies $\{\langle\mu,\xi\rangle,\langle\mu,\eta\rangle\}=\langle\mu,[\xi,\eta]\rangle$. This is exactly the algebra of angular momentum, linear momentum, etc., made intrinsic.

6) Reduction and constraints (Marsden–Weinstein)

First-class constraints and gauge symmetries fit cleanly: the reduced phase space

$$
M_{\text{red}} = \mu^{-1}(c)/G
$$

inherits a symplectic form and hosts the true dynamics. This is the geometric content behind eliminating cyclic coordinates, rigid-body examples, and more elaborate gauge reductions.

7) Variational principles, Legendre transform, and contact lift

Regular Lagrangians $L:TQ\to\mathbb{R}$ are Legendre-dual to Hamiltonians $H:T^*Q\to\mathbb{R}$. The Cartan 1-form on $TQ$ pushes forward to $\theta$ on $T^*Q$. Time-dependent systems sit naturally in **contact** geometry ($M\times\mathbb{R}_t$ with a contact form), whose symplectization recovers autonomous Hamiltonian dynamics. So Lagrange and Hamilton pictures are two faces of the same symplectic coin.

8) Integrable systems = action–angle on symplectic manifolds

Complete integrability (Arnold–Liouville) is a symplectic statement: on a regular invariant torus, there are **action–angle** coordinates $(I,\phi)$ with $\omega=\sum dI_i\wedge d\phi_i$ and linear flow $\dot\phi=\omega(I)$. This explains normal forms and long-time behavior without coordinates tied to any specific model.

9) Singular/rigid examples via Poisson and coadjoint orbits

Not every classical system is globally a cotangent bundle; constraints can produce **Poisson** manifolds whose **symplectic leaves** are the true phase spaces. Coadjoint orbits carry the Kirillov–Kostant–Souriau symplectic form and model rigid body tops, ideal fluids, plasmas, etc. Symplectic geometry is the regular part of this broader but still geometric picture.

10) Bridge to quantum mechanics

Geometric quantization and deformation quantization start from $(M,\omega)$ and the Poisson algebra, turning $\{f,g\}$ into $[ \hat f,\hat g ]/(i\hbar)$ to first order. The classical–quantum interface is literally organized by the symplectic form.

---

**Bottom line.**
Classical mechanics is about: (i) a space of states with a canonical, coordinate-free area form pairing momenta with displacements; (ii) dynamics as the Hamiltonian vector field singled out by that pairing; (iii) symmetries, conservation laws, integrability, constraints, and statistical measures encoded invariantly by that same structure. That structure is exactly a symplectic manifold. Hence: symplectic geometry isn’t just convenient—it *is* the intrinsic language of classical mechanics.

\end{GPT-5-thinking-response}

    \begin{solution}
        Source: \hyperlink{https://cohn.mit.edu/symplectic/}{https://cohn.mit.edu/symplectic/}

My goal is to give a precise formulation of the idea of the phase space of a system in classical mechanics. This space is supposed to be a manifold with some additional structure. The points correspond to states of the system, and the additional structure is essentially the laws of motion. The dynamics of the system will be determined by a function on the manifold called the Hamiltonian (it gives the energy of each state of the system).

The fundamental example to keep in mind is n particles in three-dimensional space, acted on by some force depending on their relative positions. The phase space is (R6)n, since each particle is specified by six coordinates: three position coordinates and three momentum coordinates. The additional structure is supposed to correspond to Newton's laws (this idea will be made more precise below). The dynamics are determined from the potential function for the force, i.e. the Hamiltonian.

In more general settings there is no reason why the underlying manifold should be Euclidean space. For example, particles may be constrained to lie on curves or surfaces, or the states may not even be described in terms of a bunch of particles. It's natural to ask how abstractly this can be set up. Can we axiomatize the properties of phase space, and then just deal with it abstractly (without worrying about the precise setup that led to it)? In fact we can, and symplectic manifolds are the appropriate abstraction. In what follows I'll attempt to explain where symplectic manifolds come from. Whenever I write "should," I am indicating an intuitive argument.

Suppose M is a smooth manifold. What additional structure is needed to define dynamics on M? We want to turn a Hamiltonian H (i.e., any smooth function on M) into a vector field V. Then the dynamics consists of the flow along integral curves of V.

First, note that the vector field V should depend only on the differential dH. This makes perfect sense, since the dynamics should depend only on how the energy is changing locally, and not for example on a global additive constant.

Second, V should depend linearly on dH. This amounts to the assertion that Newton's laws are linear differential equations, which is true.

What sort of object turns a 1-form dH into a vector field V in this way? The most natural such object is a tensor field, in particular a section of $\mathrm{Hom}(T^*M,TM)$. Here TM denotes the tangent bundle of M and $T^*M$ denotes the cotangent bundle. It will actually be more convenient to use the dual approach, namely $\mathrm{Hom}(TM,T^*M)$, which is of course the same as $T^*M \otimes T^*M$. It works as follows:

Let $\omega$ be a section of $T^*M \otimes T^*M$. Over each point of M, $\omega$ is a bilinear mapping from pairs to vectors to the field R of real numbers. The 1-form dH gives us a covector at each point, and the vector field V should be such that plugging it into $\omega$ gives dH. More precisely, for each vector field X, $\omega(V,X) = (dH)(X)$.

Our first requirement is that $\omega$ must be nondegenerate, so we can always solve for V. Specifically, we require that for every 1-form $\nu$ on M, there is a vector field V such that $\omega(V,X) = \nu(X)$ for all vector fields X. (It follows from linear algebra that V is always unique if it always exists.)

So far we've got a manifold M with a nondegenerate tensor field $\omega$ in $T^*M \otimes T^*M$. However, not every such tensor field will do, since we really haven't used much information about the laws of physics yet. The next restriction is conservation of energy. Under which conditions on $\omega$ does conservation of energy hold?

What it means in our situation is that the Hamiltonian H is constant along the flow lines of our vector field V. In other words, (dH)(V) = 0. In terms of $\omega$, that means $\omega(V,V)=0$. This must hold for all vector fields V coming from Hamiltonians, which means that the bilinear form above each point of M must be an alternating form. (Not every vector field comes from a Hamiltonian, but each tangent vector occurs in some Hamiltonian vector field.) In other words, $\omega$ is a differential 2-form on M.

A symplectic manifold is a manifold together with a closed, nondegenerate 2-form. We have seen that the 2-form corresponds to having laws of physics given by linear differential equations that conserve energy, and nondegeneracy is needed to ensure that the equations have solutions. All that remains is to explain why $\omega$ should be closed, i.e., why $d\omega = 0$. Unfortunately I don't see how to explain that without slightly more notation, but it won't be too bad.

This requirement corresponds to a slightly more subtle issue, namely that the laws of physics should not depend on time. Let $F_t$ denote the time-t flow along the vector field corresponding to some Hamiltonian H. One natural way to get the laws of physics at time t is to look at the pullback ${F_t}_*\omega$. We want it to equal $\omega$. Clearly it does so at time 0, so to check that it always does we simply differentiate with respect to t, as follows.

If V is the vector field corresponding to H, then the derivative of ${F_t}_*\omega$ with respect to t is ${F_t}_*L_V\omega$, where $L_V$ denotes the Lie derivative with respect to V. Let $i_V$ denote the interior product with V, so for example $dH = i_V\omega$. Then $L_V \omega = i_V(d\omega) + d(i_V\omega) = i_V(d\omega) + d(dH) = i_V(d\omega)$ (because $d^2 = 0$). The requirement that $\omega$ be closed means that $L_V\omega = 0$. Then ${F_t}_*\omega$ does not depend on t, so it always equals $\omega$, as desired.

I've swept one tiny issue under the rug. One might wonder why this setup seems to be using first-order differential equations when Newton's laws are typically stated as second-order equations. That just amounts to the usual trick of replacing a second-order differential equation in terms of position with a pair of first-order equations in terms of position and velocity (here, momentum).
        
    \end{solution}

\begin{rubricbox}
    1. Explain why the mathematical concept of  symplectic  manifold correctly captures the physical notion of phase space.
\end{rubricbox}

\begin{scorebox}
    0/1
\end{scorebox}

    \begin{evalbox}
        The explanation does not address \textit{why} $\omega$ should be a 2-form (rather than simply an bilinear pairing on $TM$) and moreover why it should be closed.  
    \end{evalbox}

\subsection{Intuition} 
\subsubsection{Geometric Intuition}

\ProblemSection{\red{Problem 7a1 }}
 
\begin{problembox}[Proposed by Qiuyu Ren]
Show that every smoothly embedded closed orientable connected surface in the $3$-sphere bounds a smoothly embedded $3$-dimensional handlebody (i.e. a boundary sum of solid tori) in the $4$-ball.

\fail
\end{problembox} 

\begin{GPT-5-thinking-response}
Let \(\Sigma \subset S^{3} = \partial B^{4}\) be a smooth, closed, orientable, connected surface. Denote the two components of \(S^{3} \setminus \Sigma\) by \(X\) and \(Y\).

\textbf{Step 1 (Compress on one side until spheres).}
Because \(S^{3}\) contains no closed incompressible surfaces of positive genus, \(\Sigma\) is compressible in \(X\). Choose a \emph{maximal} collection \(\Delta\) of pairwise disjoint compressing disks for \(\Sigma\) in \(X\). Compress \(\Sigma\) along all disks of \(\Delta\). The result \(\Sigma_{\Delta}\) is incompressible in the compressed manifold \(X_{\Delta}\), hence every component of \(\Sigma_{\Delta}\) is a 2--sphere. Since \(S^{3}\) is irreducible, each such sphere bounds a 3--ball in \(X_{\Delta}\) (and thus in \(X\)).

Equivalently, \(X\) is obtained from a (possibly disconnected) union \(B\) of 3--balls by attaching 1--handles; the attaching curves on \(\partial B\) are the circles \(\partial\Delta\), and adding those tubes to \(\partial B\) recovers \(\Sigma\).

\textbf{Step 2 (Push into \(B^{4}\) and take the dual picture).}
Give \(S^{3}\) a collar in \(B^{4}\), \(S^{3} \times [0,\varepsilon] \subset B^{4}\). Push the interior of each 3--ball of \(B\) slightly into \(B^{4}\) (so each becomes a 0--handle properly embedded in \(B^{4}\)), and for each tube used above attach a standard 1--handle in the collar with core an arc parallel to the corresponding co--core in \(X\). Because the compressing disks were disjoint, these 1--handles can be chosen disjoint.

Let \(H \subset B^{4}\) be the resulting 3--manifold. By construction:
\begin{itemize}
    \item \(H\) has a handle decomposition with only 0-- and 1--handles, hence \(H\) is a (3--dimensional) handlebody, i.e.\ a boundary connected sum of solid tori (a regular neighborhood of a properly embedded graph).
    \item The boundary of \(H\) on \(\partial B^{4} = S^{3}\) is precisely the surface obtained from \(\partial B\) by adding the tubes dual to \(\Delta\), i.e.\ it is \(\Sigma\).
\end{itemize}

Since \(\Sigma\) is connected, the underlying graph is connected, so \(H\) is a connected handlebody. Its genus equals \(g(\Sigma)\).

Thus every smoothly embedded closed orientable connected surface in \(S^{3}\) bounds a 3--dimensional handlebody in \(B^{4}\). \(\qed\)

\emph{Remarks.}
\begin{itemize}
    \item If one prefers, one can pick \(\Delta\) nonseparating of size \(g(\Sigma)\), yielding a single 3--ball and \(g\) 1--handles, so \(H \cong \#^{\,g}(S^{1} \times D^{2})\).
    \item Conceptually, this is the ``movie'' that \(\Sigma\) is obtained from a sphere by a sequence of ambient 1--surgeries; thickening the movie in the time direction gives a 3--dimensional cobordism made only of 0-- and 1--handles.
\end{itemize}
\end{GPT-5-thinking-response}

\begin{solution}
Every orientable surface $\Sigma$ in $S^3$ with positive genus is compressible, and a sequence of compressions reduce $\Sigma$ to a disjoint union of $2$-spheres, which can then be capped off by a sequence of $3$-balls. One can build a cobordism $W$ in $I\times S^3$ from $\Sigma\subset\{0\}\times S^3$ to $\emptyset\subset\{1\}\times S^3$ by realizing the compressions and caps as ambient $2,3$-handle attachments. Capping $\{1\}\times S^3$ by a $4$-ball and turn the picture upside down, $W\subset B^4$ is now a $3$-manifold bounding $\Sigma$ with only $0,1$-handles. When $\Sigma$ is connected, $W$ is a handlebody.
\end{solution}

\begin{rubricbox}
2 point if the proof is correct.\\
1 point if the proof has the correct idea but contains missing or incorrect arguments.
\end{rubricbox}

\begin{scorebox}
1/2
\end{scorebox}

\begin{evalbox}
The proof idea is correct. However, one may not be able to do all compressions simultaneously.
\end{evalbox}

\ProblemSection{\red{Problem 7a2}}

\begin{problembox}[Proposed by Qiuyu Ren]
Among the 9 bands shown, which ones can be continuously deformed to another in the 3-space, without crossing itself?
\begin{figure}[H]
\centering
\includegraphics[width=0.5\linewidth]{bands.jpg}
\end{figure}

\fail
\end{problembox}

\begin{GPT-5-thinking-response}
We regard each picture as a narrow ribbon, whose type (up to deformation in $3$--space) 
is determined by the knot type of its core (midline). 

\begin{itemize}
    \item \textbf{Unknots:}  
    The following bands are equivalent to the unknot:  
    top-left, top-middle, center, and bottom-right.  

    \item \textbf{Figure-eight knots:}  
    The following bands are equivalent to the figure-eight knot:  
    top-right, middle-left, and bottom-middle.  

    \item \textbf{Trefoils:}  
    The two remaining bands are trefoils, but of opposite chirality:  
    the middle-right is one trefoil chirality, while the bottom-left is its mirror.  
    These two are not deformable into each other in $3$--space without reflection.
\end{itemize}

Thus, the nine bands fall into three equivalence classes (unknots, figure-eights, 
and trefoils), with the trefoils split into two distinct mirror-image classes.
\end{GPT-5-thinking-response}

\begin{solution}
Label the figures by $1$ to $9$, from left to right, then from top to bottom. We claim that each of Figures $2,4,6,7,8,9$ can be deformed to another figure, but each of Figures $1,3,5$ cannot.

The band type of a band is determined by the type of the core knot together with the framing of the band (which takes value in $\tfrac12\mathbb Z$). One can check that the cores of the bands have types:
\begin{itemize}
\item Unknot: Figures $1,2,5,7,8,9$;
\item Left-handed trefoil: Figure $3$;
\item Right-handed trefoil: Figures $4,6$.
\end{itemize}

The two right-handed trefoil each has framing $+3$, hence $\{4,6\}$ forms an equivalence class.

The framings of the unknots are given by:
\begin{itemize}
\item $-1/2$: Figures $8,9$;
\item $0$: Figure $5$;
\item $+1/2$: Figure $1$;
\item $+1$: Figures $2,7$.
\end{itemize}
Hence $\{2,7\}$ and $\{8,9\}$ each form an equivalence class.

The rest singletons $1,3,5$ each form an equivalence class.
\end{solution}

\begin{rubricbox}
Total 3 points:\\
2 points for finding the three pairs of equivalent bands: $2,7$; $4,6$; $8,9$. Get 1 point if the answer contains a single error.\\
1 point for the explanation that the there are no more equivalent pairs.\\
\end{rubricbox}

\begin{scorebox}
0/3
\end{scorebox}

\begin{evalbox}
The type of the core knot does not determine the type of the whole band. Moreover, the core knot types are incorrectly identified for figures $3,4,7,8$.
\end{evalbox}

\subsubsection{Performing plausibility checks}\ProblemSection{\green{Problem 7b1}}
\begin{problembox}[Proposed by Qiuyu Ren]
Let $M$ be a closed Riemannian aspherical $n$-manifold, $n\ge3$. Let $\mathrm{sys}(M)$ denote the length of the shortest nontrivial geodesic on $M$, and $\mathrm{vol}(M)$ denote the volume of $M$. Is it true that $C\cdot\mathrm{sys}(M)^n\ge\mathrm{vol}(M)$ for some $C=C(n)>0$ independent of $M$?

\pass
\end{problembox}

\begin{GPT-5-thinking-response}
Let \( T^n = \prod_{i=1}^n S^1_{\ell_i} \) be the flat \(n\)-torus equipped with the product metric, where the \(i\)-th circle has length \(\ell_i\). The shortest nontrivial closed geodesics are the coordinate circles, hence
\[
\operatorname{sys}(T^n) = \min_i \ell_i, \qquad 
\operatorname{vol}(T^n) = \prod_{i=1}^n \ell_i.
\]
Fix \(\ell_1 = 1\) and set \(\ell_2 = \cdots = \ell_n = N\). Then
\[
\operatorname{sys}(T^n) = 1, \qquad \operatorname{vol}(T^n) = N^{n-1} \to \infty \quad \text{as} \quad N \to \infty.
\]
Therefore, there is no constant \( C(n) > 0 \) such that
\[
C(n) \cdot \operatorname{sys}(M)^n \; \ge \; \operatorname{vol}(M)
\]
for all closed aspherical \(n\)-manifolds \(M\) (indeed, the inequality already fails for \(T^n\)).

\medskip

For context, what \emph{does} hold for aspherical (more generally, essential) manifolds is the opposite inequality of Gromov: there exists \(c(n) > 0\) such that
\[
\operatorname{vol}(M) \; \ge \; c(n) \cdot \operatorname{sys}(M)^n,
\]
but there is no universal upper bound of \(\operatorname{vol}(M)\) in terms of \(\operatorname{sys}(M)^n\).
\end{GPT-5-thinking-response}

\begin{solution}
No. One can blow up the Riemannian metric away from the shortest geodesic to make $\mathrm{vol}(M)$ arbitrarily large while keeping $\mathrm{sys}(M)$ non-increased.
\end{solution}

\begin{rubricbox}
1 point for correct answer and explanation.
\end{rubricbox}

\begin{scorebox}
1/1
\end{scorebox}

\begin{evalbox}
Correct answer and clean construction.
\end{evalbox}

\subsubsection{Estimating numerically or asymptotically}

\ProblemSection{\red{Problem 7c1 }}
\begin{problembox}[Proposed by Jiaxin Zhang]
Let $x_1=1$ and $x_{n+1}=\frac{x_n}{n^2}+\frac{n^2}{x_n}+2, n \geqslant 1$. Is the coefficient of the third term in the asymptotic expansion of $x_n$ equal to $\frac{5}{8}$?  

\fail
\end{problembox} 
\begin{GPT-5-thinking-response}
Yes. In fact

$$
x_n=n+\frac12+\frac{5}{8n}+\frac{13}{16n^2}+O\!\left(\frac1{n^3}\right).
$$

Derivation.
Write $x_n=n+s_n$. Then

$$
x_{n+1}\;=\;\frac{n+s_n}{n^2}+\frac{n^2}{n+s_n}+2
\;=\;\frac1n+\frac{s_n}{n^2}
\;+\;n\Big(1-\frac{s_n}{n}+\frac{s_n^2}{n^2}-\frac{s_n^3}{n^3}+\cdots\Big)+2.
$$

Hence

$$
x_{n+1}=(n+1)+\Big(1-s_n+\frac{1+s_n^2}{n}+\frac{s_n-s_n^3}{n^2}+O(n^{-3})\Big),
$$

so

$$
s_{n+1}=1-s_n+\frac{1+s_n^2}{n}+\frac{s_n-s_n^3}{n^2}+O\!\left(\frac1{n^3}\right).
$$

Seek an expansion $s_n=\dfrac12+\dfrac{A}{n}+\dfrac{B}{n^2}+O(n^{-3})$. Then

$$
s^2=\frac14+\frac{A}{n}+\frac{A^2+B}{n^2}+O(n^{-3}),\qquad
s^3=\frac18+\frac{3A}{4n}+\frac{3A^2+\frac34B}{n^2}+O(n^{-3}).
$$

Plugging into the recursion gives

$$
s_{n+1}= \frac12+\Big(-A+\frac54\Big)\frac1n+\Big(-B+A+\frac38\Big)\frac1{n^2}+O\!\left(\frac1{n^3}\right).
$$

But also

$$
s_{n+1}=\frac12+\frac{A}{n+1}+\frac{B}{(n+1)^2}+O(n^{-3})
=\frac12+\frac{A}{n}+\frac{B-A}{n^2}+O(n^{-3}).
$$

Matching coefficients yields

$$
A=-A+\frac54 \;\Rightarrow\; A=\frac58,\qquad
B-A=-B+A+\frac38 \;\Rightarrow\; B=\frac{13}{16}.
$$

Therefore

$$
x_n=n+\frac12+\frac{5}{8n}+\frac{13}{16n^2}+O\!\left(\frac1{n^3}\right),
$$

so the coefficient of the third term is indeed $\boxed{\frac{5}{8}}$.

\end{GPT-5-thinking-response}

\begin{solution}

The third coefficients of the asymptotic expansion of $\{x_{2n}\}$ and $\{x_{2n+1}\}$ converge to different values
\href{https://mathoverflow.net/questions/384047/asymptotic-analysis-of-x-n1-fracx-nn2-fracn2x-n-2}{Math Stack Exchange discussion}.

Consider the substitutions

$$
x_n=n+1 / 2+y_n / n, \quad y_n=u_n+5 / 8
$$

Then $u_1=-9 / 8$ and

$$
u_{n+1}=f_n\left(u_n\right)
$$

for $n \geq 1$, where

$$
f_n(u):=\frac{-64 n^4 u-8 n^3(4 u-13)+n^2(56 u+115)+n(96 u+76)+4(8 u+5)}{8 n^2\left(8 n^2+4 n+8 u+5\right)} .
$$

Define $c_n(u)$ by the identity

$$
f_n(u)=-u+\frac{13}{8 n}+\frac{c_n(u)}{n^2}
$$

so that

$$
c_n(u)=\frac{n^2\left(64 u^2+96 u+63\right)+n(11-8 u)+4(8 u+5)}{8\left(8 n^2+4 n+8 u+5\right)} .
$$

Then for $n \geq 1$

$$
u_{n+1}+u_n=\frac{13}{8 n}+\frac{c_n(u)}{n^2}
$$

and for $n \geq 2$

$$
u_{n+1}=f_n\left(f_{n-1}\left(u_{n-1}\right)\right)=u_{n-1}-\frac{13}{8 n(n-1)}+\frac{c_n\left(u_n\right)}{n^2}-\frac{c_{n-1}\left(u_{n-1}\right)}{(n-1)^2} .
$$

Note that

$$
u_{101}=-0.54 \ldots, \quad u_{102}=0.56 \ldots,
$$

and

$$
0 \leq c_n(u) \leq 3
$$

if $n \geq 10$ and $u \in[-6 / 10,8 / 10]$. Therefore and because for natural $m \geq 102$ we have

$$
\sum_{n=m}^{\infty}\left(\frac{13}{8 n(n-1)}+\frac{3}{(n-1)^2}\right)<\frac{5}{m-2} \leq 0.05
$$
it follows from (2) and (3) by induction that for all $n \geq 101$ we have $u_n \in[-6 / 10,8 / 10]$ and hence $0 \leq c_n\left(u_n\right) \leq 3$. So, again by (2), the sequences ( $u_{2 m}$ ) and ( $u_{2 m+1}$ ) are Cauchyconvergent and hence convergent. Moreover, by (1), $u_{n+1}+u_n \rightarrow 0$.

Thus, indeed

$$
y_{n+1}+y_n \rightarrow 5 / 4
$$

and the sequences $\left(y_{2 m}\right)$ and $\left(y_{2 m+1}\right)$ are convergent. (The limits of these two sequences can in principle be found numerically with any degree of accuracy -- controlled by (2), say.)
\end{solution}
\begin{rubricbox}
    Got 1 point if it proves that the limits of \(a_{2n}\) and \(a_{2n+1}\) exist. \\
    Got 1 point if it proves that the two limits are not equal.
\end{rubricbox}

\begin{scorebox}
    0/2
\end{scorebox}

\begin{evalbox}
    The solution does not demonstrate either point. \\
    It neither proves the existence of the limits of the subsequences \(a_{2n}\) and \(a_{2n+1}\), \\
    nor shows that the two limits are distinct.
\end{evalbox}

\subsubsection{Aesthetic judgment}
\ProblemSection{\green{Problem 7d1 }}
\begin{problembox}[Proposed by Jiaxin Zhang]
    In a finite sequence of real numbers the sum of any seven successive terms is negative and the sum of any eleven successive terms is positive. Determine the maximum number of terms in the sequence.

Solution 1: Construction for \(n=16\). Define
\[
a_j = 
\begin{cases}
-13,& j\in\{3,7,10,14\},\\
5,& \text{otherwise}.
\end{cases}
\]
Any block of 7 consecutive terms then has exactly two entries equal to \(-13\) and five entries equal to \(5\), so
$
5\cdot5 - 2\cdot13 = -1 < 0.
$
Any block of 11 consecutive terms has exactly three entries equal to \(-13\) and eight entries equal to \(5\), so
$
8\cdot5 - 3\cdot13 = 1 > 0.
$
Thus the sequence meets the requirements.

Non‑existence for \(n\ge17\). One checks (for example by linear‑programming or averaging arguments) that the system
\[
\sum_{j=i}^{i+6}a_j<0,\quad
\sum_{j=i}^{i+10}a_j>0
\]
for \(1\le i\le n-6\) and \(1\le i\le n-10\) has no solution once \(n\ge17\).

Hence the maximum possible length is \(\boxed{16}\).

Solution 2:
   Let $x_1, x_2, \ldots$ be the given sequence and let $s_n=x_1+x_2+\ldots+x_n$. The conditions from the hypothesis can be now written as $s_{n+7}<s_n$ and $s_{n+11}>s_n$ for all $n \geq 1$. We then have:
$0<s_{11}<s_4<s_{15}<s_8<s_1<s_{12}<s_5<s_{16}<s_9<s_2<s_{13}<s_6<s_{17}<s_{10}<s_3<s_{14}<s_7<0$, a contradiction.
Therefore, the sequence cannot have 17 terms. In order to show that 16 is the answer, just take 16 real numbers satisfying $s_{10}<s_3<s_{14}<s_7<0<s_{11}<s_4<s_{15}<s_8<s_1<s_{12}<s_5<s_{16}<s_9<s_2<s_{13}<s_6$. We have $x_1=s_1$ and $x_n=s_n-s_{n-1}$ for $n \geq 2$. Thus we found all sequences with the given properties.

Which do you find more elegant, and why? Discuss in terms of simplicity, symmetry, or conceptual insight.

\pass
\end{problembox}
\begin{GPT-5-thinking-response}
Both hit the right answer (16), but I’d call **Solution 2 more elegant**. Here’s why, along the three axes you asked for:

**Simplicity**

* *Solution 1*: the 16-term construction $(-13,5,\dots)$ is super easy to verify—great for existence. But the upper bound is left as “one checks (LP/averaging)”, i.e., it leans on heavier or unspecified machinery.
* *Solution 2*: once you pass to partial sums $s_n$, the whole problem reduces to the clean rule

  $$
  s_{n+11}>s_n>s_{n+7},
  $$

  and a single interleaving chain gives $n\le 16$. No guessing constants; no numeric fiddling.

**Symmetry**

* *Solution 1* breaks symmetry by picking specific values $5$ and $-13$.
* *Solution 2* treats the two window lengths $7$ and $11$ on equal footing: the inequalities for $s_{n+7}$ and $s_{n+11}$ interlace in a perfectly symmetric “zig-zag”. The number $16=7+11-2$ emerges naturally from this interleaving.

**Conceptual insight**

* *Solution 1* shows a witness and (with an external argument) a bound, but doesn’t reveal the mechanism.
* *Solution 2* exposes the core idea: the *order* constraints on partial sums force a long strictly increasing–decreasing chain that cannot wrap around if $n\ge17$. This viewpoint generalizes: the method explains why, in this case, the maximal length is $7+11-2$.

**Best of both worlds:** use Solution 1 for a tiny, checkable construction of length $16$, and Solution 2 for the clean, parameter-free upper bound. That combination is short, symmetric, and conceptually transparent.

\end{GPT-5-thinking-response}
\begin{rubricbox}
1. Solution 2 is more elegant.

2. Solution 1 provides an concrete examples and uses linear programming, which is hard to verify. 

3. Solution 2 explains the reason and shows why the bound is 16.
\end{rubricbox}
\begin{scorebox}
3/3
\end{scorebox}
\begin{evalbox}
    The answer clearly explain the why solution is more elegant from simplicity, symmetry and insight into the problem.
\end{evalbox}

\ProblemSection{\green{Problem 7d2}}
\begin{problembox}[Proposed by Qiuyu Ren]
Comparing the following two proofs of the following problem. Which one do you like more? Why?

Problem: Let $V,W$ be finite-dimensional vector spaces and $S\colon V\to V$, $T\colon W\to W$ be linear operators. Show that $tr(S\otimes T)=tr(S)tr(T)$.

Solution 1: Pick bases for $V$, $W$, and regard $S,T$ as square matrices, with entries $S_{i,j}$, $T_{i,j}$. Then $tr(S\otimes T)=\sum_{i,j}S_{i,i}T_{j,j}=(\sum_iS_i)(\sum_jT_j)=tr(S)tr(T)$.

Solution 2: We have a canonical isomorphism $\phi\colon V^*\otimes W^*\cong(V\otimes W)^*$ by $\phi(\ell\otimes\eta)(v\otimes w):=\ell(v)\eta(w)$. This shows the two trace maps $V^*\otimes W^*\otimes V\otimes W\to F$ ($F$ is the base field) given by tracing off $V\otimes W$ or tracing off $V$ and $W$ respectively are equal. Since $S\in V^*\otimes V$, $T\in W^*\otimes W$, this finishes the proof.

\pass
\end{problembox}
\begin{GPT-5-thinking-response}
I prefer \textbf{Solution 2}.

It is basis-free and explains \emph{why} the identity holds: we identify 
\(\mathrm{End}(V) \cong V^* \otimes V\) and \(\mathrm{End}(W) \cong W^* \otimes W\); 
the trace is simply the evaluation map 
\(\mathrm{ev}_V \colon V^* \otimes V \to F\).  
The canonical isomorphism 
\((V \otimes W)^* \cong V^* \otimes W^*\) gives
\[
\mathrm{ev}_{V \otimes W} 
= (\mathrm{ev}_V \otimes \mathrm{ev}_W) 
\circ (\mathrm{id}_{V^*} \otimes \tau \otimes \mathrm{id}_W),
\]
so applying this to \(S \otimes T \in (V^* \otimes V) \otimes (W^* \otimes W)\) yields
\[
\operatorname{tr}(S \otimes T) 
= \operatorname{tr}(S) \, \operatorname{tr}(T).
\]
This viewpoint is invariant, clean, and generalizes (e.g., to representation theory or categorical traces).

That said, \textbf{Solution 1} is perfectly fine for a quick computation 
once you know how the Kronecker product looks in coordinates.  
One small correction there: it should read
\[
\operatorname{tr}(S \otimes T) 
= \sum_{i,j} S_{ii} T_{jj}
= \left(\sum_i S_{ii}\right) \left(\sum_j T_{jj}\right)
= \operatorname{tr}(S) \, \operatorname{tr}(T).
\]
It is fast, but it hides the structural reason that Solution 2 makes explicit.
\end{GPT-5-thinking-response}

\begin{solution}
I prefer Solution 2 since it is intrinsic and basis-free.
\end{solution}

\begin{rubricbox}
1 point for preferring Solution 2.\\
1 point for a consistent explanation (no matter which preference). For preferring Solution 2, mentioning that it is \textit{basis-free} or \textit{intrinsic} earns the point.
\end{rubricbox}

\begin{scorebox}
2/2
\end{scorebox}

\begin{evalbox}
Good choice and explanation.
\end{evalbox}
\subsection{Meta Skills}
\subsubsection{Thinking abstractly}
\ProblemSection{\red{Problem 8a1 }}
\begin{problembox}[Proposed by Tahsin Saffat]
    Can a nonempty open interval be covered by countably many disjoint closed intervals?

    \fail
\end{problembox}

\begin{GPT-5-thinking-response}
    Short answer:

* If “cover” means just “contained in the union,” then **yes**—the single closed interval $[a,b]$ covers $(a,b)$ (a one-element family is countable and vacuously disjoint).
* If you mean **write $(a,b)$ as a disjoint union of closed intervals contained in it**, then **no**.

Why not? Suppose $(a,b)=\bigsqcup_{n\ge1}[c_n,d_n]$ with the $[c_n,d_n]$ pairwise disjoint and each $[c_n,d_n]\subset (a,b)$. If there are at least two such intervals, pick two with $d_i< c_j$. Then the open gap $(d_i,c_j)\subset (a,b)$ contains no points of the union—a contradiction. Hence there can be at most one interval. But a single closed interval cannot equal the open interval $(a,b)$. Therefore it’s impossible.

\end{GPT-5-thinking-response}

\begin{solution}

Source: \href{https://terrytao.wordpress.com/2010/10/04/covering-a-non-closed-interval-by-disjoint-closed-intervals/}{https://terrytao.wordpress.com/2010/10/04/covering-a-non-closed-interval-by-disjoint-closed-intervals/}

We show that $\mathbb{R}$ is not a countable disjoint union of closed intervals. This is equivalent to the problem.

As each interval is bounded, we need an infinite number of intervals $I_n = [a_n,b_n]$ to cover the real line.  Now consider the set

$$E := \bigcup_{n=1}^\infty \{a_n,b_n\}$$

\noindent consisting of the endpoints of the intervals $I_n$.  Clearly, $E$ is countably infinite.  Also, as the $I_n$ form a disjoint cover of $\mathbb{R}$, $E$ is the complement of the open set $\bigcup_{n=1}^\infty (a_n,b_n)$ and is hence closed.  Finally, we claim that $E$ is perfect: that not only is $E$ closed, but every point in $E$ is a limit point in $E$.  Indeed, if $x$ lies in $E$, then $x$ is either the left or right endpoint of an interval $I_n$, but it is not both.  If it is, say, the right-endpoint of an interval, then by approaching $x$ from the right we see that $x$ is the limit of the left endpoint of intervals to the right of $x$.

Now we appeal to a general theorem (a special case of the even more general Baire category theorem) that asserts that a perfect subset of a complete metric space cannot be countably infinite.  A proof is as follows.  Suppose for contradiction that $E = \{ x_1, x_2, x_3, \ldots\}$ is a countably infinite perfect set.  Let $B_1$ be any closed ball of positive radius whose centre lies in E (e.g. one can take the closed ball of radius $1$ centered at $x_1$).  Using the fact that $x_1$ is a limit point, one can then find a closed ball $B_2$ inside $B_1$ whose centre still lies in $E$, but which is disjoint from $x_1$, and whose radius is at most half that of $B_1$.   (Indeed, if $x_1$ does not already lie in the interior of $B_1$, one simply has to shrink $B_1$ by a little bit and recenter to another element of $E$ to move $x_1$ off the boundary of $B_1$; otherwise, one uses the limit point property to find another point in the interior of $B_1$ other than $x_1$, and makes a small closed ball around that.)  Then one can find a closed ball $B_3$ inside $B_2$ of radius at most half that of $B_2$ whose centre still lies in  $E$, but is now also disjoint from $x_2$ in addition to $x_1$.  Continuing in this fashion we find a sequence $B_1, B_2, B_3, \ldots$ of nested closed balls with radii going to zero, each of which has centre in $E$.  The centres of these balls then form a Cauchy sequence, which by completeness, must converge to a limit in $E$ that also lies in $\bigcap_{n=1}^\infty B_n$.  But by construction, $\bigcap_{n=1}^\infty B_n$ lies outside each element $x_n$ of $E$, and is thus disjoint from $E$, a contradiction.

\end{solution}

\begin{rubricbox}
    1. Correct proof that the answer is no.
\end{rubricbox}

\begin{scorebox}
    0/1
\end{scorebox}

\begin{evalbox}
    This solution is wrong. $(d_i,c_j)$ is not necessarily a ``gap".
\end{evalbox}

\ProblemSection{\red{Problem 8a2}}
\begin{problembox}[Proposed by Tahsin Saffat]
    Let $*$ denote an arbitrary binary operation The constant property $x*y=z*w$ implies the
commutative property $x*y=y*x$, but the commutative property does not imply the constant
property. Is there a property strictly in between these? So that it is implied by the constant property
and it implies the commutative properties, but both of these implications are false in the opposite
direction. Your property cannot refer to auxiliary fixed elements or use quantifiers and it must be
expressed by a single equation (so you cannot say commutative+associative)

\fail
\end{problembox}

\begin{GPT-5-thinking-response}
    Yes. Here’s one (written as a single equational law, no constants, no quantifiers):

$$
\boxed{\;x*(y*z)\;=\;(z*x)*x\;}\qquad(\star)
$$

 Why it sits strictly between “constant” and “commutative”

* **Constant $\implies$ (\*)**: If $*$ is constant (i.e., every product equals the same element), then every term built from $*$ evaluates to that element, so any equation in the language of $*$ holds in particular $(\star)$.

* **(\*) $\implies$ commutative**: From $(\star)$ we get

  $$
  x*(y*y)=(y*x)*x\quad\text{and}\quad y*(x*x)=(x*y)*y
  $$

  (by swapping variables appropriately). These let you shuffle factors through squares; a short equational chase (instantiating $(\star)$ again inside these terms) yields $x*y=y*x$. (Intuitively, $(\star)$ forces every left-translate $t\mapsto x*t$ to “forget” the order of the inputs once an $x$ appears on the right, which collapses the asymmetry and enforces $xy=yx$.)

* **Not conversely (strictly stronger than commutative)**: There are commutative operations that **do not** satisfy $(\star)$.
  Example: on $\{0<1<2\}$ with $x*y=\max\{x,y\}$ (commutative), take $x=0,y=1,z=0$:

  $$
  x*(y*z)=0*\max(1,0)=0*1=1,\quad (z*x)*x=\max(0,0)*0=0*0=0,
  $$

  so $(\star)$ fails.

* **Not equivalent to “constant” (strictly weaker than constant)**: There are **nonconstant** operations satisfying $(\star)$.
  For instance on $\{0,1,2\}$ define $*$ by the symmetric table

  $$
  \begin{array}{c|ccc}
     * & 0&1&2\\\hline
     0&0&0&1\\
     1&0&0&1\\
     2&1&1&0
  \end{array}
  $$

  (so $0*0=0,\,0*2=2*0=1,\,2*2=0$, etc.). This operation is commutative and nonconstant, and one checks it satisfies $(\star)$ for all $x,y,z$.

So $(\star)$ is **implied** by being constant, **implies** being commutative, and both reverse implications fail—exactly the “strictly in between” property you asked for.

\end{GPT-5-thinking-response}

\begin{solution}
    Source: \href{https://mathoverflow.net/questions/450890/is-there-an-identity-between-the-commutative-identity-and-the-constant-identity}{https://mathoverflow.net/questions/450890/is-there-an-identity-between-the-commutative-identity-and-the-constant-identity}

    The identity $(x*x)*y=y*x$ implies the commutative identity. First $(x*x)*x=x*x$, so the operation $x \mapsto x*x$ is idempotent. Then, $(x*x)*y=((x*x)*(x*x))*y=y*(x*x)$, so $x*x$ commutes with everything. Finally, $(x*x)*(y*y)=(y*y)*x=x*y$ and $(x*x)*(y*y)=(y*y)*(x*x)$ combined with $x*x$ commutes with everything gives $x*y=y*x$.

    To see that this identity does not imply the constant identity, consider any nonconstant relation that is commutative, associative, and idempotent.

    To see that this the commutative identity does not imply this identity consider any commutative relation that is not idempotent.
\end{solution}
\begin{rubricbox}
    1. Give a valid property with proof of validity.
\end{rubricbox}

\begin{scorebox}
    0/1
\end{scorebox}

\begin{evalbox}
    The given identity does not imply the commutative identity. The argument supplied is complete nonsense.
\end{evalbox}

\ProblemSection{\green{Problem 8a3 }}

\begin{problembox}[Proposed by Tahsin Saffat]
    Let $*$ be an arbitrary binary relation on a set $S$. Show that $*$ defines subtraction in an Abelian
group if and only if $x * (y * (z * (x * y))) = z $ for all $x, y, z \in S$.
\pass
\end{problembox}

\begin{GPT-5-thinking-response}
    Write $x*y$ for the given binary operation on $S$.

---

$\#\#\#$ ($\Rightarrow$)  If $*$ is subtraction in an abelian group

Assume $(S,+,0)$ is an abelian group and $x*y:=x-y:=x+(-y)$.
Then

$$
x*(y*(z*(x*y)))=x-\bigl(y-(z-(x-y))\bigr)
= x-\bigl(y-(z-x+y)\bigr)
= x-\bigl(y-z+x-y\bigr)
= x-(x-z)=z .
$$

So the identity holds.

---

$\#\#\#$ ($\Leftarrow$)  If the identity

\begin{equation}\label{8a3dagger}
 x*(y*(z*(x*y)))=z \qquad(\forall x,y,z\in S)
\end{equation}

holds, then $*$ is subtraction in an abelian group.

We build the group structure from $*$.

---

$\#\#\#\#$ 1. Define $0$.

For each $x$ set $0_x:=x*x$.
Put $y=x$ in \ref{8a3dagger} to get

\begin{equation}\label{8a3eq1}
 x*(x*(z*0_x))=z .
\end{equation}

Let $u,v\in S$.  Apply \ref{8a3eq1} with $x=u,\,z=0_v$:

$$
u*(u*(0_v*0_u)) = 0_v .
$$

Now apply \ref{8a3eq1} with $x=v,\,z=0_u$:

$$
v*(v*(0_u*0_v)) = 0_u .
$$

Use \ref{8a3eq1} again to cancel the outer $v*(v*(\;\cdot\;*0_v))$ (it is the identity by \ref{8a3eq1}) and conclude $0_u=0_v$.
Hence all $0_x$ are equal; call the common value $0$. Thus

$$
0:=x*x\quad\text{for every }x\in S .
$$

From \ref{8a3eq1} we immediately get

\begin{equation}\label{8a3eq2}
x*0 = x, \qquad 0*0 = 0 .
\end{equation}

---

$\#\#\#\#$ 2. Define “minus’’ and “plus’’.

Let

$$
-x := 0*x,\qquad x\oplus y := x*(0*y).
$$

(Heuristically: $x\oplus y = x-(-y)$.)

Useful consequences:

* Using \ref{8a3dagger} with $x=0=y$ gives $0*(0*(z*0))=z$; together with \ref{8a3eq2} this yields

 \begin{equation}\label{8a3eq3}
   0*(0*z)=z .
\end{equation}

  Hence indeed $ -(-z)=z$.

* From \ref{8a3eq3},

  $$
  x*y = x*(0*(0*y)) = x\oplus (-y),
  $$

  i.e. the original $*$ is “subtract using $\oplus$”.

---

$\#\#\#\#$ 3. Show that $(S,\oplus,0)$ is an abelian group.

**Identity element.**
$x\oplus 0 = x*(0*0) = x*0 = x$ by (2).
Likewise $0\oplus x = 0*(0*x) = 0*(-x) = x$ by \ref{8a3eq3}.

**Inverse.**
$x\oplus (-x) = x*(0*(0*x)) = x*x = 0$.
Also $(-x)\oplus x = 0$.

**Commutativity.**
Compute, using \ref{8a3eq3}:

$$
x\oplus y = x*(0*y)= x*(0*(0*(0*y))) 
= x*(y*(0*(x*y))) \stackrel{(\dagger)}{=} y ,
$$

applied with $z=0$ in ($\dagger$) but interchanging $x,y$. A more systematic way is to plug $z=0$ in ($\dagger$):

$$
x*(y*(0*(x*y)))=0 .
$$

Replace $y$ by $0*y$ and use \ref{8a3eq3} and \ref{8a3eq2} to massage both sides to get $x\oplus y = y\oplus x$. (All steps are elementary rewrites with \ref{8a3eq2}–\ref{8a3eq3} and ($\dagger$); details are routine.)

**Associativity.**
Similarly, expand $(x\oplus y)\oplus z$ and $x\oplus (y\oplus z)$, then use ($\dagger$) once (with a suitable substitution) plus \ref{8a3eq2}–\ref{8a3eq3} to see they coincide. One neat way: since $*$ is already $x\oplus(-y)$, identity ($\dagger$) translates to

$$
x\oplus\bigl(-(y\oplus(-(z\oplus(-(x\oplus(-y))))))\bigr)=z,
$$

which is exactly the “subtraction law’’ for an abelian group; expanding and comparing coefficients of $x,y,z$ yields associativity and commutativity. (Again, just symbolic manipulation.)

Thus $(S,\oplus,0)$ is an abelian group and $x*y = x\oplus(-y)$, i.e. $*$ is subtraction in that group.

---

Hence, $*$ defines subtraction in an abelian group **iff** $x*(y*(z*(x*y)))=z$ for all $x,y,z\in S$.

\end{GPT-5-thinking-response}

\begin{solution}
    Same as GPT
\end{solution}

\begin{rubricbox}
    1. Correct proof of claim.
\end{rubricbox}

\begin{scorebox}
    1/1
\end{scorebox}

\begin{evalbox}
    Solution is correct
\end{evalbox}

\subsubsection{Transferring methods across domains}
\ProblemSection{\green{Problem 8b1 }}

\begin{problembox}[Proposed by Jiaxin Zhang]
 Find a measurable subset $A$ of $[0,1]$ such that for any subinterval $I$ of $[0,1]$, the Lebesgue measure $\mu(A \cap I)$ has $0<\mu(A \cap I)<\mu(I)$. There's an explicit construction of such a set in Rudin, who describes such sets as "well-distributed". Try to find a probabilistic construction.

 \pass
\end{problembox} 
\begin{GPT-5-thinking-response}
Here’s a simple **probabilistic** construction that works (and is easy to verify).

Idea in one line

Build a continuous function $f:[0,1]\to\mathbb{R}$ as a uniformly convergent random series of **pairwise disjoint bumps**, each bump having both positive and negative parts on its (tiny) support. Then set

$$
A:=\{x\in[0,1]: f(x)>0\}.
$$

Because every subinterval contains one small bump, it contains points where $f>0$ and also points where $f<0$; hence $0<\mu(A\cap I)<\mu(I)$ for every subinterval $I$.

 Construction

1. **Choose disjoint “test” intervals whose union is dense.**
   Enumerate the rationals $q_1,q_2,\dots$ in $[0,1]$. Pick a sequence $(\delta_j)_{j\ge1}$ with $\delta_j>0$, $\sum_j \delta_j<1$, and $\delta_j\downarrow 0$. Around each $q_j$, choose a closed interval

   $$
   I_j=[\alpha_j,\beta_j]\subset[0,1],\qquad \text{length}(I_j)=\beta_j-\alpha_j=\delta_j,
   $$

   such that the family $\{I_j\}$ is **pairwise disjoint**. (Because $\delta_j\to0$, this is possible, and the union $\bigcup_j I_j$ is dense in $[0,1]$.)

2. **Define balanced bumps supported in $I_j$.**
   For each $j$, choose a continuous function $\psi_j:[0,1]\to\mathbb{R}$ with:

   * $\operatorname{supp}(\psi_j)\subset I_j$,
   * $\int_{I_j}\psi_j(x)\,dx=0$,
   * $\psi_j>0$ on the left half of $I_j$ and $\psi_j<0$ on the right half of $I_j$ (so $\psi_j$ attains both signs on $I_j$),
   * $\|\psi_j\|_\infty\le 1$.

   (For instance, take a small “hat” on the left half and the negative of the same hat on the right half, then smooth at the center.)

3. **Randomize signs and make the series converge uniformly.**
   Let $(\varepsilon_j)_{j\ge1}$ be i.i.d. Rademacher signs ($\mathbb P(\varepsilon_j=\pm1)=1/2$), independent of everything else.
   Choose a summable coefficient sequence $a_j>0$ with $\sum_j a_j<\infty$ (e.g. $a_j=2^{-j}$).
   Define

   $$
   f(x)=\sum_{j=1}^\infty \varepsilon_j\,a_j\,\psi_j(x).
   $$

   Since $\sum_j a_j\|\psi_j\|_\infty<\infty$, the series converges **uniformly**, hence $f$ is continuous.

4. **Define the random measurable set**

   $$
   A:=\{x\in[0,1]: f(x)>0\}.
   $$

 Verification

* Fix any nondegenerate subinterval $I\subset[0,1]$. Because $\bigcup_j I_j$ is dense, there exists some $j$ with $I_j\subset I$.
  On $I_j$ we have

  $$
  f=\varepsilon_j a_j \psi_j \quad\text{(since the bumps are supported on disjoint intervals),}
  $$

  so $f$ takes both positive and negative values on $I_j$ (because $\psi_j$ does). Hence $ \{f>0\}\cap I$ and $\{f<0\}\cap I$ are **nonempty open** subsets of $I$, and therefore

  $$
  0<\mu(A\cap I)<\mu(I).
  $$

* This holds for **every** interval $I$, not just rational ones, because every interval contains some $I_j$.

* The randomness (the $\varepsilon_j$’s) is only to emphasize the *probabilistic* nature; the property in fact holds for **every** choice of signs since each $\psi_j$ already changes sign on $I_j$.

 Remarks

* This gives an explicit **probabilistic** family of well-distributed sets $A$ (in Rudin’s sense).
* You can tune $\mu(A)$ by adding a small constant $c$ and using $A=\{f>c\}$; because the $I_j$ are disjoint, the same argument works.
* If you prefer a “stationary-looking” version, you can replace the disjoint bumps by a uniformly convergent random trigonometric (or Haar) series with small coefficients and then take $A=\{f>0\}$; the same “both signs in every interval” argument goes through once you arrange that on every small interval a single high-frequency term dominates the **variation** (not necessarily the value) of the lower-frequency part.

\end{GPT-5-thinking-response}
\begin{solution}
 Let $X_1, X_2, \ldots$ be i.i.d. coin flips, i.e. $X_1$ is 1 with probability $1 / 2$ and -1 with probability $1 / 2$. Consider the (random) series

$$
S:=\sum_{n=1}^{\infty} X_n / n .
$$

By the Kolmogorov three-series theorem, it converges almost surely. However, it's a simple exercise to see that for any $a$, the event $\{S>a\}$ has non-trivial measure: for $a>0$, there's a positive chance of the first $e^a$ terms of the series being positive, so the $e^a$-th partial sum is positive, and the tail is independent and positive or negative with equal probability, due to symmetry. For $a \leq 0$, it's trivial, again because of symmetry.

A common way of realizing i.i.d. coin flips on the unit interval is as Rademacher functions: for $x \in[0,1]$, let $b_n$ be its binary expansion, and $X_n(x)=(-1)^{b_n}$. Realized this way, the random sum $S$ becomes an almost everywhere finite measurable function from $[0,1]$ to $\mathbb{R}$. It only takes a bit more work to see that the set $\{S>a\}$ is exactly a well-distributed set.   
\end{solution}
    
\begin{rubricbox}
Award 2 points if the solution provides a valid probabilistic construction of a measurable set $A \subset [0,1]$ satisfying both conditions:

1. $A$ is measurable.

2. For every subinterval $I \subset [0,1]$,

   $$
   0 < \mu(A \cap I) < \mu(I).
   $$

\end{rubricbox}

\begin{scorebox}
    2/2
\end{scorebox}

\begin{evalbox}
    The construction using random power series is correct.
\end{evalbox}

\ProblemSection{\green{Problem 8b2 }}
\begin{problembox}[Proposed by Jiaxin Zhang]
Two random points, one red and one blue, are chosen uniformly and independently from the interior of a square. To ten decimal places, what is the probability that there exists a point on the side of the square closest to the blue point that is equidistant to both the blue point and the red point?   

\pass
\end{problembox}
\begin{GPT-5-thinking-response}Here’s a clean geometric reformulation and the resulting number.

$$
\textbf{Model.}\quad \text{Let the square be }Q=[0,1]^2.
$$

By symmetry, condition on the blue point $B=(b_x,b_y)$ lying in the wedge

$$
T=\{(x,y)\in Q:\ y\le x,\ y\le 1-x\},
$$

so the side of $Q$ closest to $B$ is the bottom side $\{(x,0):0\le x\le 1\}$.
For a red point $R=(r_x,r_y)\in Q$, the locus of points equidistant from $B$ and $R$ is the perpendicular bisector of $\overline{BR}$. This line meets the bottom side iff there exists $x_0\in[0,1]$ with

$$
|(x_0,0)-B|^2=|(x_0,0)-R|^2.
$$

Equivalently, if we define the linear function

$$
f(x)=|(x,0)-B|^2-|(x,0)-R|^2
=2x\,(r_x-b_x)+\bigl(b_x^2+b_y^2-r_x^2-r_y^2\bigr),
$$

then the event is $\,0\in[f(0),f(1)]$, i.e. $f(0)\,f(1)\le 0$.
Geometrically, this is exactly the condition that $R$ lies in the **symmetric difference**
of the quarter–disks

$$
D_0=\{(x,y)\in Q:\ x^2+y^2\le d_0^2\},\qquad
D_1=\{(x,y)\in Q:\ (x-1)^2+y^2\le d_1^2\},
$$

with radii $d_0=|B-(0,0)|=\sqrt{b_x^2+b_y^2}$ and
$d_1=|B-(1,0)|=\sqrt{(1-b_x)^2+b_y^2}$.

For fixed $B$, the measure (in $Q$) of this symmetric difference can be written as

$$
A(B)=\int_0^1 \Bigl(1-\bigl|s_0(y)+s_1(y)-1\bigr|\Bigr)\,dy,
$$

where

$$
s_0(y)=\min\!\bigl(1,\sqrt{\max(0,d_0^2-y^2)}\bigr),\qquad
s_1(y)=\min\!\bigl(1,\sqrt{\max(0,d_1^2-y^2)}\bigr).
$$

Therefore, with $B$ uniform in $T$ and $R$ uniform in $Q$,

$$
\mathbb{P}=\frac{1}{\operatorname{area}(T)}\iint_{T} A(B)\,dB
\quad\text{with}\quad \operatorname{area}(T)=\frac14.
$$

Numerically evaluating the above integral (via stratified, vectorized sampling of $(B,R)$ with symmetry reduction to the wedge $T$) gives

$$
\boxed{\,0.4913872625\,}
$$

to ten decimal places.

Absolutely — here’s a clean, fully analytic computation.

Let the square be $Q=[0,1]^2$. By symmetry, condition on the blue point

$$
B=(u,v)\in T:=\{(x,y)\in Q:\ y\le x,\ y\le 1-x\},
$$

so the closest side is the bottom side $\{(x,0):0\le x\le 1\}$. Write

$$
r_0=\sqrt{u^2+v^2}=|B-(0,0)|,\qquad r_1=\sqrt{(1-u)^2+v^2}=|B-(1,0)|.
$$

For a red point $R$, the perpendicular bisector of $\overline{BR}$ hits the bottom side iff

$$
\big(r_0^2-|R|^2\big)\big(r_1^2-|R-(1,0)|^2\big)\le 0,
$$

i.e. $R$ lies in the symmetric difference of the two disks $D_0=\{ |z|<r_0\}$ and $D_1=\{|z-(1,0)|<r_1\}$, intersected with $Q$.

Because $B\in T$ implies $r_0\le 1$ and $r_1\le 1$, we have

$$
\operatorname{area}(D_0\cap Q)=\frac{\pi r_0^2}{4},\qquad
\operatorname{area}(D_1\cap Q)=\frac{\pi r_1^2}{4},
$$

and $D_0\cap D_1\cap Q$ is the upper half of the full two–circle lens with center distance $1$. Hence for fixed $B$,

$$
A(B)=\frac{\pi}{4}(r_0^2+r_1^2)-\operatorname{Lens}(r_0,r_1;1),
$$

where the full lens area equals

$$
\operatorname{Lens}(r_0,r_1;1)=r_0^2\arccos\!\frac{u}{r_0}+r_1^2\arccos\!\frac{1-u}{r_1}-v.
$$

(Here we used the facts that the intersection points of the two circles are $(u,\pm v)$, so the triangle height is $v$, and $\arccos(u/r_0)=\arg(B)$, $\arccos((1-u)/r_1)=\arg((1,0)-B)$.)

Therefore

$$
A(u,v)=v+\frac{\pi}{4}\big(1-2u+2u^2+2v^2\big)-(u^2+v^2)\arctan\frac{v}{u}-\big((1-u)^2+v^2\big)\arctan\frac{v}{1-u}.
$$

The desired probability is the average of $A(B)$ over $B$ with the nearest side chosen; by symmetry of the four wedges,

$$
\mathbb{P}=4\iint_{T} A(u,v)\,du\,dv.
$$

Now pass to polar coordinates at the origin on $T$:

$$
u=r\cos\phi,\quad v=r\sin\phi,\quad 0\le \phi\le \frac{\pi}{4},\quad 0\le r\le \frac{1}{\cos\phi+\sin\phi},\quad du\,dv=r\,dr\,d\phi.
$$

Set $r_{\max}(\phi)=1/(\cos\phi+\sin\phi)$. Note

$$
r_0=r,\quad r_1^2=1-2r\cos\phi+r^2,\quad
\arctan\frac{v}{u}=\phi,\quad \arctan\frac{v}{1-u}=\phi'.
$$

By symmetry (reflecting across $x=\tfrac12$ inside $T$), the integrals $\iint_T r_1^2\,du\,dv$ and $\iint_T r_1^2\phi'\,du\,dv$ equal $\iint_T r_0^2\,du\,dv$ and $\iint_T r_0^2\phi\,du\,dv$, respectively. Hence we only need the following $\phi$-integrals:

$$
\begin{aligned}
I_1&=\int_0^{\pi/4}\frac{\sin\phi}{(\cos\phi+\sin\phi)^3}\,d\phi,\\
I_2&=\int_0^{\pi/4}\frac{\cos\phi}{(\cos\phi+\sin\phi)^3}\,d\phi,\\
I_3&=\int_0^{\pi/4}\frac{d\phi}{(\cos\phi+\sin\phi)^4},\\
I_4&=\int_0^{\pi/4}\frac{\phi\,d\phi}{(\cos\phi+\sin\phi)^4}.
\end{aligned}
$$

With the substitution $\cos\phi+\sin\phi=\sqrt{2}\cos(\phi-\tfrac{\pi}{4})$, set $u=\phi-\tfrac{\pi}{4}$ (so $u\in[-\tfrac{\pi}{4},0]$) to get

$$
\begin{aligned}
I_1&=\frac14\int_{-\pi/4}^{0}\big(\tan u\,\sec^2u+\sec^2u\big)\,du=\frac18,\\[2mm]
I_2&=\frac14\int_{-\pi/4}^{0}\big(\sec^2u-\tan u\,\sec^2u\big)\,du=\frac38,\\[2mm]
I_3&=\frac14\int_{-\pi/4}^{0}\sec^4u\,du=\frac13,\\[2mm]
I_4&=\frac14\int_{-\pi/4}^{0}u\,\sec^4u\,du+\frac{\pi}{16}\int_{-\pi/4}^{0}\sec^4u\,du
      =\frac{1}{12}\ln 2+\frac{1}{24}.
\end{aligned}
$$

(For $I_4$, integrate by parts using $\int \sec^4u\,du=\tan u+\tfrac13\tan^3u$.)

A short bookkeeping (expressing all needed $r,r^2,\phi$ moments through $I_1,\dots,I_4$) yields

$$
\int_T A\,du\,dv=\frac{1}{3}I_1+\frac{\pi}{16}-\frac{\pi}{6}I_2+\frac{\pi}{8}I_3-\frac12 I_4,
$$

so

$$
\mathbb{P}=4\int_T A\,du\,dv=\frac{4}{3}I_1+\frac{\pi}{4}-\frac{2\pi}{3}I_2+\frac{\pi}{2}I_3-2I_4.
$$

Plugging the values of $I_1,\dots,I_4$ gives the exact closed form

$$
\boxed{\ \mathbb{P}\;=\;\frac{\pi}{6}+\frac{1}{12}-\frac{1}{6}\ln 2\ }.
$$

Numerically,

$$
\mathbb{P}\approx 0.4914075788\quad\text{(to ten decimal places).}
$$

\end{GPT-5-thinking-response}
\begin{solution}
 See the reference
  \href{Jane Street Puzzle}{https://www.janestreet.com/puzzles/beside-the-point-solution/}  
For our November puzzle you had to compute the probability that two ordered points, randomly uniformly selected from a square, would have an equidistant point on the edge closest to the first point in the pair. Given the first point, without loss of generality uniformly selected from a lower triangular octant of the square outlined in black above, the second point would need to be in the blue-shaded symmetric difference of two circles centered at the bottom vertices of the square that pass through the first point.
$$\begin{aligned}
& 8 \cdot\left[\int_0^1 \int_0^x \frac{x^2+y^2}{2}\left(\frac{\pi}{2}-2 \tan ^{-1} \frac{y}{x}\right)+\frac{x y}{2}\right. \\
&+ \frac{(1-x)^2+y^2}{2}\left(\frac{\pi}{2}-2 \tan ^{-1} \frac{x}{1-x}\right) \\
&\left.+\frac{(1-x) y}{2} d y d x\right]
\end{aligned}
$$
The precision requested for the answer meant an analytical solution likely would be required. Below the diagram is an integral expressing the total area of these blue-shaded regions, which comes to $(1+2 \pi-\ln (4)) / 12$, or approximately $0.4914075788 \ldots$
 \end{solution}
\begin{rubricbox}
1. Transform the problem into a valid geometry problem.

2. Set up the integral for the area of the region.

3. Compute the result.
\end{rubricbox}

\begin{scorebox}
3/3
\end{scorebox}
\begin{evalbox}
It correctly transform it to a geometry problem and compute the exact value of the integral.
\end{evalbox}

\subsection{Learning New Knowledge}
\subsubsection{Reading and understanding advanced materials}

\ProblemSection{\green{Problem 9a1 }}
\begin{problembox}[Proposed by Jiaxin Zhang]
You are reading the paper \href{https://arxiv.org/pdf/2006.08574}{Large deviations of multichordal $\mathrm{SLE}_{0+}$, real rational functions, and zeta-regularized determinants of Laplacians}~\citep{peltolawang24} and encounter the following concept: 

\begin{quote}
\textbf{Loewner potential}
\end{quote}
\begin{enumerate}
    \item Identify the concept and explain its context. 
    \item What are the relations between Loewner energy and Loewner potential? 
  \item How do we apply it to the deterministic limit of multiple chordal SLE systems? 
\end{enumerate}
\pass
 \par
\end{problembox}
\begin{GPT-5-thinking-response}
Loewner potential and its role

1. Definition of the Loewner potential

For a multichord $\gamma=(\gamma_1,\dots,\gamma_n)$ in a domain $D$ with boundary points 
$x_1<\dots<x_{2n}$ paired by a link pattern $\alpha$, the \emph{Loewner potential} is the functional
\begin{equation}\label{eq:LoewnerPotential}
H_D(\gamma)
=\frac{1}{12}\sum_{j=1}^n I_{D;x_{a_j},x_{b_j}}(\gamma_j)
\;+\; m_D(\gamma)
\;-\;\frac14\sum_{j=1}^{n}\log P_{D;x_{a_j},x_{b_j}},
\end{equation}
where $I_{D;x,y}(\gamma)$ is the (single-chord) Loewner energy, $m_D(\gamma)$ is a Brownian loop–measure interaction term, and $P_{D;x,y}$ denotes the Poisson excursion kernel. 

For smooth chords, $H_D$ also has a spectral representation:
\begin{equation}\label{eq:Spectral}
H_D(\gamma)
= \log\det\nolimits_{\zeta}\Delta_D
\;-\!\!\sum_{C\ \mathrm{comp.\ of }D\setminus \bigcup_i\gamma_i}\!\!
\log\det\nolimits_{\zeta}\Delta_C
\;-\; n\,\lambda,\qquad \lambda=\tfrac12\log\pi.
\end{equation}
Thus $H$ behaves as a conformally covariant ``global potential’’ attached to the configuration.

2. Relation to Loewner energy

For a single chord, the \emph{Loewner energy} is the Dirichlet energy of the (capacity–parametrized) driving function $W$:
\begin{equation}\label{eq:SingleChordEnergy}
I_{D;x,y}(\gamma)=\frac12\int_0^{\infty}\dot W_t^2\,dt,
\end{equation}
which vanishes exactly when $\gamma$ is the hyperbolic geodesic between $x$ and $y$.

For multiple chords, the \emph{multichordal Loewner energy} is the good rate function in the large deviation principle. It is defined from the potential by subtracting the minimal possible value of $H$ for the given boundary data and halving:
\begin{equation}\label{eq:MultiChordEnergy}
M_D^{\alpha}(x_1,\dots,x_{2n}) := \inf_{\gamma\in X_\alpha} H_D(\gamma),\qquad
I_D^{\alpha}(\gamma) := \frac12\Big(H_D(\gamma)-M_D^{\alpha}(x_1,\dots,x_{2n})\Big)\ge 0.
\end{equation}
Hence $H$ encodes both the individual chord energies and their interaction, while $I$ is the normalized, non-negative rate function. Moreover, the minimal potential admits a semiclassical interpretation:
\begin{equation}
-\lim_{\kappa\to0^+}\,\kappa\,\log Z_{\alpha}(D;x_1,\dots,x_{2n})
=\tfrac12\,M_D^{\alpha}(x_1,\dots,x_{2n}),
\end{equation}
so that $U:=\tfrac12 M_D^{\alpha}$ acts as a Hamilton–Jacobi potential for the system.

3. Deterministic limit of multiple chordal SLE

\begin{itemize}
  \item \textbf{Large deviations and limit shape.} The laws of multichordal $\mathrm{SLE}_\kappa$ satisfy a strong LDP with good rate function $I_D^{\alpha}$. As $\kappa\to0^+$, the random multichords converge (in probability, in the Hausdorff topology) to the unique minimizer of $H_D$ in $X_\alpha$—a geodesic multichord.

  \item \textbf{Rational function description.} In the upper half-plane with marked points $x_1<\dots<x_{2n}$, each minimizer/geodesic multichord is the \emph{real locus of a real rational function} of degree $n+1$ with critical points $\{x_1,\dots,x_{2n}\}$. This ensures uniqueness (one for each link pattern) and ties the limit to real enumerative geometry (Shapiro’s conjecture).

  \item \textbf{Deterministic dynamics and PDE.} Writing $U=\tfrac12 M_D^{\alpha}$, the driving function $W_t$ of the minimizing chord solves
  \[
  \dot W_t = -\,\partial_{a_j}U(V_1(t),\dots,W_t,\dots,V_{2n}(t)),
  \qquad \dot V_i(t)=\frac{2}{V_i(t)-W_t},
  \]
  and $U$ satisfies the system of quadratic first-order PDEs that arise as the $c\to-\infty$ (semiclassical) limit of the level-two BPZ equations. These encode the deterministic “hydrodynamics’’ of the $\kappa\to0$ limit.
\end{itemize}

\medskip
In summary, the Loewner potential $H$ is the global conformally covariant functional whose minimizers describe the deterministic shapes emerging in the small-$\kappa$ limit; the Loewner energy $I$ is the corresponding non-negative rate function governing fluctuations around them.

\end{GPT-5-thinking-response}

\begin{solution}
See the paper \href{https://arxiv.org/pdf/2006.08574}{Large deviations of multichordal $\mathrm{SLE}_{0+}$, real rational functions, and zeta-regularized determinants of Laplacians}.
Loewner potential and its role

1. Definition of the Loewner potential

For a multichord $\gamma=(\gamma_1,\dots,\gamma_n)$ in a domain $D$ with boundary points 
$x_1<\dots<x_{2n}$ paired by a link pattern $\alpha$, the \emph{Loewner potential} is the functional
\begin{equation}
H_D(\gamma)
=\frac{1}{12}\sum_{j=1}^n I_{D;x_{a_j},x_{b_j}}(\gamma_j)
\;+\; m_D(\gamma)
\;-\;\frac14\sum_{j=1}^{n}\log P_{D;x_{a_j},x_{b_j}},
\end{equation}
where $I_{D;x,y}(\gamma)$ is the (single-chord) Loewner energy, $m_D(\gamma)$ is a Brownian loop–measure interaction term, and $P_{D;x,y}$ denotes the Poisson excursion kernel. 

For smooth chords, $H_D$ also has a spectral representation:
\begin{equation}
H_D(\gamma)
= \log\det\nolimits_{\zeta}\Delta_D
\;-\!\!\sum_{C\ \mathrm{comp.\ of }D\setminus \bigcup_i\gamma_i}\!\!
\log\det\nolimits_{\zeta}\Delta_C
\;-\; n\,\lambda,\qquad \lambda=\tfrac12\log\pi.
\end{equation}
Thus $H$ behaves as a conformally covariant ``global potential’’ attached to the configuration.

2. Relation to Loewner energy

For a single chord, the \emph{Loewner energy} is the Dirichlet energy of the (capacity–parametrized) driving function $W$:
\begin{equation}
I_{D;x,y}(\gamma)=\frac12\int_0^{\infty}\dot W_t^2\,dt,
\end{equation}
which vanishes exactly when $\gamma$ is the hyperbolic geodesic between $x$ and $y$.

For multiple chords, the \emph{multichordal Loewner energy} is the good rate function in the large deviation principle. It is defined from the potential by subtracting the minimal possible value of $H$ for the given boundary data and halving:
\begin{equation}
M_D^{\alpha}(x_1,\dots,x_{2n}) := \inf_{\gamma\in X_\alpha} H_D(\gamma),\qquad
I_D^{\alpha}(\gamma) := \frac12\Big(H_D(\gamma)-M_D^{\alpha}(x_1,\dots,x_{2n})\Big)\ge 0.
\end{equation}
Hence $H$ encodes both the individual chord energies and their interaction, while $I$ is the normalized, non-negative rate function. Moreover, the minimal potential admits a semiclassical interpretation:
\begin{equation}
-\lim_{\kappa\to0^+}\,\kappa\,\log Z_{\alpha}(D;x_1,\dots,x_{2n})
=\tfrac12\,M_D^{\alpha}(x_1,\dots,x_{2n}),
\end{equation}
so that $U:=\tfrac12 M_D^{\alpha}$ acts as a Hamilton–Jacobi potential for the system.

3. Deterministic limit of multiple chordal SLE

\begin{itemize}
  \item \textbf{Large deviations and limit shape.} The laws of multichordal $\mathrm{SLE}_\kappa$ satisfy a strong LDP with good rate function $I_D^{\alpha}$. As $\kappa\to0^+$, the random multichords converge (in probability, in the Hausdorff topology) to the unique minimizer of $H_D$ in $X_\alpha$—a geodesic multichord.

  \item \textbf{Rational function description.} In the upper half-plane with marked points $x_1<\dots<x_{2n}$, each minimizer/geodesic multichord is the \emph{real locus of a real rational function} of degree $n+1$ with critical points $\{x_1,\dots,x_{2n}\}$. This ensures uniqueness (one for each link pattern) and ties the limit to real enumerative geometry (Shapiro’s conjecture).

  \item \textbf{Deterministic dynamics and PDE.} Writing $U=\tfrac12 M_D^{\alpha}$, the driving function $W_t$ of the minimizing chord solves
  \[
  \dot W_t = -\,\partial_{a_j}U(V_1(t),\dots,W_t,\dots,V_{2n}(t)),
  \qquad \dot V_i(t)=\frac{2}{V_i(t)-W_t},
  \]
  and $U$ satisfies the system of quadratic first-order PDEs that arise as the $c\to-\infty$ (semiclassical) limit of the level-two BPZ equations. These encode the deterministic “hydrodynamics’’ of the $\kappa\to0$ limit.
\end{itemize}

\medskip
In summary, the Loewner potential $H$ is the global conformally covariant functional whose minimizers describe the deterministic shapes emerging in the small-$\kappa$ limit; the Loewner energy $I$ is the corresponding non-negative rate function governing fluctuations around them.
\end{solution}

\begin{rubricbox}
\begin{enumerate}
  \item Identify the concept and explain its context. 
  \item Explain the relations between Loewner energy and potential.
  \item Explain the large deviation principle.

\end{enumerate}  
Each item is worth 1 point.
\end{rubricbox}

\begin{scorebox}
3/3
\end{scorebox}

\begin{evalbox}
The response clearly answered three questions.
\end{evalbox}

\ProblemSection{\green{Problem 9a2 }}
\begin{problembox}[Proposed by Jiaxin Zhang]
You are reading the paper \href{https://arxiv.org/abs/2505.16093}{ Multiple chordal SLE($\kappa$) and quantum Calogero-Moser system } and encounter the following concept: \textbf{Local multiple SLE systems}.

\begin{enumerate}
  \item Identify the concept and explain its context. 
  \item Restate the assumptions formally.
  \item Why do we call it a local system? 
  \item How do we define the compatible family of measures?
\end{enumerate}  

\pass
\end{problembox}

\begin{GPT-5-thinking-response}
\textbf{Set-up.} Let $\Omega\subset\mathbb C$ be simply connected; let $z_1,\dots,z_n\in\partial\Omega$ be distinct boundary prime ends (``seed'' points), and let $u\in\partial\Omega$ be an additional marked boundary point (used for normalization, e.g.\ $u=\infty$ in $\mathbb H$).
For each $j$, fix a closed neighborhood $U_j\subset\overline\Omega$ of $z_j$ with $U_i\cap U_j=\varnothing$ for $i\neq j$ and $u\notin U_j$.

\medskip
\textbf{1) Concept \& context.}
A \emph{local multiple chordal SLE$(\kappa)$ system} assigns, to every choice of disjoint neighborhoods $U_1,\dots,U_n$ as above, a law on the $n$ \emph{initial curve pieces} that start at $z_j$ and are grown only until they exit $U_j$.
This assignment is required to be conformally covariant, to satisfy a local domain Markov/commutation property, and to be absolutely continuous with respect to independent chordal SLEs via a smooth positive \emph{partition function}.
It provides a way to specify the joint law of several interfaces \emph{near their seeds} without a global specification of full curves; it is complementary to the ``global'' construction (a single measure on full $n$-tuples of disjoint curves).

\medskip
\textbf{2) Formal assumptions (axioms).}
A \emph{local multiple chordal SLE$(\kappa)$} on $(\Omega;z_1,\dots,z_n;u)$ is a family of probability measures
\[
\mathbf P^{(U_1,\dots,U_n)}_{\Omega;z_1,\dots,z_n;u}
\]
on $n$-tuples of simple curves $(\eta^{(1)},\dots,\eta^{(n)})$ such that each $\eta^{(j)}$ starts at $z_j$, is contained in $U_j$ up to its exit time from $U_j$, and:

\begin{itemize}
  \item \emph{Conformal covariance.} For any conformal map $\phi:\Omega\to\Omega'$ which respects the chosen normalization (e.g.\ $\phi(u)=u'$), the push-forward of $\mathbf P^{(U_1,\dots,U_n)}_{\Omega;z_1,\dots,z_n;u}$ is
  \[
  \mathbf P^{(\phi(U_1),\dots,\phi(U_n))}_{\Omega';\phi(z_1),\dots,\phi(z_n); \phi(u)}.
  \]

  \item \emph{Local domain Markov / commutation.} If one reveals initial segments of the curves up to stopping times $\tau_j$ that are bounded by the exit times from $U_j$, then conditionally on these segments, the laws of the remaining pieces in the slit domain are given by the \emph{same} family with updated neighborhoods. In particular, the growth order of tips can be permuted (``commute'') and yields the same joint law on initial pieces.

  \item \emph{Absolute continuity via a partition function.} In the canonical normalization $\Omega=\mathbb H$, $u=\infty$, with boundary seeds $x_1<\cdots<x_n\in\mathbb R$, there exists a positive smooth function (partition function)
  \[
  \psi=\psi(x_1,\dots,x_n)>0
  \]
  such that the Loewner driving processes for the local growth of tip $j$ are given by
  \[
  dX_j(t)=\sqrt{\kappa}\,dB_j(t)+\kappa\,\partial_j\!\log\psi\big(X(t)\big)\,dt,
  \qquad
  \dot X_k(t)=\frac{2}{X_k(t)-X_j(t)}\quad (k\neq j),
  \]
  where $B_j$ are independent standard Brownian motions and the other boundary points evolve by the usual Loewner transport.
  The function $\psi$ solves the BPZ/null-vector system, for each $j=1,\dots,n$,
  \[
  \frac{\kappa}{2}\,\partial_{jj}\psi\;+\;\sum_{k\ne j}\frac{2}{x_k-x_j}\,\partial_{k}\psi
  \;+\;h\sum_{k\ne j}\frac{\psi}{(x_k-x_j)^2}\;=\;0,
  \qquad h=\frac{6-\kappa}{2\kappa},
  \]
  along with the natural scaling covariance. Multiplying $\psi$ by a positive function depending only on $u$ leaves the local system unchanged (a gauge freedom).
\end{itemize}

\medskip
\textbf{3) Why ``local''?}
The system specifies the joint law \emph{only for the initial segments inside the small neighborhoods} $U_j$ (and only up to their exit times). It does not define a global law for full curves to their target endpoints. Moreover, the commutation property is \emph{local in time}: one may grow a short piece of any chosen tip, update the domain, and iterate.

\medskip
\textbf{4) Compatible family of measures.}
If $(U_1,\dots,U_n)$ and $(U'_1,\dots,U'_n)$ are two neighborhood tuples with $U_j\subset U'_j$ for all $j$, then \emph{compatibility} means:
\[
\mathbf P^{(U_1,\dots,U_n)}_{\Omega;z_1,\dots,z_n;u}
\quad=\quad
\text{the push-forward of }\
\mathbf P^{(U'_1,\dots,U'_n)}_{\Omega;z_1,\dots,z_n;u}
\text{ by restriction to first exit of }U_j,
\]
i.e.\ if $\tau_j$ denotes the exit time of $\eta^{(j)}$ from $U_j$ under $\mathbf P^{(U'_1,\dots,U'_n)}$, then the law of
\(
\big(\eta^{(1)}[0,\tau_1],\dots,\eta^{(n)}[0,\tau_n]\big)
\)
is exactly $\mathbf P^{(U_1,\dots,U_n)}$.
This restriction consistency holds for any nested choice of neighborhoods and is preserved by conformal maps.

\medskip
\textbf{Remarks.}
\begin{itemize}
  \item For $\kappa\le 4$, the local construction is known to match a global multiple-SLE measure on $n$ disjoint simple curves, with partition functions selecting the connectivity pattern.
  \item In the cited paper, specific choices of $\psi$ are related to eigenfunctions of (quantum) Calogero--Moser-type operators; the BPZ system above is the analytic avatar of that structure.
\end{itemize}
\end{GPT-5-thinking-response}
\begin{solution}
See the paper \href{https://arxiv.org/abs/2505.16093}{\textit{Multiple chordal SLE($\kappa$) and quantum Calogero–Moser system}}.
\textbf{Set-up.} Let $\Omega\subset\mathbb C$ be simply connected; let $z_1,\dots,z_n\in\partial\Omega$ be distinct boundary prime ends (``seed'' points), and let $u\in\partial\Omega$ be an additional marked boundary point (used for normalization, e.g.\ $u=\infty$ in $\mathbb H$).
For each $j$, fix a closed neighborhood $U_j\subset\overline\Omega$ of $z_j$ with $U_i\cap U_j=\varnothing$ for $i\neq j$ and $u\notin U_j$.

\medskip
\textbf{1) Concept \& context.}
A \emph{local multiple chordal SLE$(\kappa)$ system} assigns, to every choice of disjoint neighborhoods $U_1,\dots,U_n$ as above, a law on the $n$ \emph{initial curve pieces} that start at $z_j$ and are grown only until they exit $U_j$.
This assignment is required to be conformally covariant, to satisfy a local domain Markov/commutation property, and to be absolutely continuous with respect to independent chordal SLEs via a smooth positive \emph{partition function}.
It provides a way to specify the joint law of several interfaces \emph{near their seeds} without a global specification of full curves; it is complementary to the ``global'' construction (a single measure on full $n$-tuples of disjoint curves).

\medskip
\textbf{2) Formal assumptions (axioms).}
A \emph{local multiple chordal SLE$(\kappa)$} on $(\Omega;z_1,\dots,z_n;u)$ is a family of probability measures
\[
\mathbf P^{(U_1,\dots,U_n)}_{\Omega;z_1,\dots,z_n;u}
\]
on $n$-tuples of simple curves $(\eta^{(1)},\dots,\eta^{(n)})$ such that each $\eta^{(j)}$ starts at $z_j$, is contained in $U_j$ up to its exit time from $U_j$, and:

\begin{itemize}
  \item \emph{Conformal covariance.} For any conformal map $\phi:\Omega\to\Omega'$ which respects the chosen normalization (e.g.\ $\phi(u)=u'$), the push-forward of $\mathbf P^{(U_1,\dots,U_n)}_{\Omega;z_1,\dots,z_n;u}$ is
  \[
  \mathbf P^{(\phi(U_1),\dots,\phi(U_n))}_{\Omega';\phi(z_1),\dots,\phi(z_n); \phi(u)}.
  \]

  \item \emph{Local domain Markov / commutation.} If one reveals initial segments of the curves up to stopping times $\tau_j$ that are bounded by the exit times from $U_j$, then conditionally on these segments, the laws of the remaining pieces in the slit domain are given by the \emph{same} family with updated neighborhoods. In particular, the growth order of tips can be permuted (``commute'') and yields the same joint law on initial pieces.

  \item \emph{Absolute continuity via a partition function.} In the canonical normalization $\Omega=\mathbb H$, $u=\infty$, with boundary seeds $x_1<\cdots<x_n\in\mathbb R$, there exists a positive smooth function (partition function)
  \[
  \psi=\psi(x_1,\dots,x_n)>0
  \]
  such that the Loewner driving processes for the local growth of tip $j$ are given by
  \[
  dX_j(t)=\sqrt{\kappa}\,dB_j(t)+\kappa\,\partial_j\!\log\psi\big(X(t)\big)\,dt,
  \qquad
  \dot X_k(t)=\frac{2}{X_k(t)-X_j(t)}\quad (k\neq j),
  \]
  where $B_j$ are independent standard Brownian motions and the other boundary points evolve by the usual Loewner transport.
  The function $\psi$ solves the BPZ/null-vector system, for each $j=1,\dots,n$,
  \[
  \frac{\kappa}{2}\,\partial_{jj}\psi\;+\;\sum_{k\ne j}\frac{2}{x_k-x_j}\,\partial_{k}\psi
  \;+\;h\sum_{k\ne j}\frac{\psi}{(x_k-x_j)^2}\;=\;0,
  \qquad h=\frac{6-\kappa}{2\kappa},
  \]
  along with the natural scaling covariance. Multiplying $\psi$ by a positive function depending only on $u$ leaves the local system unchanged (a gauge freedom).
\end{itemize}

\medskip
\textbf{3) Why ``local''?}
The system specifies the joint law \emph{only for the initial segments inside the small neighborhoods} $U_j$ (and only up to their exit times). It does not define a global law for full curves to their target endpoints. Moreover, the commutation property is \emph{local in time}: one may grow a short piece of any chosen tip, update the domain, and iterate.

\medskip
\textbf{4) Compatible family of measures.}
If $(U_1,\dots,U_n)$ and $(U'_1,\dots,U'_n)$ are two neighborhood tuples with $U_j\subset U'_j$ for all $j$, then \emph{compatibility} means:
\[
\mathbf P^{(U_1,\dots,U_n)}_{\Omega;z_1,\dots,z_n;u}
\quad=\quad
\text{the push-forward of }\
\mathbf P^{(U'_1,\dots,U'_n)}_{\Omega;z_1,\dots,z_n;u}
\text{ by restriction to first exit of }U_j,
\]
i.e.\ if $\tau_j$ denotes the exit time of $\eta^{(j)}$ from $U_j$ under $\mathbf P^{(U'_1,\dots,U'_n)}$, then the law of
\(
\big(\eta^{(1)}[0,\tau_1],\dots,\eta^{(n)}[0,\tau_n]\big)
\)
is exactly $\mathbf P^{(U_1,\dots,U_n)}$.
This restriction consistency holds for any nested choice of neighborhoods and is preserved by conformal maps.

\medskip
\textbf{Remarks.}
\begin{itemize}
  \item For $\kappa\le 4$, the local construction is known to match a global multiple-SLE measure on $n$ disjoint simple curves, with partition functions selecting the connectivity pattern.
  \item In the cited paper, specific choices of $\psi$ are related to eigenfunctions of (quantum) Calogero--Moser-type operators; the BPZ system above is the analytic avatar of that structure.
\end{itemize}
\end{solution}

\begin{rubricbox}
\begin{enumerate}
  \item Identify the concept and explain its context. 
  \item Restate the assumptions formally.
  \item Explain why it is called a local system. 
  \item Define the compatible family of measures.
\end{enumerate}  
Each item is worth 1 point.
\end{rubricbox}

\begin{scorebox}
4/4
\end{scorebox}

\begin{evalbox}
All items are correct.
\end{evalbox}
\subsubsection{Self-directed learning}
\ProblemSection{\red{Problem 9b1 }}
\begin{problembox}[Proposed by Jiaxin Zhang]
    You are reading  \href{https://arxiv.org/abs/2505.16093}{Multiple chordal SLE($\kappa$) and quantum Calogero-Moser system}~\citep{zhang2025multiplechordalsle} and encounter the following concept:

\begin{quote}
\textbf{Local chordal multiple SLE systems}
\end{quote}
\begin{enumerate}
    \item Please generalize the concept to the multiple radial case.
    \item Please define the compatible family of measures for multiple radial SLE systems.
\end{enumerate}

\fail
\end{problembox}

\begin{GPT-5-thinking-response}
\textbf{Can you generalize the concept to multiple radial case} 
Setup.
Fix $\kappa>0$.  In the unit disk $\D=\{z:|z|<1\}$, mark $n\ge1$ boundary points
$X=(x_1,\dots,x_n)$ with $x_j=\e^{i\theta_j}\in\T$ (listed in counterclockwise
order) and a target point at the origin $0$.
Write $\Theta=(\theta_1,\dots,\theta_n)\in\R^n/2\pi\mathbb{Z}$ for the angle
coordinates, defined modulo a common rotation.

Define the radial Loewner vector field acting on a boundary angle $\phi$ from a
driving angle $\vartheta$ by
\[
V^{\mathrm{rad}}(\phi\mid \vartheta)\;:=\;2\,\cot\!\frac{\phi-\vartheta}{2}\,.
\]

Radial multiple partition functions.
Let $h=\dfrac{6-\kappa}{2\kappa}$.
A \emph{radial $n$-point partition function} is a positive $C^2$ function
$Z=Z(\Theta)$ on the configuration space (modulo global rotation) such that:

\begin{itemize}
\item[\textnormal{(Cov)}] (\emph{Rotation covariance}) $Z$ is invariant under
$\theta_j\mapsto\theta_j+\alpha$ for all $j$ and all $\alpha\in\R$.
\item[\textnormal{(BPZ)}] (\emph{Radial null-state / Ward system})
For each $i=1,\dots,n$,
\begin{equation}\label{eq:radial-BPZ}
\Bigg[\frac{\kappa}{2}\,\partial_{\theta_i}^2
+\sum_{j\neq i} V^{\mathrm{rad}}(\theta_j\mid\theta_i)\,\partial_{\theta_j}
+2h\sum_{j\neq i}\partial_{\theta_j} V^{\mathrm{rad}}(\theta_j\mid\theta_i)
\Bigg] Z \;=\;0.
\end{equation}
Equivalently, in half–angle form,
\[
\frac{\kappa}{2}\,\partial_{\theta_i}^2 Z
+\sum_{j\neq i}\!\cot\!\frac{\theta_j-\theta_i}{2}\,\partial_{\theta_j}Z
+\frac{6-\kappa}{\kappa}\sum_{j\neq i}\!\csc^2\!\frac{\theta_j-\theta_i}{2}\,Z
=0.
\]
\item[\textnormal{(Fusion)}] (\emph{Boundary OPE asymptotics})
When $\theta_{i+1}\downarrow\theta_i$, one has
\[
Z(\Theta)\;\asymp\;(\theta_{i+1}-\theta_i)^{\frac{6-\kappa}{2\kappa}}
\;Z(\Theta^{(i,i+1)}),
\]
where $\Theta^{(i,i+1)}$ is the configuration with the pair $(i,i+1)$ fused.
\end{itemize}

Definition (Local radial multiple SLE$(\kappa)$).
A \emph{local radial multiple SLE$(\kappa)$ system with partition function $Z$}
is a family of random growing hulls in $\D$ described as follows.

On any time interval during which the \emph{active index} is $i\in\{1,\dots,n\}$,
we run the radial Loewner equation with capacity parametrization $g_t'(0)=\e^t$:
\begin{equation}\label{eq:radial-Loewner}
\partial_t g_t(z)=
-\,g_t(z)\,\frac{\e^{iW_t}+g_t(z)}{\e^{iW_t}-g_t(z)},\qquad g_0(z)=z,
\end{equation}
driven by an angle process $W_t\in\R/2\pi\mathbb{Z}$ solving
\begin{equation}\label{eq:driving-SDE}
\dd W_t=\sqrt{\kappa}\,\dd B_t
+\kappa\,\partial_{\theta_i}\log Z(\Theta_t)\,\dd t,
\end{equation}
where $B_t$ is a standard one–dimensional Brownian motion.  The non–active
boundary angles evolve deterministically under the Loewner flow:
\begin{equation}\label{eq:passive-angles}
\dd \Theta_t^{\,j}
=
V^{\mathrm{rad}}\!\big(\Theta_t^{\,j}\,\big|\,W_t\big)\,\dd t,
\qquad j\neq i,
\end{equation}
and the active label tracks the tip:
$\Theta_t^{\,i}=W_t$.
(Equivalently, $\Theta_t$ solves~\eqref{eq:passive-angles} for all $j$
with the convention $\Theta_t^{\,i}=W_t$.)

The \emph{locality} means we only grow one tip at a time; the \emph{multiple} system
is obtained by concatenating such infinitesimal steps while allowing the active
index to change.  The defining requirement is the \emph{commutation} (or
\emph{consistency}) of different growth orders: the laws obtained by first
growing near $x_i$ for a small time and then near $x_j$, and vice versa, agree
up to a deterministic time–change.  This commutation is equivalent to the BPZ
system~\eqref{eq:radial-BPZ} for $Z$, and conversely~\eqref{eq:radial-BPZ}
ensures the Markovian concatenations commute.

Conformal covariance and general domains.
Let $D$ be simply connected with target $z_\ast\in D$ and boundary marks
$x_1,\dots,x_n\in\partial D$.  Fix a conformal map
$\psi:D\to\D$ with $\psi(z_\ast)=0$.
Transport the construction above via $\psi$; the capacity time is the radial
capacity at $z_\ast$, i.e.\ the normalization $g_t(z_\ast)=0,\ g_t'(z_\ast)=\e^t$.
Under postcomposition by a disk automorphism fixing $0$ (a rotation), $Z$ is
invariant; under a general $\psi$ the boundary tensorial factor is absorbed in the
definition of $Z_D$ by the usual weight $h$ at boundary insertions.

Remarks.
\begin{itemize}
\item For $n=1$ and $Z\equiv\text{const}$ the SDE~\eqref{eq:driving-SDE} reduces to
$dW_t=\sqrt{\kappa}\,dB_t$, and we recover the standard radial SLE$(\kappa)$
from $x_1$ to $0$.
\item The operators in~\eqref{eq:radial-BPZ} are the radial counterparts of the
chordal ones (replace $\frac{2}{x_j-x_i}$ and $\frac{2}{(x_j-x_i)^2}$ by
$V^{\mathrm{rad}}(\theta_j\mid\theta_i)$ and $-\partial_{\theta_j}
V^{\mathrm{rad}}(\theta_j\mid\theta_i)=\csc^2\frac{\theta_j-\theta_i}{2}$).
\item For $\kappa\le 4$, the local system can be \emph{globalized} into $n$ simple,
pairwise disjoint curves aimed at $0$; for $\kappa>4$ one obtains non–simple
curves with the usual SLE self–touching features.  As in the chordal case,
distinct pure solutions $Z$ correspond to different topological connectivities.
\item Via a Cayley transform between $\D$ and $\mathbb{H}$ sending $0$ to
$\infty$, this definition matches the chordal local multiple SLE after the
standard change of variables of the vector fields and boundary weights.
\end{itemize}

\medskip
\noindent
\textbf{Summary.}
\emph{Local radial multiple SLE$(\kappa)$} is obtained from the chordal notion by:
work in $\D$ with radial capacity, replace the boundary vector field
$2/(x-\xi)$ by $2\cot\!\big(\frac{\theta-\vartheta}{2}\big)$, prescribe the
driving SDE $dW_t=\sqrt{\kappa}\,dB_t+\kappa\,\partial_{\theta_i}\log Z\,dt$,
move the passive angles by the radial vector field, and require the partition
function $Z$ to satisfy the radial BPZ/Ward system~\eqref{eq:radial-BPZ} and
fusion asymptotics.  The commutation of local growths at different tips is then
equivalent to~\eqref{eq:radial-BPZ}, exactly paralleling the chordal theory.

\textbf{Can you define the compatible family of measures}
Fix $\kap>0$ and $n\ge1$.  A \emph{radial $n$--configuration} is a triple
\[
(D; x_1,\dots,x_n; z_\ast),
\]
where $D\subsetneq\mathbb{C}$ is simply connected, $z_\ast\in D$, and
$x_1,\dots,x_n\in\partial D$ are distinct boundary points listed
counterclockwise.
Choose a conformal chart $\psi:D\to\D$ with $\psi(z_\ast)=0$ and
$\psi(x_j)=\e^{i\theta_j}$; write
$\Theta=(\theta_1,\dots,\theta_n)\in (\R/2\pi\mathbb{Z})^n/\mathrm{Rot}$ for the angles
modulo overall rotation.  Set the \emph{radial vector field}
\[
\Vrad(\phi\mid\vartheta)\;:=\;2\,\cot\!\frac{\phi-\vartheta}{2},
\qquad \hwt:=\frac{6-\kap}{2\kap}.
\]

Radial multiple partition function.
A positive $C^2$ function $Z=Z(\Theta)$ on the configuration space is a
\emph{radial multiple partition function} if for each $i=1,\dots,n$,
\begin{equation}
\Bigg[\frac{\kap}{2}\,\partial_{\theta_i}^2
+\sum_{j\ne i}\Vrad(\theta_j\mid\theta_i)\,\partial_{\theta_j}
+2\hwt\sum_{j\ne i}\partial_{\theta_j}\Vrad(\theta_j\mid\theta_i)
\Bigg] Z \;=\;0,
\end{equation}
and $Z$ is invariant under a common rotation of all angles.  (These are the
radial BPZ/Ward equations.)

Base (one–leg) radial SLE dynamics.
Fix an index $i$.  In the disk chart, let $(g_t)_{t\ge0}$ be the radial Loewner
flow normalized by $g_t(0)=0$, $g_t'(0)=\e^t$,
\[
\partial_t g_t(z)
=-\,g_t(z)\,\frac{\e^{iW_t}+g_t(z)}{\e^{iW_t}-g_t(z)},\qquad g_0(z)=z,
\]
driven by $W_t=\sqrt{\kap}\,B_t$; the passive angles evolve by
\begin{equation}\label{eq:passive}
\dd \Theta_t^{\,j}=\Vrad\!\big(\Theta_t^{\,j}\,\big|\,W_t\big)\,\dd t,\qquad
j\ne i,\qquad \Theta_t^{\,i}=W_t\ (\mathrm{mod}\ 2\pi).
\end{equation}
Let $\mathcal{F}_t$ be the Loewner filtration and $T_\ast$ the \emph{continuation
threshold} (first time some marked angle collides with the tip or is swallowed).
Write $\Law^{(i)}_{D;X}$ for the law of the hull generated by~\eqref{eq:passive}
up to $T_\ast$ (standard radial SLE$_\kap$ from $x_i$).

\begin{definition}[Compatible family for local radial multiple SLE$_\kap$ with $Z$]
\label{def:compatible-radial}
Let $Z$ satisfy \eqref{eq:radial-BPZ}.  A \emph{compatible family of measures}
is a collection
\[
\Big\{\;\Law^{(i),Z}_{D;X}(\,\cdot\,;\tau)\;:\;
(D;X;z_\ast)\ \text{a radial $n$--configuration},\ i\in\{1,\dots,n\},\
\tau\ \text{$\mathcal{F}_t$–stopping, } \tau<T_\ast\;\Big\}
\]
on initial Loewner hulls $K[0,\tau]$ such that the following hold.
\begin{enumerate}
\item[\textnormal{(AC)}] \textbf{Local absolute continuity.}
For each $(D;X;z_\ast)$, $i$ and $\tau<T_\ast$,
\[
\frac{\dd \Law^{(i),Z}_{D;X}(\,\cdot\,;\tau)}
     {\dd \Law^{(i)}_{D;X}(\,\cdot\,;\tau)}\Bigg|_{\mathcal{F}_\tau}
=\frac{M^{(i)}_\tau}{M^{(i)}_0},
\qquad
M^{(i)}_t\;:=\;Z(\Theta_t)\,\prod_{j\ne i}\big|g_t'(\psi(x_j))\big|^{\hwt},
\]
where $M^{(i)}$ is an $\mathcal{F}_t$–local martingale under
$\Law^{(i)}_{D;X}$ (by \eqref{eq:radial-BPZ}).  Equivalently, under
$\Law^{(i),Z}_{D;X}$ the driving satisfies
\[
\dd W_t=\sqrt{\kap}\,\dd B_t+\kap\,\partial_{\theta_i}\log Z(\Theta_t)\,\dd t,
\quad \text{with }\Theta_t \text{ evolving by \eqref{eq:passive}.}
\]
\item[\textnormal{(DM)}] \textbf{Domain Markov/concatenation.}
If $0\le\sigma\le\tau<T_\ast$ are stopping times, then under
$\Law^{(i),Z}_{D;X}(\,\cdot\,;\tau)$ and conditionally on $\mathcal{F}_\sigma$,
the \emph{future} hull (recentered by $g_\sigma$ and with updated marked points
$X_\sigma=g_\sigma(X)$) has law
$\Law^{(i),Z}_{g_\sigma(D\setminus K[0,\sigma]);\,X_\sigma}(\,\cdot\,;\tau-\sigma)$.
In particular, the measure obtained by sampling to time $\sigma$ and then
continuing with the same family to time $\tau$ coincides with
$\Law^{(i),Z}_{D;X}(\,\cdot\,;\tau)$.
\item[\textnormal{(Comm)}] \textbf{Commutation of local growths.}
For any distinct $i\neq j$ and bounded stopping times $\tau_i,\tau_j<T_\ast$,
the joint law of the pair of initial hulls obtained by the two orders
\[
\big(\text{first grow near }x_i\text{ to }\tau_i\ \text{then }x_j\text{ to }\tau_j\big)
\quad\text{and}\quad
\big(\text{first }x_j\text{ then }x_i\big)
\]
agree after the obvious conformal recentering and a deterministic capacity
time–change.  (Equivalently, the $\sigma$–algebras generated by these two-step
procedures define the same projective family.)  This commutation holds iff $Z$
solves the BPZ system \eqref{eq:radial-BPZ}.
\item[\textnormal{(Cov)}] \textbf{Conformal covariance.}
If $\phi:D\!\to\!D'$ is conformal with $\phi(z_\ast)=z'_\ast$ and
$\phi(x_k)=x'_k$, then the image measure of
$\Law^{(i),Z}_{D;X}(\,\cdot\,;\tau)$ under $\phi$ equals
$\Law^{(i),Z_{D'}}_{D';X'}(\,\cdot\,;\tau')$, where $\tau'$ is the radial
capacity time in $D'$ (i.e.\ normalized by $g_t'(z'_\ast)=\e^{t}$), and
$Z_{D'}$ is the transport of $Z$ with boundary weight $\hwt$ at each marked
point (rotation invariance in $\D$).
\end{enumerate}
\end{definition}

\begin{itemize}
\item The choice of the product factor $\prod_{j\ne i}|g_t'(\psi(x_j))|^{\hwt}$
matches the $2\hwt\sum_{j\ne i}\partial_{\theta_j}\Vrad$ term in
\eqref{eq:radial-BPZ}; in the angular coordinate dynamics~\eqref{eq:passive},
$M^{(i)}_t$ is a local martingale under the base law, hence (AC) is a Girsanov
tilt producing the drift $\kap\,\partial_{\theta_i}\log Z$.
\item For $n=1$ and constant $Z$, the family reduces to the usual radial
SLE$_\kap$.  For $\kap\le4$, compatible families associated with \emph{pure}
$Z$ globalize to $n$ simple curves aimed at $z_\ast$; different pure $Z$ encode
different connectivity types.  The chordal notion is recovered from the radial
one via a Cayley transform sending $z_\ast$ to $\infty$.
\end{itemize}
\end{GPT-5-thinking-response}

\begin{solution}
Definition 1.1 (Localization of Measures). Let $\Omega \subsetneq \mathbb{C}$ be a simply connected domain with an interior marked point $u \in \Omega$. Let $z_1, z_2, \ldots, z_n$ denote distinct prime ends of $\partial \Omega$, and let $U_1, U_2, \ldots, U_n$ be closed neighborhoods of $z_1, z_2, \ldots, z_n$ in $\Omega$ such that:
- $U_i \cap U_j=\emptyset$ for all $1 \leq i<j \leq n$,
- None of the $U_j$ contain the interior point $q$.

We consider the measures

$$
\mathbb{P}_{\left(\Omega ; z_1, z_2, \ldots, z_n, q\right)}^{\left(U_1, U_2, \ldots, U_n\right)}
$$

defined on $n$-tuples of unparametrized continuous curves in $\Omega$. Each curve $\eta^{(j)}$ begins at $z_j$ and exits $U_j$ almost surely.

A family of such measures indexed by different choices of ( $U_1, U_2, \ldots, U_n$ ) is called compatible if for all $U_j \subset U_j^{\prime}$, the measure

$$
\mathbb{P}_{\left(\Omega ; z_1, z_2, \ldots, z_n, q\right)}^{\left(U_1, U_2, \ldots, U_n\right)}
$$

is obtained by restricting the curves under

$$
\mathbb{P}_{\left(\Omega ; z_1, z_2, \ldots, z_n, q\right)}^{\left(U_1^{\prime}, U_2^{\prime}, \ldots, U_n^{\prime}\right)}
$$

to the portions of the curves that remain inside the subdomains $U_j$ before their first exit.
Definition 1.2 (Local multiple radial $\operatorname{SLE}(\kappa)$ ). The locally commuting $n$-radial $\operatorname{SLE}(\kappa)$ is a compatible family of measures

$$
\mathbb{P}_{\left(\Omega ; z_1, z_2, \ldots, z_n, q\right)}^{\left(U_1, U_2, \ldots, U_n\right)}
$$

on $n$-tuples of continuous, non-self-crossing curves $\left(\gamma^{(1)}, \gamma^{(2)}, \ldots, \gamma^{(n)}\right)$ for all simply connected domains $\Omega$ with marked points ( $z_1, z_2, \ldots, z_n, q$ ) and target sets ( $U_1, U_2, \ldots, U_n$ ). These measures satisfy the following conditions:
(i) Conformal invariance: If $\varphi: \Omega \rightarrow \Omega^{\prime}$ is a conformal map, then the pullback measure satisfies

$$
\varphi^* \mathbb{P}_{\left(\Omega^{\prime} ; \varphi\left(z_1\right), \varphi\left(z_2\right), \ldots, \varphi\left(z_n\right), \varphi(q)\right)}^{\left(\varphi\left(U_1\right), \varphi\left(U_2\right), \ldots, \varphi\left(U_n\right)\right)}=\mathbb{P}_{\left(\Omega ; z_1, z_2, \ldots, z_n, u\right)}^{\left(U_1, U_2, \ldots, U_n\right)} .
$$

It suffices to describe the measure when $\left(\Omega ; z_1, z_2, \ldots, z_n, q\right)=\left(\mathbb{D} ; z_1, z_2, \ldots, z_n, 0\right)$. The definition for arbitrary $\Omega$ with a marked interior point $q$ can then be extended by pulling back via a conformal equivalence $\varphi: \Omega \rightarrow \mathbb{D}$ mapping $q$ to 0 .
(ii) Domain Markov property: Let $\left(\gamma^{(1)}, \gamma^{(2)}, \ldots, \gamma^{(n)}\right) \sim \mathbb{P}_{\left(\mathbb{D} ; z_1, z_2, \ldots, z_n, q\right)}^{\left(U_1, U_2, \ldots, U_n\right)}$, and parametrize $\gamma^{(j)}$ by their own capacity in $\mathbb{D}$. For stopping times $\mathbf{t}=\left(t_1, t_2, \ldots, t_n\right)$, define

$$
\tilde{U}_j=U_j \backslash \gamma_{\left[0, t_j\right]}^{(j)}, \quad \tilde{\gamma}^{(j)}=\gamma^{(j)} \backslash \gamma_{\left[0, t_j\right]}^{(j)}, \quad \tilde{\Omega}=\mathbb{D} \backslash \bigcup_{j=1}^n \gamma_{\left[0, t_j\right]}^{(j)}
$$

Then, conditionally on the initial segments $\bigcup_{j=1}^n \gamma_{\left[0, t_j\right]}^{(j)}$, we have

$$
\left(\tilde{\gamma}^{(1)}, \tilde{\gamma}^{(2)}, \ldots, \tilde{\gamma}^{(n)}\right) \sim \mathbb{P}_{\left(\tilde{\Omega} ; \gamma_{t_1}^{(1)}, \gamma_{t_2}^{(2)}, \ldots, \gamma_{t_n}^{(n)}, q\right)}^{\left(\tilde{U}_1, \tilde{U}_2, \ldots, \tilde{U}_n\right)}
$$

(iii) Absolute Continuity with respect to independent SLE $(\kappa):$ Let $\left(\gamma^{(1)}, \gamma^{(2)}, \ldots, \gamma^{(n)}\right) \sim \mathbb{P}_{\left(\mathbb{D} ; z_1, z_2, \ldots, z_n, 0\right)}^{\left(U_1, U_2, \ldots, U_n\right)}$. Let $z_j(t)=e^{i \theta_j(t)}$, the capacity-parametrized Loewner driving function $t \mapsto \theta_j(t)$ for $\gamma^{(j)}$ satisfies

$$
\begin{aligned}
\mathrm{d} \theta_j(t) & =\sqrt{\kappa} \mathrm{d} B_j(t)+b_j(\boldsymbol{\theta}(t)) \mathrm{d} t \\
\mathrm{~d} \theta_k(t) & =\cot \left(\frac{\theta_k(t)-\theta_j(t)}{2}\right) \mathrm{d} t, \quad k \neq j
\end{aligned}
$$

where $B_j(t)$ are independent standard Brownian motions, and $b_j(\boldsymbol{\theta})$ are $C^2$ functions on the chamber

$$
\mathfrak{X}^n=\left\{\left(\theta_1, \theta_2, \ldots, \theta_n\right) \in \mathbb{R}^n \mid \theta_1<\theta_2<\cdots<\theta_n<\theta_1+2 \pi\right\} .
$$
Theorem 1.3. For a local multiple radial $S L E(\kappa)$ system in the unit disk $\mathbb{D}$ with boundary points $z_1=e^{i \theta_1}, z_2=e^{i \theta_2}, \ldots, z_n=e^{i \theta_n}$ and a marked point at $q=0$, there exists a positive partition function $\psi(\boldsymbol{\theta})$ such that the drift term $b_j$ in equation (1.2) satisfies

$$
b_j=\kappa \frac{\partial_j \psi}{\psi}, \quad j=1,2, \ldots, n
$$

Moreover, $\psi(\boldsymbol{\theta})$ satisfies the null vector equation

$$
\frac{\kappa}{2} \partial_{i i} \psi+\sum_{j \neq i} \cot \left(\frac{\theta_j-\theta_i}{2}\right) \partial_i \psi+\left(1-\frac{6}{\kappa}\right) \sum_{j \neq i} \frac{1}{4 \sin ^2\left(\frac{\theta_j-\theta_i}{2}\right)} \psi-h \psi=0
$$

for some constant $h$.
Furthermore, there exists a real constant $\omega$ such that for all $\theta \in \mathbb{R}$,

$$
\psi\left(\theta_1+\theta, \ldots, \theta_n+\theta\right)=e^{-\omega \theta} \psi\left(\theta_1, \ldots, \theta_n\right)
$$

5

Conversely, given a positive partition function $\psi(\boldsymbol{\theta})$ satisfying both the null vector equation (1.2) and the rotation invariance condition (1.3), consider the multiple radial Loewner chain
the multiple radial SLE $(\kappa)$ Loewner chain as a normalized conformal map $g_t=g_t(z)$, with the initial condition $g_0(z)=z$ and the evolution given by the Loewner equation

$$
\partial_t g_t(z)=\sum_{j=1}^n \nu_j(t) g_t(z) \frac{z_j(t)+g_t(z)}{z_j(t)-g_t(z)}, \quad g_0(z)=z
$$

The Loewner chain for the covering map $h_t(z)=-i \log \left(g_t\left(e^{i z}\right)\right)$ is given by

$$
\partial_t h_t(z)=\sum_{j=1}^n \nu_j(t) \cot \left(\frac{h_t(z)-\theta_j(t)}{2}\right), \quad h_0(z)=z
$$

driven by the functions $\theta_j(t)$, for $j=1, \ldots, n$, evolving as

$$
d \theta_j=\nu_j(t) \frac{\partial_j \log \psi(\boldsymbol{\theta})}{\partial \theta_j} d t+\sum_{k \neq j} \nu_k(t) \cot \left(\frac{\theta_j-\theta_k}{2}\right) d t+\sqrt{\kappa} d B_t^j
$$

where $\boldsymbol{\nu}=\left(\nu_1, \ldots, \nu_n\right)$ is a set of capacity parametrizations, with each $\nu_i:[0, \infty) \rightarrow[0, \infty)$ assumed to be measurable. Here, $B^j(t)$ denotes a set of independent Brownian motion.

This process defines a local multiple radial SLE $(\kappa)$ system.

\end{solution}
\begin{rubricbox}
\begin{enumerate}
      \item Specifying the radial Loewner equation in the unit disk.  
      \item Define the local compatible measure.
      \item Derive the driving SDEs and commutation relations.
    \end{enumerate}
Each item is worth 1 point.
\end{rubricbox}

\begin{scorebox}
    2/3
\end{scorebox}

\begin{evalbox}
     The definition of compatible family of measure is incorrect.
\end{evalbox}

\subsection{Mathematical Modeling}
\subsubsection{Translating real-world problems into math}
\ProblemSection{\red{Problem 10a1 }}
\begin{problembox}[Proposed by Jiaxin Zhang]

Try to solve the following mathematical modeling problem (MCM 2024 Problem A):

While some animal species exist outside of the usual male or female sexes, most species are substantially either male or female. Although many species exhibit a 1:1 sex ratio at birth, other species deviate from an even sex ratio. This is called adaptive sex ratio variation. For example, the temperature of the nest incubating eggs of the American alligator influences the sex ratios at birth.

The role of lampreys is complex. In some lake habitats, they are seen as parasites with a significant impact on the ecosystem, whereas lampreys are also a food source in some regions of the world, such as Scandinavia, the Baltics, and for some Indigenous peoples of the Pacific Northwest in North America.

The sex ratio of sea lampreys can vary based on external circumstances. Sea lampreys become male or female depending on how quickly they grow during the larval stage. These larval growth rates are influenced by the availability of food. In environments where food availability is low, growth rates will be lower, and the percentage of males can reach approximately $78 \%$ of the population. In environments where food is more readily available, the percentage of males has been observed to be approximately $56 \%$ of the population.

We focus on the question of sex ratios and their dependence on local conditions, specifically for sea lampreys. Sea lampreys live in lake or sea habitats and migrate up rivers to spawn. The task is to examine the advantages and disadvantages of the ability for a species to alter its sex ratio depending on resource availability. Your team should develop and examine a model to provide insights into the resulting interactions in an ecosystem.
\begin{enumerate}
     \item What is the impact on the larger ecological system when the population of lampreys can alter its sex ratio?
    \item What are the advantages and disadvantages to the population of lampreys?
    \item What is the impact on the stability of the ecosystem given the changes in the sex ratios of lampreys?
\end{enumerate}
   \fail
\end{problembox}
\begin{GPT-5-thinking-response}
 Executive summary

I build a **sex-structured, resource-coupled lamprey–prey model** that links (i) larval food availability to (ii) larval growth and the **male fraction** at adulthood, which then (iii) sets the **effective number of breeders** and (iv) feeds back to prey fish via parasitic mortality. Two robust takeaways:

* **Adaptive sex ratio is a built-in negative feedback**: when larval food is scarce, the male fraction rises (toward \~0.78), which **reduces effective breeders** and throttles recruitment; when food is plentiful, the male fraction falls (toward \~0.56), **raising effective breeders** and allowing growth. This self-regulation tends to **dampen boom–bust cycles** compared to a fixed sex ratio.
* **But over-sensitivity is risky**: prolonged male bias lowers effective population size $N_e$ and can create **mate limitation** (an Allee effect) at low abundance, raising extinction risk.

Below is a compact model you can simulate and analyze. I end with answers to the three questions, sensitivity knobs, and management levers.

---

 Modeling assumptions (minimal, transparent)

1. Annual time-step, four state variables:

   * $P_t$: prey fish biomass (hosts) in lake/sea,
   * $R_t$: larval food resource (detritus/invertebrates) in streams,
   * $L_t$: lamprey larvae (ammocoetes),
   * $A_t$: total spawning adults returning to streams.

2. **Larval food → growth → sex ratio**: higher $R_t$ raises larval growth $g(R_t)$ and **decreases** the male fraction $p_m(R_t)$ (empirical range = 0.56–0.78).

3. Reproduction is **female-egg limited but mate-constrained**: effective breeders are well-approximated by a **harmonic-mean mating function**, maximizing at 1:1.

4. Adults parasitize prey fish; prey follow logistic growth.

5. Density dependence enters via resource depletion by larvae and parasitic mortality on prey.

---

 Variables, functions, parameters

* Male fraction (adaptive):

$$
p_m(R)=p_{\min}+(p_{\max}-p_{\min})\frac{h}{R+h},\qquad
p_{\min}=0.56,\; p_{\max}=0.78,\; h>0.
$$

So $p_m\to0.78$ as $R\to0$ and $p_m\to0.56$ as $R\to\infty$.

* Larval growth \& maturation:

$$
g(R)=g_{\max}\frac{R}{R+K_g},\qquad m_t=\mu\,g(R_t)\in(0,1)
$$

(matured fraction per year), with larval survival $s_L$.

* Effective breeders (harmonic-mean mating function): with total adults $A_t$, males $M_t=p_m(R_{t-\ell})A_t$ and females $F_t^{(f)}=(1-p_m)A_t$,

$$
B_t=\frac{2\,M_t\,F_t^{(f)}}{M_t+F_t^{(f)}}=2A_t\,p_m(R_{t-\ell})\big(1-p_m(R_{t-\ell})\big),
$$

which is maximized at $p_m=\tfrac12$.
Note the **penalty term** $2p(1-p)$ equals $0.343$ at $p=0.78$ and $0.493$ at $p=0.56$ (=30

* Adult fecundity and viability depend on host abundance $P_t$:

$$
f(P)=\frac{P}{P+K_f}\in(0,1).
$$

* Prey dynamics (logistic with parasitism):

$$
P_{t+1}=P_t+rP_t\!\left(1-\frac{P_t}{K}\right)-\alpha A_t\frac{P_t}{P_t+H}.
$$

* Resource dynamics (replenishment minus larval consumption):

$$
R_{t+1}=R_t+\rho(R_{\max}-R_t)-c_L L_t.
$$

* Lamprey demography:

$$
\begin{aligned}
L_{t+1}&= s_A\,b\,f(P_t)\,B_t\;\;-\;\;(m_t+d_L)L_t,\\
A_{t+1}&= s_L\,m_t\,L_t\;\;-\;\;d_A A_t.
\end{aligned}
$$

**Parameters:** $r,K$ (prey), $\alpha,H$ (parasitism), $\rho,R_{\max},c_L$ (resource), $g_{\max},K_g,\mu,s_L,s_A,b,d_L,d_A$ (life history), and $h$ (sex-ratio sensitivity).

---

 What the model shows (analysis \& intuition)

 1) Reproductive number and feedback

Linearizing near an interior equilibrium $(P^*,R^*,L^*,A^*)$, the **per-adult replacement factor** contains

$$
\mathcal{R}\;\propto\; s_A\,b\,f(P^*)\cdot \underbrace{2p^*(1-p^*)}_{\text{sex-ratio penalty}}\cdot s_L\,m(R^*).
$$

Because $p^*=p_m(R^*)$ **decreases with $R^*$** and $m(R^*)$ **increases with $R^*$**, the composite $\mathcal{R}$ rises with good conditions and falls with poor conditions. When **resources crash**, $p^*\uparrow$ (more males), so $2p^*(1-p^*)\downarrow$, which **drops $\mathcal{R}$** even faster than maturation alone would—**a stabilizing negative feedback** that prevents runaway overshoot on hosts.

 2) Stability (qualitative)

* With a **fixed** sex ratio (no feedback), classical predator–prey structure easily yields **sustained cycles**.
* With **adaptive sex ratio**, the extra negative feedback often **shrinks the cycle amplitude** or produces a **stable focus** over a broad parameter set (Routh–Hurwitz conditions improve because a strongly positive cross-partial is replaced by a negative one via $p_m'(R)<0$).
* If $h$ is **too small** (hypersensitive sex-ratio shift), long periods of low $R$ → extreme male bias → **mate limitation**: $2p(1-p)$ becomes small enough to create an **Allee threshold** for $A_t$.

 3) Ecosystem-level effects

* **On hosts (prey fish):** adaptive sex ratio lowers peak lamprey recruitment after resource dips, **relieving host pressure** and reducing the risk of prey collapse. Under chronic eutrophication (high $R$), the system drifts toward \~0.56 male fraction, **raising breeders** and potentially **increasing average lamprey pressure** on hosts.
* **On variance:** adaptive sex ratio generally **reduces variance** in both lamprey and host biomasses, improving predictability—except under very strong sensitivity where male bias sticks and $N_e$ collapses.

---

 Advantages vs. disadvantages for lampreys

**Advantages**

* **Buffering:** Recruitment naturally scales to food via sex ratio → fewer starving juveniles, less wasted investment.
* **Cycle damping:** Smaller booms reduce the severity of subsequent busts; lower extinction risk from resource whiplash.
* **Energy economics:** When food is poor, producing relatively more (energetically cheaper) males can be advantageous until conditions improve.

**Disadvantages**

* **Mate limitation \& Allee risk:** Prolonged male bias depresses $2p(1-p)$, reducing effective breeders and slowing recovery from lows.
* **Genetic concerns:** Lower effective population size $N_e=\frac{4N_mN_f}{N_m+N_f}$ under skewed sex ratios raises drift/inbreeding risk.
* **Management vulnerability:** If managers manipulate $R$ (or equivalent) to bias sex ratios, populations could be steered into chronic low-recruitment regimes.

---

 Scenario experiments to run (quick to implement)

1. **Fixed vs adaptive sex ratio:** Replace $p_m(R)$ by constants $\{0.56,0.67,0.78,0.5\}$ and compare equilibrium/oscillation amplitude and host minima.
2. **Sensitivity $h$:** Sweep $h$ to see when damping flips to Allee dynamics; record extinction frequency under stochastic $R_t$.
3. **Nutrient regimes:** Shift $R_{\max}$ (oligotrophic → eutrophic) to evaluate average male fraction, mean host biomass, and parasitism burden.
4. **Harvest/Control:** Remove a fraction of adults (or larvae) and compare outcomes under fixed vs. adaptive sex ratios.

**Key metrics:** average $P$, minimum $P$, CVs of $P$ and $A$; mean $p_m$; $N_e$; extinction probability over 100-year runs.

---

 Management / policy levers (derived from the model)

* **To protect hosts (e.g., fisheries):**
  Reduce larval food $R$ (habitat restoration that lowers autochthonous detritus; targeted larval control). This pushes $p_m\uparrow$ → **fewer effective breeders** and lower lamprey pressure.
* **To conserve lampreys (if desired):**
  Improve $R$ moderately, but **avoid extremes** that create host crashes or strong cycles.
* **Robustness:** Avoid a sex-ratio sensitivity that is too sharp (very small $h$); otherwise mate limitation risks rise under drought/low $R$.

---

 Direct answers to the three questions

1. **Impact on the larger ecological system**
   Adaptive sex ratio introduces **resource-tracking reproduction** in lampreys. It **buffers host fish** against over-parasitism after poor years (stabilizing) but can **raise average parasitism** in persistently nutrient-rich systems (destabilizing for hosts). Net effect: **reduced cycle amplitude** and **higher resilience** across typical conditions; possible **higher pressure** on hosts under chronic high $R$.

2. **Advantages and disadvantages to lampreys**
   **Pros:** automatic scaling of recruitment to conditions, lower variance and starvation, better long-term persistence.
   **Cons:** when food stays low, male bias lowers effective breeders and $N_e$, creating **Allee-type risks** and slowing recovery; higher genetic drift.

3. **Impact on ecosystem stability**
   Generally **stabilizing** via an extra **negative feedback loop** (low $R$ → male-skew → low $B_t$ → lower $A$ → host recovery → $R\uparrow$). Stability improves unless the sex-ratio response is **too strong**, which can introduce **bistability/thresholds** (mate limitation).

---

 Notes if you want to simulate

Use the discrete system above; set a baseline (e.g., $r=0.6, K=1$, $g_{\max}=0.7, K_g=0.2, \mu=0.9, s_L=0.6, d_A=0.5, d_L=0.3$, $b=2000, s_A$ small so the product is order-1 at equilibrium; $R_{\max}=1, \rho=0.6, c_L=10^{-5}, \alpha=0.3, H=0.2, h=0.2$).
Track $2p(1-p)$ as the “mating penalty” and compare **adaptive vs. fixed** $p_m$.

If you want, I can spin up a quick Python notebook to run these scenarios and plot **cycle amplitudes, host minima, and extinction probabilities**.

\end{GPT-5-thinking-response}
\begin{solution}
\begin{itemize}
    \item For the first question, we can construct a sex‐ratio model that accounts for how resource availability affects the growth rates of female and male sea lampreys, then couple it to an ecosystem‐dynamics model to study how shifts in the sex ratio impact the overall ecosystem.
    \item For the second question, we would analyze the survival advantages and disadvantages of females versus males under varying sex ratios—this likely requires building a survival‐rate model which, together with the resource‐utilization model, examines each sex’s strategy under different resource conditions.
    \item 

For the third question, we must develop an ecosystem‐stability model that incorporates the effects of sex‐ratio changes on food‐chain structure and biodiversity, and then use numerical simulations to explore how different sex ratios influence the system’s stability.
\end{itemize}

\end{solution}
\begin{rubricbox}
    Got 1 point if Problem 1’s sex‐ratio model and its coupling to the lamprey life‐stage submodel are correctly specified.\\
    Got 1 point if Problem 2’s survival‐rate model and resource‐feedback mechanism are correctly formulated.\\
    Got 1 point if Problem 3’s ecosystem‐stability model and numerical simulation setup are correctly described.
\end{rubricbox}

\begin{scorebox}
    2/3
\end{scorebox}

\begin{evalbox}
\begin{enumerate}
  \item \textbf{Sex-ratio--resource--ecosystem coupled model.}  
  A resource-driven framework has been established where maturation rates are determined by resource availability and translated into sex-ratio outcomes through a threshold mechanism.  

  \item \textbf{Female vs.\ male survival under different sex ratios.}  
  Population-level trade-offs, such as buffering effects versus mate limitation and Allee risks, have been discussed.  
  \emph{Limitation:} An explicit sex-specific survival model has not yet been constructed, leaving unaddressed how differential survival between females and males alters these trade-offs.

  \item \textbf{Ecosystem stability and biodiversity.}  
  Local stability intuition and numerical scenarios have been provided, focusing on the host fish--lamprey--resource system.  
  \emph{Limitation:} The trophic scope remains narrow, excluding broader food-web and biodiversity structures, and therefore does not fully capture ecosystem-level stability and feedback mechanisms.
\end{enumerate}
\end{evalbox}

\subsection{Generalization}
\subsubsection{Identifying patterns from examples}
\ProblemSection{\red{Problem 11a1 }}
\begin{problembox}[Proposed by Jiaxin Zhang]
Find the smallest positive integer \( k \) such that for every \( k \)-element subset \( A \) of the set
\[
S = \{1, 2, \dots, 2012\},
\]
there exist three \textbf{distinct} elements \( a, b, c \in S \) such that all three sums
\[
a + b,\quad b + c,\quad c + a
\]
are contained in \( A \). Try small cases to find a pattern then prove your conjecture.

\fail
\end{problembox} 
\begin{GPT-5-thinking-response}
The smallest such \(k\) is \(\boxed{31}\).

\begin{proof}
\textbf{Step 1: Reformulation via same-parity strict triangles.}
Suppose \(x,y,z\in A\) are \emph{distinct} and have the same parity, and satisfy the strict triangle inequalities
\[
x<y+z,\qquad y<z+x,\qquad z<x+y.
\]
Then
\[
a=\frac{y+z-x}{2},\qquad b=\frac{z+x-y}{2},\qquad c=\frac{x+y-z}{2}
\]
are positive integers (same-parity ensures integrality; the strict triangle inequalities ensure positivity), and moreover
\(a+b=x,\ b+c=y,\ c+a=z\).
Since \(1\le x,y,z\le 2012\), we also have \(1\le a,b,c\le 2012\), so \(a,b,c\in S\) and are distinct (because \(x,y,z\) are distinct).
Hence the problem is equivalent to forcing \(A\) to contain three distinct, same-parity elements forming a strict triangle.

Conversely, if \(A\) contains no such triple in a given parity class, then within that parity the largest of any three must be at least the sum of the other two (for odds, equality cannot hold because an odd cannot equal a sum of two odds).

\medskip
\textbf{Step 2: Growth constraints inside one parity.}
Write the even numbers in \(A\) (if any) as a strictly increasing sequence \(e_1<e_2<\cdots\).
Avoiding strict triangles among evens forces
\begin{equation}\label{eq:even}
e_{i+2}\ \ge\ e_{i+1}+e_i\qquad\text{for all }i.
\end{equation}
Similarly, for the odd numbers \(o_1<o_2<\cdots\) we get
\begin{equation}\label{eq:odd}
o_{i+2}\ \ge\ o_{i+1}+o_i+1\qquad\text{for all }i,
\end{equation}
because \(o_{i+1}+o_i\) is even, so equality with an odd \(o_{i+2}\) is impossible.

The longest possible triangle-free sequence (in each parity) up to \(2012\) is obtained greedily by starting as small as possible and always taking the minimal next term permitted by \eqref{eq:even} or \eqref{eq:odd}.

\smallskip
\emph{Evens (start at \(2,4\) and use \eqref{eq:even}):}
\[
2,\,4,\,6,\,10,\,16,\,26,\,42,\,68,\,110,\,178,\,288,\,466,\,754,\,1220,\,1974,\,\boxed{3194}.
\]
Thus at most \(15\) even terms can lie in \(\{1,\dots,2012\}\).

\smallskip
\emph{Odds (start at \(1,3\) and use \eqref{eq:odd}):}
\[
1,\,3,\,5,\,9,\,15,\,25,\,41,\,67,\,109,\,177,\,287,\,465,\,753,\,1219,\,1973,\,\boxed{3193}.
\]
Thus at most \(15\) odd terms can lie in \(\{1,\dots,2012\}\).

Therefore, any subset \(A\subseteq S\) with no same-parity strict triangle satisfies
\[
|A|=|A\cap \text{even}|+|A\cap \text{odd}|\le 15+15=30.
\]

\medskip
\textbf{Step 3: Upper bound for \(k\).}
If \(|A|\ge 31\), then by the pigeonhole principle \(A\) contains at least \(16\) numbers of the same parity. By the bounds above, that parity class must contain a strict triangle, which (by Step~1) yields distinct \(a,b,c\in S\) with \(a+b,b+c,c+a\in A\).
Hence every \(31\)-element subset works, so \(k\le 31\).

\medskip
\textbf{Step 4: Sharpness (a \(30\)-element counterexample).}
Let
\[
A_0=\{2,4,6,10,16,26,42,68,110,178,288,466,754,1220,1974\}
\]
\[
\hspace{1.9cm}\cup\ \{1,3,5,9,15,25,41,67,109,177,287,465,753,1219,1973\}.
\]
Then \(|A_0|=30\), and within each parity the largest of any triple is \(\ge\) the sum of the other two; hence there is no same-parity strict triangle and, by Step~1, no triple \(a,b,c\in S\) with all of \(a+b,b+c,c+a\) in \(A_0\).
Thus \(k\ge 31\).

Combining both directions gives \(k=31\).
\end{proof}

\end{GPT-5-thinking-response}

\begin{solution}
Let \( A = \{1, 2, 3, 5, 7, \dots, 2011\} \). Then \( |A| = 1007 \), and the set \( A \) contains no three elements satisfying condition (1). Therefore, \( k \geq 1008 \).

We now prove that every 1008-element subset must contain three elements satisfying condition (1).

We prove a more general result:

\textbf{For any integer \( n \geq 4 \), every \( (n+2) \)-element subset of the set \( \{1, 2, \dots, 2n\} \) contains three elements satisfying condition (1).}

We proceed by induction on \( n \).

\textbf{Base case: \( n = 4 \).}  
Let \( A \) be a 6-element subset of \( \{1, 2, \dots, 8\} \). Then the intersection \( A \cap \{3, 4, \dots, 8\} \) contains at least 4 elements.

\begin{itemize}
  \item If this intersection contains three even numbers, then \( 4, 6, 8 \in A \), which satisfy condition (1).
  \item If it contains exactly two even numbers, then it must contain at least two odd numbers. Take any two of them. Among \( 4, 6, 8 \), at least two even numbers can form a triple with the odd numbers that satisfies condition (1). Since there are at least two even numbers, such a triple exists.
  \item If it contains only one even number, then it must contain all three odd numbers. The single even number together with \( 5 \) and \( 7 \) form a triple satisfying condition (1).
\end{itemize}

Hence, the statement holds for \( n = 4 \).

\textbf{Inductive step:}  
Assume the statement holds for some \( n \geq 4 \). Consider the case of \( n + 1 \).  
Let \( A \) be an \( (n + 3) \)-element subset of \( \{1, 2, \dots, 2n + 2\} \).

If
\[
|A \cap \{1, 2, \dots, 2n\}| \geq n + 2,
\]
then by the induction hypothesis, the conclusion holds.

So we only need to consider the case:
\[
|A \cap \{1, 2, \dots, 2n\}| = n + 1 \quad \text{and} \quad 2n + 1,\, 2n + 2 \in A.
\]

Now, if there exists an odd number \( x > 1 \) from \( \{1, 2, \dots, 2n\} \) in \( A \), then the triple \( x, 2n + 1, 2n + 2 \) satisfies condition (1).

If all odd numbers greater than 1 are excluded from \( A \), then:
\[
A \subset \{1, 2, 4, 6, \dots, 2n, 2n + 1, 2n + 2\},
\]
and this set contains exactly \( n + 3 \) elements. Therefore,
\[
A = \{1, 2, 4, 6, \dots, 2n, 2n + 1, 2n + 2\}.
\]
In this case, \( 4, 6, 8 \in A \), and they satisfy condition (1).

\medskip
\noindent\textbf{Conclusion:}  
The smallest possible value of \( k \) is
\[
\boxed{1008}.
\]    
\end{solution}
\begin{rubricbox}
1.~Correct construction of the set \(A\) — 1 point.  

2.~Correct proof of the lower bound — 1 point.  
\end{rubricbox}

\begin{scorebox}
0/2
\end{scorebox}

\begin{evalbox}
Both the lower bound and the proof are incorrect.
\end{evalbox}

\subsubsection{Developing unifying theories}
\ProblemSection{\red{Problem 11b1 }}
\begin{problembox}[Proposed by Qiuyu Ren]
Consider the following three ``twist'' operations:
\begin{enumerate}
\item [(1)] If $\Sigma$ is an oriented surface and $\gamma\subset\Sigma$ is an embedded curve, one can perform a Dehn twist along $\gamma$ to create a mapping class of $\Sigma$.
\item [(2)] If $M$ is an oriented $3$-manifold and $T\subset M$ is an embedded torus, $\gamma\subset T$ is an essential simple closed curve, one can perform a Dehn twist along $T$ in the $\gamma$-direction to create a mapping class of $M$. 
\item [(3)] If $X$ is a smooth oriented $4$-manifold and $S\subset X$ is smoothly embedded $2$-sphere with self-intersection number $-2$, one can construct a mapping class of $X$ obtained by the identity outside a tubular neighborhood $\nu S$ of $S$, and the time $\pi$ map of the geodesic flow on $DTS^2\cong\nu S$ inside $\nu S$, where $DTS^2$ denotes the unit disk bundle of the tangent bundle of the unit sphere $S^2$.
\end{enumerate}
What is essentially going on that allows one to perform such twist operations? Please:
\begin{enumerate}
\item [i.] Formulate a generalization of the twist operation that encompasses all three operations above as special cases.
\item [ii.] Formulate a generalization of the first two twist operations that is not a generalization of the third operation.
\end{enumerate}

\fail
\end{problembox}

\begin{GPT-5-thinking-response}
The common feature behind all three “twist” operations is the existence of a submanifold with a well–understood neighborhood, so that there is a canonical compactly supported diffeomorphism of the fiber (often a Dehn twist–type map) which can be applied fiberwise and extended by the identity outside.  

(i) One generalization that encompasses all three can be stated as follows.  
Let \(M\) be a smooth oriented manifold, and let \(E\subset M\) be an embedded sub-bundle \(\pi:E\to B\) with compact fiber \(F\) and trivialized collar of \(\partial E\). Suppose \(\tau_F\in \mathrm{Diff}_c(F,\partial F)\) is a compactly supported diffeomorphism of \(F\) equal to the identity near \(\partial F\). Using a bump function in the normal direction, apply \(\tau_F\) fiberwise (constant along \(B\)), tapering to the identity near \(\partial E\), and extend by the identity outside \(E\). The resulting mapping class \(\tau_{(E,\pi,\tau_F)}\in \pi_0(\mathrm{Diff}_c(M))\) depends only on the data up to isotopy.  

Specializations:  
- If \(M=\Sigma^2\), \(E=\nu\gamma\cong S^1\times I\) viewed as a bundle over \(\gamma\) with fiber an interval, and \(\tau_F\) the standard annular Dehn twist, one recovers the classical surface Dehn twist.  
- If \(M^3\) contains a torus \(T\) with essential curve \(\gamma\), \(E=\nu T\cong T^2\times I\) with fiber \(T^2\) and \(\tau_F\) the torus Dehn twist along \(\gamma\), one recovers the torus twist.  
- If \(X^4\) contains a smoothly embedded \((-2)\)–sphere \(S\), \(E\) is the unit disk bundle of \(TS^2\) with fiber \(D^2\) and \(\tau_F\) the time–\(\pi\) geodesic flow map, one recovers the 4–dimensional sphere twist.  

(ii) A generalization of (1) and (2) that is not a generalization of (3) is the cut–and–reglue construction along a two–sided hypersurface.  
Let \(M^n\) be an oriented manifold and \(H^{n-1}\subset M\) a two–sided embedded hypersurface with a product neighborhood \(\nu H\cong H\times[-1,1]\). Given a mapping class \([\phi]\in \mathrm{MCG}(H,\partial H)\) supported away from \(\partial H\), define a self–homeomorphism of \(M\) by cutting along \(H\) and regluing by \(\phi\). Equivalently, realize it by a diffeomorphism supported in \(\nu H\) that restricts to \(\phi\) on \(H\times\{0\}\) and is the identity near \(\partial(\nu H)\).  

In the case \(n=2\) and \(H\) an annulus around a simple closed curve, one recovers the surface Dehn twist. In the case \(n=3\) and \(H\) a torus, one recovers the torus twist. This framework does not apply to the \((-2)\)–sphere twist, which is supported near a codimension–two submanifold and uses the geodesic–flow model rather than cutting along a hypersurface.  
\end{GPT-5-thinking-response}

\begin{solution}
\begin{enumerate}
\item [i.] Let $M$ be a smooth oriented manifold and $N\subset M$ be a closed embedded oriented submanifold. Given a (say smooth) mapping class $\phi$ of the tubular neighborhood $\nu N$ of $N$ rel boundary, namely a class in $\pi_0(\mathrm{Diff}_\partial(\nu N))$, one can ``implant'' it into $M$ to get a mapping class of $M$ by extending $\phi$ by identity outside $\nu N$.
\item [ii.] In the case when $N\subset M$ is of codimension $1$ and is two-sided, we know $\nu N\cong I\times N$. For any $\alpha\in\pi_1(\mathrm{Diff}(N),id_N)$, one can ``implant'' it near $N$ by picking a based loop in $\mathrm{Diff}$ representing $\alpha$, put it into $I\times N\cong\nu N$ and extend by identity outside $\nu N$.
\end{enumerate}
\end{solution}

\begin{rubricbox}
1 point for a reasonable simultaneous generalization (part i).\\
1 point for a reasonable generalization to (1)(2) that is not a generalization of (3), for example by exploiting the product structure of the tubular neighborhood (part (ii)).
\end{rubricbox}

\begin{scorebox}
0/2
\end{scorebox}

\begin{evalbox}
It is not even clear that the claimed generalization in (i) is well-defined if the fiber bundle is not trivialized. Similarly, the claimed generalization in (ii) does not make sense.
\end{evalbox}

\ProblemSection{\red{Problem 11b2 }}
\begin{problembox}[Proposed by Tahsin Saffat]
    A clock is built based on the following repeating sequence $1,2,3,4,3,2,1,2,3,4,3,2,\dots$. At the first
hour it strikes 1 time. At the second hour it strikes 2 times. At the third hour it strikes 3 times. At
the fourth hour it strikes 4 times. At the fifth hour it strikes 3+2 times. At the sixth hour it strikes
1+2+3 times. Etc. Explain what is going on, develop the theory that explains why it works, and
then suggest some generalizations.

\fail
\end{problembox}

\begin{GPT-5-thinking-response}
    Here’s a clean way to see what that clock is doing and why it never gets “stuck.”

$\#\#$ What the clock is actually doing

Write the infinite periodic word

$$
p=\overline{1,2,3,4,3,2}
$$

and let $a_1,a_2,a_3,\dots$ be the entries of $p$. Define the running sums

$$
S_n=\sum_{i=1}^{n} a_i \qquad(n\ge 1).
$$

Now **cut this infinite word into consecutive blocks** so that the $k$-th block has total $k$. In other words, choose cut points

$$
0=n_0<n_1<n_2<\dots
$$

with

$$
\sum_{i=n_{k-1}+1}^{n_k} a_i=k.
$$

The $k$-th hour simply plays the entries in the $k$-th block. That’s why the first hours look like

* 1: $[1]$
* 2: $[2]$
* 3: $[3]$
* 4: $[4]$
* 5: $[3,2]$
* 6: $[1,2,3]$
* 7: $[4,3]$
* 8: $[2,1,2,3]$
* 9: $[4,3,2]$
* 10: $[1,2,3,4]$
* 11: $[3,2,1,2,3]$
* 12: $[4,3,2,1,2]$,

etc.

Equivalently,

$$
S_{n_k}=\frac{k(k+1)}{2}\quad\text{(the \(k\)-th triangular number).}
$$

$\#\#$ Why it works (the invariant)

Two simple facts about the period $1,2,3,4,3,2$:

1. **Any 6 consecutive terms sum to 15.**
   (That’s the period sum.)

2. **From any starting position, the 6 prefix-sums you can make with 1–6 terms are one of the following six sets**

$$
\begin{aligned}
&\{1,3,6,10,13,15\}, &&\{2,5,9,12,14,15\},\\
&\{3,7,10,12,13,15\}, &&\{4,7,9,10,12,15\},\\
&\{3,5,6,8,11,15\}, &&\{2,3,5,8,12,15\},
\end{aligned}
$$

depending on where in the period you start. **Across the six starts, their union is exactly $\{1,2,\dots,15\}$.**

Now argue by induction on the hour $k$.

* Suppose after hour $k-1$ the total strikes equal $T_{k-1}=\frac{(k-1)k}{2}$. Let the next starting position (within the 6-step period) be $s$.
* We need to add **exactly $k$** more strikes. Write $k=15q+r$ with $r\in\{1,\dots,15\}$.
* Take $q$ whole periods (that adds $15q$ and returns to the same start $s$), then take the **shortest prefix** (1–6 terms) from start $s$ whose sum is $r$. This exists because the “menu” of 6 prefix-sums at start $s$ covers some six distinct residues mod 15, and the process (checked once in the first few hours) lines up so that the needed $r=k\bmod 15$ is always in the current menu.
* Therefore we can always realize “add $k$” with at most $6$ more items (after some number of full periods), so the $k$-th block exists. Consequently $S_{n_k}=T_k$.

That’s all the clock is: it walks along the fixed 6-term wave and, at each hour, takes just enough consecutive entries to add exactly $k$. The two-period facts above ensure this is always possible.

*(A nice side remark: the specific partial sums you see—1,3,6,10,13,15—are triangular numbers up to $T_4$, then $T_5-2$, then $T_5$.)*

$\#\#$ Why this period is special, and how to generalize

The pattern you used is a **“sawtooth” of height $M=4$**:

$$
1,2,\dots,M,\dots,2 \quad \text{(period length }2M-2=6\text{)}.
$$

Its period-sum is

$$
(1+2+\cdots+M) + (M-1+\cdots+2) \;=\; M^2-1.
$$

For $M=4$ this equals $15$, which is the triangular number $T_5$.

This triangularity is exactly what makes the modular bookkeeping so clean. More generally, the same hourly trick works for the sawtooth of height $M$ **whenever the period-sum $M^2-1$ itself is triangular**, i.e.

$$
M^2-1=\frac{t(t+1)}{2}\quad\text{for some integer }t.
$$

Equivalently,

$$
8M^2-7=(2t+1)^2,
$$

so $(2t+1,M)$ is an integer solution of the Pell-type equation

$$
x^2-8y^2=-7.
$$

This has infinitely many solutions, giving an infinite family of valid heights $M$. The first few are

$$
M=2,\; 4,\; 11,\; 64,\; 373,\;\dots
$$

For each such $M$, use the period

$$
1,2,\dots,M,\dots,2
$$

(length $2M-2$, sum $M^2-1=T_t$). Then the exact same induction goes through with “mod $M^2-1$” replacing “mod 15.”

$\#\#\#$ Other directions to play with

* **Different target sequences.** We hit triangular numbers because the needed increment at hour $k$ is exactly $k$. If you want a clock whose hour-$k$ block adds some other prescribed increment $d_k$ (e.g., $k+1$, or bounded sequences), you can ask for a short periodic word whose 1–period prefix sums, taken from the current start, always offer the residue class of $d_k$ modulo the period-sum. This is a tidy finite-state/automaton design problem.

* **Different “waves.”** You can try other symmetric waves (e.g., $1,2,3,4,5,4,3,2$) and check whether their period-sum is triangular. If it is, you’ll get the same phenomenon. If it isn’t, the method eventually fails (you’ll land at a start whose 1–period prefix menu does not contain the needed residue).

* **Minute/quarter chimes.** If you want blocks to line up with a natural cycle (like 60 minutes), pick a sawtooth whose period-sum is $60$ (or divides it and then combine periods), and run the same residue argument.

---

In short: the clock slices a fixed periodic sequence into blocks whose sums are $1,2,3,\dots$. It works because the 6-term wave $1,2,3,4,3,2$ has period-sum $15=T_5$ and, from any start, the six prefix-sums cover exactly one representative of each residue class you will need as you move from $T_{k-1}$ to $T_k$. The same idea works for the sawtooth of any height $M$ with $M^2-1$ triangular—yielding a whole family of “triangular clocks.”

\end{GPT-5-thinking-response}

\begin{solution}

Adapted from \href{https://www.math.utoronto.ca/barbeau/vig13.pdf}{https://www.math.utoronto.ca/barbeau/vig13.pdf}

The repeating sequence is cut into infinitely many blocks such that the $k$th block has sum $k$. We describe all such possible sequences. Suppose we have a sequence whose basic repeating unit is $a_1,\dots,a_n$ with sum $S:=\sum_{i=1}^n a_i$. Then the set $a_1,a_1+a_2,\dots,a_1+a_2+\dots+a_n$   is exactly the set of unique residue classes modulo $N$ of the sequence of triangular numbers

$$ 1,1+2,1+2+3,\dots$$

For example when $N=15$, the set of residues modulo $N$ of triangular numbers is exactly $0,1,3,6,10,13,15$. Taking differences we obtain $n=7$ and the sequence $1,2,3,4,3,2$.

$$-|--|---|----|---|--|-|$$

If $N$ is even, then the sequence you obtain is the two copies of the sequence for $N/2$. For every odd $N$ we obtain a primitive sequence. This is summarized in the table below for small $N$.

\begin{table}[H]
\centering
\label{tab:prague-cycles}
\begin{tabularx}{\textwidth}{r X r}
\toprule
\textbf{Cycle sum} & \textbf{Cycle} & \textbf{Cycle length} \\
\midrule
1  & 1 & 1 \\
3  & 1, 2 & 2 \\
5  & 1, 2, 2 & 3 \\
7  & 1, 2, 3, 1 & 4 \\
9  & 1, 2, 3, 3 & 4 \\
11 & 1, 2, 1, 2, 4, 1 & 6 \\
13 & 1, 1, 1, 3, 2, 2, 3 & 7 \\
15 & 1, 2, 3, 4, 3, 2 & 6 \\
17 & 1, 1, 1, 1, 2, 4, 1, 4, 2 & 9 \\
19 & 1, 1, 1, 3, 1, 2, 1, 5, 2, 2 & 10 \\
21 & 1, 2, 3, 1, 3, 3, 2, 6 & 8 \\
31 & 1, 2, 1, 1, 1, 4, 2, 2, 1, 1, 5, 3, 3, 1, 1, 2 & 16 \\
\bottomrule
\end{tabularx}
\end{table}

The only values of $N$ which give ``mountain" sequences are $N=1,3,15$.

\end{solution}

\begin{rubricbox}
    1. Correctly formulate what makes a sequence a clock sequence.
    
    2. Correctly explain how clock sequences are generated.
\end{rubricbox}

\begin{scorebox}
    1/2
\end{scorebox}

\begin{evalbox}
    Solution correctly identifies the pattern in the given sequence, but does not give the correct general theory of clock sequence. The solution also does not correctly identify all ``sawtooth" or ``mountain" clock sequences.
\end{evalbox}

\ProblemSection{\green{Problem 11b3}}
\begin{problembox}[Proposed by Tahsin Saffat]
    Explain the connection between the following two observations. Then propose a unifying theory that predicts decompositions for an arbitrary group.
\begin{enumerate}
    \item Let $\mathbb{G}_m$ be the multiplicative group scheme. It has exactly $q-1$ $\mathbb{F}_q$ points.
    \item Let $B\mathbb{G}_m=\mathrm{pt}/\mathbb{G}_m$ be the classifying stack of $\mathbb{G}_m$ bundles. Over the complex numbers it is represented by the homotopy type $\mathbb{CP}^{\infty}$ which admits a decomposition $\mathbb{C}^0+\mathbb{C}^1+\mathbb{C}^2+\dots$. If we instead ``counted $\mathbb{F}_q$ points", we would get the non-convergent series $1+q+q^2+\dots=1/(1-q)$.

\end{enumerate}
\pass
\end{problembox}

\begin{GPT-5-thinking-response}
    Great question. The two bullets are the same phenomenon seen through two “realizations” of one object:

* Over $\mathbb C$, the classifying space $B\mathbb G_m\simeq \mathbb{CP}^\infty$ has one cell in each even real degree, so its Poincaré series is

  $$
  P_{B\mathbb G_m}(t)=1+t^2+t^4+\cdots=\frac{1}{1-t^2}.
  $$
* Over $\mathbb F_q$, if you count *stacky* points (groupoid cardinality), then

  $$
  \#\,B\mathbb G_m(\mathbb F_q)=\frac{1}{\#\,\mathbb G_m(\mathbb F_q)}=\frac{1}{q-1}.
  $$

These are two specializations of the same *motivic* class

$$
[\,B\mathbb G_m\,]=\frac{1}{\mathbb L-1}\qquad(\mathbb L=[\mathbb A^1])
$$

in the Grothendieck ring of stacks. Sending $\mathbb L\mapsto q$ (finite-field point count) gives $1/(q-1)$. Sending $\mathbb L\mapsto t^2$ (topological/Hodge realization) gives $1/(1-t^2)$, i.e. one cell in every even degree. The “dividing by automorphisms” in the stack count exactly inverts the ordinary count $\#\mathbb G_m(\mathbb F_q)=q-1$.

---

\# A unifying theory (works for arbitrary linear algebraic $G$)

Work in the Grothendieck ring of stacks $K_0(\mathrm{Stck}_k)$ with $\mathbb L=[\mathbb A^1]$. For very broad classes of groups (certainly for all connected split reductive $G$, and more generally for affine $G$ in this setting) one has

$$
[\,BG\,]=\frac{1}{[G]}\in K_0(\mathrm{Stck}_k),
$$

and for connected split reductive $G$ of rank $r$ with fundamental degrees $d_1,\dots,d_r$ (i.e. $d_i=m_i+1$ where $m_i$ are the exponents of the Weyl group),

$$
[\,G\,]=\mathbb L^{\dim G}\prod_{i=1}^r\bigl(1-\mathbb L^{-d_i}\bigr)
\quad\Longrightarrow\quad
[\,BG\,]=\mathbb L^{-\dim G}\prod_{i=1}^r\bigl(1-\mathbb L^{-d_i}\bigr)^{-1}. (*)
$$

This single motivic identity predicts *all* the decompositions you’re after via its standard realizations:

* **Arithmetic (finite-field) realization.** Send $\mathbb L\mapsto q$. Then

  $$
  \#\,BG(\mathbb F_q)\;=\;q^{-\dim G}\prod_{i=1}^r\bigl(1-q^{-d_i}\bigr)^{-1}
  \;=\;\frac{1}{\#\,G(\mathbb F_q)}.
  $$

  (This recovers the classical order formula for split reductive groups and its reciprocal for the stack.)

* **Topological (rational cohomology) realization.** Send $\mathbb L\mapsto t^2$. Borel’s theorem gives

  $$
  H^\ast(BG;\mathbb Q)\cong \mathbb Q[x_1,\dots,x_r],\quad \deg x_i=2d_i,
  $$

  hence the Poincaré series

  $$
  P_{BG}(t)=\prod_{i=1}^r\frac{1}{1-t^{2d_i}}
  $$

  which you can read as a “cellular decomposition”: for each $i$ there is a tower of even-dimensional cells in degrees $0,2d_i,4d_i,\dots$; multiplying the towers reflects taking monomials in the $x_i$.

* **Hodge/weight (mixed Hodge / $\ell$-adic) realization.** Send $\mathbb L\mapsto uv$ (or $q$ with Tate twists). One gets

  $$
  E(BG;u,v)=(uv)^{-\dim G}\prod_{i=1}^r\frac{1}{1-(uv)^{d_i}},
  $$

  matching Deligne’s weights and the Weil–Grothendieck trace formula for stacks.

Expanding $(*)$ as geometric series gives a *motivic cell decomposition*

$$
[\,BG\,]=\sum_{k_1,\dots,k_r\ge 0}\mathbb L^{-\bigl(\dim G+\sum_i k_i d_i\bigr)},
$$

i.e. “one virtual affine cell” in every degree $\sum_i k_i d_i$. Over $\mathbb C$ this matches the CW-structure on $BG$ rationally; over $\mathbb F_q$ it predicts the corresponding geometric series in $q^{-d_i}$, whose product is exactly $1/\#G(\mathbb F_q)$ after the global Tate factor $q^{-\dim G}$.

---

\#\# Quick examples

* $G=\mathbb G_m$: $r=1, d_1=1, \dim G=1$.
  $[BG]=\mathbb L^{-1}(1-\mathbb L^{-1})^{-1}=\frac{1}{\mathbb L-1}$.
  Topology: $1/(1-t^2)$. Arithmetic: $1/(q-1)$.

* $G=\mathrm{GL}_n$: $d_i=i$ for $i=1,\dots,n$, $\dim G=n^2$.

  $$
  [B\mathrm{GL}_n]=\mathbb L^{-n^2}\prod_{i=1}^n(1-\mathbb L^{-i})^{-1}
  \quad\leadsto\quad
  \#\,B\mathrm{GL}_n(\mathbb F_q)=\frac{1}{\prod_{i=0}^{n-1}(q^n-q^i)}.
  $$

  Topology: $P(t)=\prod_{i=1}^n(1-t^{2i})^{-1}$.

* $G=\mathrm{SL}_n$: $d_i=2,3,\dots,n$, $\dim G=n^2-1$.
  $[B\mathrm{SL}_n]=\mathbb L^{-(n^2-1)}\prod_{i=2}^n(1-\mathbb L^{-i})^{-1}$.

For disconnected $G$, a finite component group contributes a factor $1/|\pi_0(G)|$ in the arithmetic realization (and similarly in the motivic class).

---

\#\#\# Takeaway

Use the *motivic class of the classifying stack* $[BG]$ as the master object. It simultaneously:

* inverts the point-count of $G$ (stacky counting);
* encodes the topological cell structure of $BG$ (via Poincaré/Hodge series);
* and generalizes uniformly to any linear algebraic group through the fundamental degrees $d_i$.

Your two observations are simply the $G=\mathbb G_m$ case of this general dictionary.

\end{GPT-5-thinking-response}

\begin{solution}
    The correct prediction is that expanding $q^{\mathrm{dim}G}/\#G(\mathbb{F}_q)$ as a power series in $q^{-1}$ gives the cell decomposition of $BG$.
\end{solution}

\begin{rubricbox}
    1. The prediction is correct.
\end{rubricbox}

\begin{scorebox}
    1/1
\end{scorebox}

\begin{evalbox}
    The explanation is a bit clunky but the prediction is correct.
\end{evalbox}

\subsection{Creativity}
\subsubsection{Posing new problems and defining new concepts}

\ProblemSection{\green{Problem 12a1 }}
\begin{problembox}[Proposed by Jiaxin Zhang]
Define massive SLE and describe the new questions and properties it may raise.

\pass    
\end{problembox}
\begin{GPT-5-thinking-response}
 \textbf{Definition (one–curve, chordal, ``massive'' deformation).}
Fix a simply connected domain $D$ with boundary points $a,b$ and a (possibly spatially varying) mass profile $m:D\to[0,\infty)$. 
A \emph{massive SLE$_\kappa$} from $a$ to $b$ is the Loewner chain driven by
\[
\partial_t g_t(z)=\frac{2}{g_t(z)-W_t},\qquad 
dW_t=\sqrt{\kappa}\,dB_t+\kappa\,\partial_x\!\log Z_m\big(g_t(\mathbf z),W_t\big)\,dt,
\]
where $Z_m$ is a positive ``massive partition function'' (massive martingale) built from $m$ and the evolving domain $D_t=D\setminus K_t$ (e.g.\ via massive Green/Poisson objects), and $\mathbf z$ collects any fixed marked interior/boundary points used to define $Z_m$. 
Equivalently, massive SLE$_\kappa$ is obtained from ordinary SLE$_\kappa$ by a Girsanov tilt with the local martingale $M_t:=Z_m(g_t(\mathbf z),W_t)$.

\emph{Canonical examples.}
\begin{itemize}
\item[$\bullet$] \textbf{mSLE$_2$ (off–critical LERW).} Let $P_m^D(z;x)$ be the \emph{massive Poisson kernel}, i.e.\ the boundary density of the solution $u$ of $(\Delta-m^2)u=0$ with boundary mass at $x\in\partial D$. 
For a fixed interior $z$, the observable 
\(
M_t \propto g_t'(z)^{1/2}\,P_m^{D_t}(g_t(z);W_t)
\)
is a local martingale for SLE$_2$, and the corresponding tilt yields
\(
dW_t=\sqrt{2}\,dB_t+2\,\partial_x\log P_m^{D_t}(g_t(z);W_t)\,dt.
\)
This is the scaling limit of LERW with bulk killing rate (finite correlation length).
\item[$\bullet$] \textbf{mSLE$_4$ (massive GFF level lines).} Replacing the GFF by the \emph{massive} GFF (covariance $(-\Delta+m^2)^{-1}$) and tracing suitable level/flow lines gives a drift expressed through \emph{massive harmonic} boundary data, and thus a massive deformation of SLE$_4$.
\end{itemize}

\textbf{Conformal covariance.}
Mass destroys exact conformal invariance but preserves a natural covariance:
if $\phi:D\to D'$ is conformal, then
\[
Z_{m}^{D}(x,\mathbf z)=\prod_{p\in\{x\}\cup\mathbf z}\!|\phi'(p)|^{\Delta_p}\;
Z_{m\star\phi}^{D'}\big(\phi(x),\phi(\mathbf z)\big), 
\qquad m\star\phi:=|\phi'|^{-1}\,m\circ\phi^{-1},
\]
i.e.\ the mass rescales like an inverse length. 
Accordingly, massive SLEs in conformally related domains correspond after a deterministic time change and the mass push–forward $m\star\phi$.

\medskip
\textbf{What new properties and questions arise?}
\begin{itemize}
\item \emph{Crossover \& correlation length.} With $m>0$ there is a scale $\xi\sim m^{-1}$. 
At scales $\ll \xi$, the curve is locally like critical SLE$_\kappa$ (e.g.\ Hausdorff dimension $1+\kappa/8$ for $\kappa\le 8$), while macroscopic events (crossings, arm exponents) exhibit \emph{exponential} decay on scale $\xi$. 
Quantify these crossovers rigorously.

\item \emph{Restriction/locality are broken.} Classical restriction (exact for $\kappa=\tfrac{8}{3}$) and locality ($\kappa=6$) fail once $m>0$. 
What are the sharp ``massive'' replacements (e.g.\ quasi–restriction inequalities, defect martingales)?

\item \emph{Reversibility.} Ordinary SLE$_\kappa$ is reversible for $\kappa\le 4$. 
Does massive SLE$_\kappa$ remain reversible (after the natural change of $m$ when swapping endpoints)?
Identify the precise Radon–Nikodym derivative between forward/backward laws.

\item \emph{Absolute continuity windows.} For fixed $m$, massive SLE is absolutely continuous w.r.t.\ SLE up to hitting times of marked sets. 
Obtain quantitative density bounds in terms of massive Green/Poisson functionals.

\item \emph{Multiple curves \& massive BPZ.} For $n$ curves, the commutation framework persists with a \emph{massive partition function} $Z_m$.
What PDEs replace the BPZ/null–vector equations? 
Heuristically: modify the level–two equations by replacing harmonic objects with massive ones (e.g.\ $(\Delta-m^2)$ appearing in Ward identities), and track how $m$ enters commutators.

\item \emph{Near–critical scaling limits.} Interfaces of near–critical lattice models (FK–Ising, percolation with field, dimer with monomer density, etc.) are expected to converge to massive SLE$_\kappa$ with model–dependent $m$. 
Prove tightness/identification in specific models, beyond the LERW/killing case.

\item \emph{Couplings with massive fields.} Extend SLE/GFF couplings to the massive GFF (for $\kappa=4$) and to massive fermionic/parafermionic fields (for $\kappa=2,3$), and characterize the drift via massive harmonic extensions.

\item \emph{Loop–soup/cluster pictures.} Relate massive SLE to \emph{massive Brownian loop soups} and near–critical CLE; identify intensity/mass relations and percolative thresholds.

\item \emph{Exact formulas?} In $\mathbb{H}$, massive Poisson/Green kernels are explicit (modified Bessel functions). 
Can one obtain closed drifts and solvable cases (e.g. hitting distributions, left–passage probabilities) for mSLE$_\kappa$?
\end{itemize}

\medskip
\textbf{Summary.}
Massive SLE$_\kappa$ is an off–critical, scale–dependent deformation of SLE obtained by tilting SLE with martingales built from \emph{massive} (i.e.\ $(\Delta-m^2)$) objects. 
It preserves the Loewner/Markov mechanism and conformal \emph{covariance} (with $m$ rescaling like an inverse length), but breaks exact conformal invariance and classical restriction/locality, leading to new crossover phenomena and PDE structures.
   
\end{GPT-5-thinking-response}

\begin{solution}
\textbf{Massive SLE$_\kappa$.}  
Given a simply connected domain $D$ with boundary points $a,b$ and a mass profile $m:D\to[0,\infty)$, a \emph{massive SLE$_\kappa$} is defined as the Loewner chain driven by
\[
\partial_t g_t(z)=\frac{2}{g_t(z)-W_t},\qquad 
dW_t=\sqrt{\kappa}\,dB_t+\kappa\,\partial_x\log Z_m(g_t(\mathbf z),W_t)\,dt,
\]
where $Z_m$ is a positive massive partition function built from $m$ and the evolving domain. Equivalently, it arises from SLE$_\kappa$ via a Girsanov tilt with martingale $M_t=Z_m(g_t(\mathbf z),W_t)$.

\medskip
\textbf{Examples.}
\begin{itemize}
\item \emph{mSLE$_2$}: scaling limit of loop-erased random walk with bulk killing, with drift determined by the massive Poisson kernel.  
\item \emph{mSLE$_4$}: level/flow lines of the massive Gaussian free field, with drift given by massive harmonic boundary data.
\end{itemize}

\medskip
\textbf{Conformal covariance.}  
Mass breaks exact conformal invariance but preserves covariance: if $\phi:D\to D'$, then
\[
Z_m^D(x,\mathbf z)=\prod_{p}|\phi'(p)|^{\Delta_p}\, Z_{m\star \phi}^{D'}(\phi(x),\phi(\mathbf z)),
\quad m\star\phi:=|\phi'|^{-1}m\circ\phi^{-1}.
\]

\medskip
\textbf{New properties and open questions.}
\begin{itemize}
\item \emph{Restriction/locality:} classical properties fail; seek massive analogues.  
\item \emph{Reversibility:} open for $\kappa\le4$ under massive deformation.  
\item \emph{Near-critical scaling:} interfaces of near-critical models (FK–Ising, percolation, dimers) conjecturally converge to massive SLE.  

\end{itemize}

\medskip
\textbf{Summary.}  
Massive SLE$_\kappa$ is an off-critical deformation of SLE preserving the Loewner–Markov structure and conformal covariance (with $m$ scaling as inverse length). It introduces scale dependence, breaks exact conformal invariance, and raises new analytic and probabilistic challenges.
\end{solution}

\begin{rubricbox}
Got 1 point for providing one meaningful property and question.  

Full score: 3 points.  
\end{rubricbox}

\begin{scorebox}
3/3
\end{scorebox}

\begin{evalbox}

The questions mentioned in the answer are meaningful,  
such as: (1) restriction and locality, (2) reversibility, and (3) near-critical scaling limits.

\end{evalbox}

\subsubsection{Inventing new methods or approaches}
\ProblemSection{\green{Problem 12b1 }}

\begin{problembox}[Proposed by Jiaxin Zhang]
Find at least two solutions to 1977 IMO Problem 2:
In a finite sequence of real numbers the sum of any seven successive terms is negative and the sum of any eleven successive terms is positive. Determine the maximum number of terms in the sequence.    
\end{problembox}

\begin{GPT-5-thinking-response}
[IMO 1977/2]
In a finite sequence of real numbers the sum of any seven successive terms is negative
and the sum of any eleven successive terms is positive. Determine the maximum number of terms.

The maximum possible length is \(\boxed{16}\).

\begin{proof}[Upper bound I: double-counting of sliding blocks]
Let \(a_1,\dots,a_n\) be the sequence. For \(L\in\{7,11\}\) define
\[
S_k^{(L)} \;=\; \sum_{i=0}^{L-1} a_{k+i}\qquad (1\le k\le n-L+1).
\]
The hypotheses say \(S_k^{(7)}<0\) for all \(k\) and \(S_k^{(11)}>0\) for all \(k\).

Assume \(n\ge 17\). Consider the total sums
\[
\Sigma_7:=\sum_{k=1}^{\,n-6} S_k^{(7)},\qquad
\Sigma_{11}:=\sum_{k=1}^{\,n-10} S_k^{(11)}.
\]
Each \(a_i\) appears in \(\Sigma_L\) with multiplicity
\[
m_L(i)=\#\{k:\ 1\le k\le n-L+1,\ k\le i\le k+L-1\}
=\bigl|\,[i-L+1,i]\cap[1,n-L+1]\,\bigr|.
\]
For \(n=17\) and \(L\in\{7,11\}\) one checks
\[
m_7(i)=m_{11}(i)=\min\{\,i,\;18-i,\;7\,\}\qquad (1\le i\le 17).
\]
Hence \(\Sigma_7=\sum_{i=1}^{17} m_7(i)a_i=\sum_{i=1}^{17} m_{11}(i)a_i=\Sigma_{11}\).
But \(\Sigma_7\) is a sum of \(n-6=11\) negative numbers, so \(\Sigma_7<0\), while
\(\Sigma_{11}\) is a sum of \(n-10=7\) positive numbers, so \(\Sigma_{11}>0\), a contradiction.
Therefore \(n\le 16\).
\end{proof}

\begin{proof}[Upper bound II: prefix-sum/minimum argument]
Let \(A_0:=0\) and \(A_k:=\sum_{i=1}^k a_i\) for \(1\le k\le n\).
The conditions are equivalent to
\[
A_{i+7}-A_i<0\quad(0\le i\le n-7),\qquad
A_{i+11}-A_i>0\quad(0\le i\le n-11).
\]
Assume \(n\ge 17\) and choose \(m\in\{0,1,\dots,17\}\) with \(A_m=\min\{A_0,\dots,A_{17}\}\).
If \(m\le 10\), then \(A_{m+7}-A_m<0\) gives \(A_{m+7}<A_m\), contradicting minimality.
If \(m\ge 11\), then \(A_m-A_{m-11}>0\) gives \(A_m>A_{m-11}\), again a contradiction.
Thus \(n\le 16\).
\end{proof}

Sharpness (\(n=16\) is attainable).
Place large negatives at positions \(3,7,10,14\) and a small positive value elsewhere.
Then every block of \(7\) consecutive indices contains exactly \(2\) of these four positions,
and every block of \(11\) consecutive indices contains exactly \(3\) of them; hence if
\[
a_i=\begin{cases}
-p & \text{for } i\in\{3,7,10,14\},\\
q  & \text{otherwise},
\end{cases}
\qquad\text{with } 5q-2p<0\ \text{ and }\ 8q-3p>0,
\]
all \(7\)-sums are negative and all \(11\)-sums are positive.  Choosing, for instance,
\((q,p)=(5,13)\) or \((q,p)=(7,18)\) yields two explicit solutions:
\[
\begin{aligned}
&\text{Solution A:}\quad (5,\,5,\,-13,\,5,\,5,\,5,\,-13,\,5,\,5,\,-13,\,5,\,5,\,5,\,-13,\,5,\,5),\\
&\text{Solution B:}\quad (7,\,7,\,-18,\,7,\,7,\,7,\,-18,\,7,\,7,\,-18,\,7,\,7,\,7,\,-18,\,7,\,7).
\end{aligned}
\]
In A, each \(7\)-sum equals \(5\cdot5-2\cdot13=-1<0\) and each \(11\)-sum equals \(8\cdot5-3\cdot13=1>0\).
In B, each \(7\)-sum equals \(5\cdot7-2\cdot18=-1<0\) and each \(11\)-sum equals \(8\cdot7-3\cdot18=2>0\).
Therefore the maximum length is \(16\).

\end{GPT-5-thinking-response}

\begin{solution}
    Solution 1: Construction for \(n=16\). Define
\[
a_j = 
\begin{cases}
-13,& j\in\{3,7,10,14\},\\
5,& \text{otherwise}.
\end{cases}
\]
Any block of 7 consecutive terms then has exactly two entries equal to \(-13\) and five entries equal to \(5\), so
$
5\cdot5 - 2\cdot13 = -1 < 0.
$
Any block of 11 consecutive terms has exactly three entries equal to \(-13\) and eight entries equal to \(5\), so
$
8\cdot5 - 3\cdot13 = 1 > 0.
$
Thus the sequence meets the requirements.

Non‑existence for \(n\ge17\). One checks (for example by linear‑programming or averaging arguments) that the system
\[
\sum_{j=i}^{i+6}a_j<0,\quad
\sum_{j=i}^{i+10}a_j>0
\]
for \(1\le i\le n-6\) and \(1\le i\le n-10\) has no solution once \(n\ge17\).

Hence the maximum possible length is \(\boxed{16}\).

Solution 2:
   Let $x_1, x_2, \ldots$ be the given sequence and let $s_n=x_1+x_2+\ldots+x_n$. The conditions from the hypothesis can be now written as $s_{n+7}<s_n$ and $s_{n+11}>s_n$ for all $n \geq 1$. We then have:
$0<s_{11}<s_4<s_{15}<s_8<s_1<s_{12}<s_5<s_{16}<s_9<s_2<s_{13}<s_6<s_{17}<s_{10}<s_3<s_{14}<s_7<0$, a contradiction.
Therefore, the sequence cannot have 17 terms. In order to show that 16 is the answer, just take 16 real numbers satisfying $s_{10}<s_3<s_{14}<s_7<0<s_{11}<s_4<s_{15}<s_8<s_1<s_{12}<s_5<s_{16}<s_9<s_2<s_{13}<s_6$. We have $x_1=s_1$ and $x_n=s_n-s_{n-1}$ for $n \geq 2$. Thus we found all sequences with the given properties.

\end{solution}
\begin{rubricbox}
    Got 2/2 if the response includes
    2 correct solutions
\end{rubricbox}
\begin{scorebox}
    2/2
\end{scorebox}
\begin{evalbox}
    Both solutions are correct.
\end{evalbox}
\subsubsection{Lateral thinking}
\ProblemSection{\red{Problem 12c1 }}

\begin{problembox}[Proposed by Jiaxin Zhang]
Move any one digit to make this equation true:
    $62-63=1$    

    \fail
\end{problembox} 
\begin{GPT-5-thinking-response}
Move the 2 to the right and make it a modulus:

$$62 - 63 \equiv 1 (mod 2)$$

Since $62\equiv 0 \pmod 2$ and $63\equiv 1 \pmod 2$, we have $0-1\equiv 1 \pmod 2$. Boom—true by moving just one digit.

\end{GPT-5-thinking-response}
\begin{solution}
By moving $6$ to the exponent of $2$, we obtain that:
\[
2^6 - 63 = 64 - 63 = 1.
\]
\end{solution}

\begin{rubricbox}
1.  The solution makes use of the equation
$$2^6-63=1$$
\end{rubricbox}
\begin{scorebox}
    0/1
\end{scorebox}
\begin{evalbox}
The mod 2 operation cannot be performed by moving the 2 to the right-hand side; doing so would exclude 62 from the left-hand side. 
\end{evalbox}
\newpage
\end{document}